\newtheorem{assumption}{Assumption}
\newtheorem{definition}{Definition}
\newtheorem{proposition}{Proposition}
\newtheorem{lemma}{Lemma}
\newtheorem{theorem}{Theorem}
\newtheorem{remark}{Remark}
\newcommand{\CM}{\mathcal{M}}
\newcommand{\CK}{\mathcal{K}}
\newcommand{\CS}{\mathcal{S}}
\newcommand{\BR}{\mathbb{R}}
\newcommand{\BN}{\mathbb{N}}
\newcommand{\rd}{\mathrm{d}}
\newcommand{\bd}{\mathbf{d}}
\DeclareMathOperator{\grad}{grad}
\DeclarePairedDelimiter\floor{\lfloor}{\rfloor}
\newcommand{\Rexp}{\BR_{\text{exp.}}}
\newcommand{\Ran}{\BR_{\text{an.}}}
\newcommand{\Ranexp}{\BR_{\text{an.,exp.}}}
\newcommand{\Ralg}{\BR_{\text{semialg.}}}
\newcommand{\norm}[1]{\left\lVert#1\right\rVert}
\begin{document}

\title{Tame Riemannian Stochastic Approximation}
\author{Johannes Aspman, Vyacheslav Kungurtsev, Reza Roohi Seraji}

\maketitle

\begin{abstract}
We study the properties of stochastic approximation applied to a tame nondifferentiable function subject to constraints defined by a Riemannian manifold. The objective landscape of tame functions, arising in o-minimal topology extended to a geometric category when generalized to manifolds, exhibits some structure that enables theoretical guarantees of expected function decrease and asymptotic convergence for generic stochastic subgradient descent. Recent work has shown that this class of functions faithfully model the loss landscape of deep neural network training objectives, and the autograd operation used in deep learning packages implements a variant of subgradient descent with the correct properties for convergence. Riemannian optimization uses geometric properties of a constraint set to perform a minimization procedure while enforcing adherence to the the optimization variable lying on a Riemannian manifold. This paper presents the first study of tame optimization on Riemannian manifolds, highlighting the rich geometric structure of the problem and confirming the appropriateness of the canonical ``SGD'' for such a problem with the analysis and numerical reports of a simple Retracted SGD algorithm.
\end{abstract}


\section{Introduction}
Consider an optimization problem,
\begin{equation}\label{eq:prob1}
\min\limits_{x\in\mathcal{M}} F(x):=\mathbb{E}[f(x,\xi)]
\end{equation}
where $\mathcal{M}$ is a Riemannian manifold, and $F$ is continuous but not necessarily continuously differentiable, i.e., it is nonsmooth. We can assume that it is tame, however. In typical machine learning applications, $F$ encodes a loss function, and $x$ is a set of parameters in a model that are algorithmically driven to minimize the loss and thus best fit the data. The data distribution, empirical or otherwise, is defined by the stochastic sample $\xi$. Formal probabilistic details are given below.

The parameters are constrained to lie on a manifold $\mathcal{M}$, which encodes that the parameters are not confined to the usual free Euclidean space, but have some set of symmetries between the components that must be preserved. A popular visually intuitive example is in robotics: a robot often has the choice to rotate, possibly along multiple axes, suggesting the decision must lie in the symmetric rotation group. Other applications of Riemannian optimization for machine learning include
learning parameters of Gaussians mixtures, principal component analysis, and Wasserstein barycenters~\cite{hosseini2020recent}. A number of code packages have been developed to perform Riemannian optimization on machine learning tasks, for instance~\cite{miolane2020geomstats}, as well as the two extensions of the two most popular libraries for training neural networks, \texttt{PyTorch} (see \cite{kochurov2020geoopt}) and \texttt{TensorFlow} (see \cite{smirnov2021tensorflow}).

The canonical algorithm to consider is Retraction-SGD
\begin{equation}\label{eq:retsgd}
x_{k+1}=R_{x_k}(x_k+\alpha_k g_k),\quad g_k\sim \partial F(x_k,\cdot)
\end{equation}
where $g_k$ is meant to approximate an element of the subdifferential of $F$ at $x_k$ by sampling from the stochastic variable (typically a minibatch), i.e., it is a \emph{stochastic gradient}. The notion will be made more precise for the nonsmooth objectives in the sequel. The operator $R_{x}$ from base point $x$ is a \emph{retraction}, which can be understood as a projection onto the manifold $\mathcal{M}$. The step-size $\alpha_k$ can be varying per iteration, typically decreasing, or constant. 

The case of~\eqref{eq:prob1}, however with $F(x)$ (and $f$, with respect to $x$) being continuously differentiable is considered in~\cite{durmus2020convergence,durmus2021riemannian,shah2021stochastic}. See also ~\cite{iiduka2022riemannian} on stochastic fixed point iterations, which can apply to nonsmooth convex functions. In this context, we allow $\partial F$ to correspond to the Clarke subdifferential, the output of an automatic differentiation of a composite function or a differential path on a conservative vector field. These notions will be made precise, integrating the geometric and topological functionality of Riemannian and tame geometry.

For the Euclidean case, diminishing stepsize SGD has been studied by, e.g., Benaim and others~\cite{benaim2005stochastic,majewski2018analysis}. For a constant stepsize analysis for nonsmooth problems, see~\cite{bianchi2022convergence}.

For numerical examples of~\eqref{eq:prob1} with tame $F$, see~\cite{kungurtsev2022retraction}, which is a paper that considers the case wherein one do not have access to subgradient estimates but only function values, and so must employ a zero order method. 

The incorporation of the machinery for understanding tame functions to the case of general Riemannian manifolds presents an interesting and insightful geometric exercise. For a plurality of definitions and partial statements, there are sufficiently simple correspondences between the Euclidean and Riemannian formulations so as to facilitate more or less straightforward translation of existing results to the Riemannian case. However, there are a number of subtle but critical departure points for which care and technical nuance is necessary to adapt the machinery appropriately. 

We shall consider both the diminishing and constant step size case. The distinction is more than superficially algorithmic, rather the appropriate notion of convergence is distinct. In the first case, by directly applying the ``ODE'' approach of stochastic approximation, wherein the intended program is to show that a limiting sequence of the algorithm approaches a curve of some trajectory that satisfies a differential inclusion, whose equilibrium points are stationary with respect to~\eqref{eq:prob1}. In the second, one can observe that a constant stepsize implies that the random process is Markovian, permitting an analysis that treats the algorithm as a stochastic process whose long run behavior is intended to be characterized by ergodicity concentrated around a set of stationary points (along the lines of, for example~\cite{bianchi2022convergence} for the Euclidean case).

\subsection{Contributions}

\paragraph{Structure of the Theoretical Background} 
When working with non-smooth functions we of course need some generalised notion of derivatives. In this paper, we work with the notion of a conservative field, which was developed for the Euclidean setting in \cite{bolte2021conservative}, lifted to the Riemannian setting. The relevant results on this are presented in Sec. \ref{s:consriem}. In \cite{bolte2021conservative}, the main class of non-smooth functions considered was functions definable in some o-minimal structure on $\BR$. When we instead work on a Riemannian manifold, the analogous notion is that of analytic-geometric categories. As these are much less known in the optimization literature, for completeness, we present the relevant results in \ref{s:anageomcat}. We will then exclusively work in this setting and refer to functions definable in some analytic-geometric category as being \emph{tame}. 

Working in the Riemannian setting brings about a number of new challenges compared to the previous literature \cite{davis2020stochastic, bolte2021conservative, bianchi2022convergence}. The main reason for this is of course that $\CM$ is in general not a vector space, and the tangent spaces are different from the underlying manifold. This means that many of the definitions and proofs are not directly lifted and some extra subtleties arise compared to the Euclidean case, especially in relation to the flow of solutions to the differential inclusions considered in for example \cite{bianchi2022convergence}. These subtleties also arise in the analyses of the convergence properties of the stochastic approximation algorithm on a Riemannian manifold, both for the diminishing and constant stepsize settings.

\paragraph{Structure of the Diminishing Stepsize Convergence Theory} 

Our main result regarding the diminishing stepsize setting is Theorem \ref{th:convergence}. In order to prove the limiting behavior of the algorithm towards critical points of $f$, we first need to prove a series of results regarding asymptotic pseudo trajectories, following the outline of \cite{benaim2005stochastic}, see Theorems \ref{th:APT}-\ref{th:ict}. The main difference in these proofs compared to those of \cite{benaim2005stochastic} is the fact that we are generally not in a vector space anymore, and we need to use the properties of the parallel transport in order to compare different trajectories. Together with some additional Lemmas, we can then proceed by proving Theorem \ref{th:3.2} which then implies the main theorem \ref{th:convergence}. The arguments require a few corrections throughout to account for the curvature of the underlying manifold. 

\paragraph{Structure of the Constant Stepsize Convergence Theory} 
For the case of constant stepsize, our main result is Theorem \ref{th:convCSS}. In order to prove this, we first need to prove some properties of the SGD sequence on a manifold, Theorem \ref{th:tight}. After this we prove that the interpolated process defined by the SGD sequence converges in probability to the set of solutions to the underlying differential inclusion, see Theorem \ref{th:weakconv}. The vector space structure in the Euclidean case - wherein distances are simply the magnitude of positive/negative sums of vectors, is unavailable in the Riemannian case. A more constructive argument considering geodesic paths between carefully chosen virtual iterates must be developed.

\paragraph{Observations of the Numerical Results} 
We present the results of a series of computational experiments testing variants of Retracted SGD on several standard problems arising in Machine Learning. We demonstrate the successful asymptotic convergence of the method. In addition, we demonstrate the ergodicity by displaying the empirical distribution of randomized trial runs on the constraint manifold.  

\paragraph{Paper outline} The rest of the paper proceeds as follows. Section~\ref{s:back} reviews important notions of Riemannian geometry and measure-based probability theory on a Riemannian manifold. Section~\ref{s:consriem} presents the appropriate notions for well-definedness of tame functions on a manifold. Section~\ref{s:sa} summarizes our convergence results. We finish with some numerical illustrations. Technical proofs for some of the results appear in the appendix.

\section{Background on Riemannian Geometry and Probability Measures on a Manifold}\label{s:back}
We recall some background results on Riemannian geometry. Throughout, we let $\CM$ denote a smooth $m$-dimensional Riemannian manifold. The tangent space at a point $x\in\CM$ is denoted $T_x\CM$ and the tangent bundle of $\CM$ by $T\CM$. Similarly, the cotangent space at $x$ and the cotangent bundle are denoted $T^*_x\CM$ and $T^*\CM$, respectively. The metric on $\CM$, $g(\cdot,\cdot)$, or $g_x(\cdot,\cdot)$ when evaluated at a point $x\in\CM$, induces a norm $\norm{\cdot}_{g_x}\coloneqq \sqrt{g_x(\cdot,\cdot)}$. We will often drop the subscript $g_x$ in the notation when the metric is generic and the given point is obvious from the situation. For $v\in T_x\CM$ and $w\in T_x^*\CM$ we have the scalar product $\langle w, v\rangle$. The length of a piecewise smooth curve $\gamma\,:\,[a,b]\to\CM$ is defined as
\begin{equation}
    L(\gamma)=\int_a^b\norm{\dot\gamma(t)}_{g_{\gamma(t)}}dt.
\end{equation}
For two points $x,y\in\CM$, we denote the Riemannian distance from $x$ to $y$ by $\bd(x,y)$, 
\begin{equation}
    \bd(x,y)\coloneqq \inf \{L(\gamma)\,:\,\gamma\in\mathcal{A}_\infty,\,\gamma(a)=x,\,\gamma(b)=y\},
\end{equation}
where $\mathcal{A}_\infty$ denotes the set of all piecewise smooth curves. The Riemannian distance defines a metric space structure on $\CM$. 

We write $B(x_0,r)$ (resp. $\overline{B(x_0,r)}$) for the open (closed) ball of radius $r>0$ centered at $x_0\in\CM$, i.e. $B(x_0,r)=\{x\in\CM\,:\, \bd(x_0,x)<r\}$.



For a smooth curve $\gamma\,:\,I\to \CM $, we denote the parallel transport along $\gamma$ from $\gamma(a)$ to $\gamma(b)$, $a,b\in I$, as $P_{\gamma(a)\gamma(b)}^{\gamma}$. It is defined by
\begin{equation}
    P_{\gamma(a)\gamma(b)}^{\gamma}(v)\coloneqq V(\gamma(b)),\qquad \text{for every }v\in T_{\gamma(a)}\CM,
\end{equation}
where $V$ is the unique parallel vector field along $\gamma$ with $V(\gamma(a))=v$. When $\gamma$ is the unique minimizing geodesic between $x$ and $y$ we simply write $P_{xy}$.

The exponential map $\exp_x\,:\,T_x\CM\to\CM$ projects a vector from the tangent space to the manifold along a geodesic. 

Throughout, we denote by $\mathcal{B}(\CM)$ the Borel $\sigma$-algebra on $\CM$. Let $\mathcal{L}(\CM)$ be the Lebesgue $\sigma$-algebra on $\CM$. A subset $A\subset \CM$ is in $\mathcal{L}(\CM)$ if, for any chart $(U,\varphi)$, $\varphi(A\cap U)$ is a Lebesgue-measurable subset of $\BR^m$. Note that $\mathcal{L}(\CM)\supseteq \mathcal{B}(\CM)$. For any set $A\subset U$, with $A\in\mathcal{L}(\CM)$, we have a unique measure defined by
\begin{equation}
    \lambda(A)=\int_{\varphi(A)}\sqrt{g}\rd \lambda_L,
\end{equation}
where $g=\det g_{ij}$ is the determinant of the metric in local coordinates and $\lambda_L$ is the Lebesgue measure on $\BR^m$. Since this induces a volume element for each tangent space, we also get a measure on the whole manifold $\CM$, which we denote $\lambda\coloneqq \lambda(\CM)$. We can then define a probability space $(\Omega, \mathcal{B},\mu)$ on $\CM$. 

The set of all probability measures on $\mathcal{B}(\CM)$ is denoted $\mathcal{P}(\CM)$, and for a subset $A\subset \CM$ we write
\begin{equation}
    \mathcal{P}_{\text{abs.}}(A)=\{\nu\in\mathcal{P}(\CM)\,:\,\nu\ll \lambda(A) \text{ and } \text{supp}(\nu)\subset A\},
\end{equation}
where, as usual, $\nu\ll\lambda$ denotes that $\nu$ is absolutely continuous with respect to $\lambda$. Finally, we write $\mathcal{P}_1(\CM)\coloneqq \{\nu\in\mathcal{P}(\CM)\,:\,\int \bd(y,z)^2\nu(\rd z)<\infty\}$, where $y\in\CM$ is a fixed point.


A random primitive on $\CM$ is a Borelian function $X$ from $\Omega$ to $\CM$, with probability density function, $p_X$ defined by
\begin{equation}
    \begin{aligned}
        &\mu(X\in \mathcal{X})=\int_{\mathcal{X}}p_X(y)\rd\lambda(y),\\
        &\mu(\CM)=\int_{\CM}p_X(y)\rd\lambda(y)=1,
    \end{aligned}
\end{equation}
for all $\mathcal{X}$ in the Borelian tribe of $\CM$. There is some subtlety regarding the choice of metric to use when defining the pdf on a manifold, for a discussion on this we refer to \cite{pennec2004probabilities}. For a Borelian real valued function $\phi(x)$ on $\CM$ we calculate the expectation value by
\begin{equation}
    \mathbb{E}[\phi(X)]=\int_{\CM}\phi(y)p_X(y)\rd\lambda(y).
\end{equation}
We further define the variance of a random primitive $X$ as
\begin{equation}
    \sigma_X^2(y)=\int_{\CM} \bd(x,y)^2p_X(z)\rd\lambda(z),
\end{equation}
where $y$ is now a fixed primitive.

We will furthermore make the following assumption on the geometry throughout. 
\begin{assumption}[Geodesic completeness]\label{assum_geodesicComplete}
$\CM$ is a connected geodesically complete Riemannian manifold. This makes the exponential map well-defined over the tangent bundle $T\CM$. 
\end{assumption}

We will study retractions on $\CM$, which we define as follows.
\begin{definition}[Retraction, Def. 2 in \cite{shah2021stochastic}]
    A retraction on $\CM$ is a smooth mapping $\mathcal{R}\,:\,T\CM\to\CM$ such that, for $x\in\CM$,
    \begin{enumerate}
        \item $\mathcal{R}_x(0_x)=x$, where $\mathcal{R}_x$ is the restriction of the retraction to $T_x\CM$ and $0_x$ denotes the zero element of $T_x\CM$;
        \item with the canonical identification $T_{0_x}T_x\CM\cong T_x\CM$, $\mathcal{R}_x$ satisfies
        \begin{equation}
            D\mathcal{R}_x(0_x)=\text{Id}_{T_x\CM},
        \end{equation}
        where $\text{Id}_{T_x\CM}$ denotes the identity operator on $T_x\CM$. 
    \end{enumerate}
\end{definition}
With this definition in mind we are interested in studying the process defined by
\begin{equation}\label{eq:retraction_process}
    x_{k+1}=\mathcal{R}_{x_k}(-\alpha_k(g_k(x_k,\xi_{k+1}))),
\end{equation}
for some $g_k\in T_{x_k}\CM$ that will be specified later on. We will make the assumption throughout that $g_k$ has zero mean and finite variance. A classical example of a retraction is the exponential map.

\section{Conservative set valued fields on a Riemannian manifold}\label{s:consriem}
Bolte and Pauwels, \cite{bolte2021conservative}, introduced the important concept of a conservative set-valued field. For completeness we list the relevant definitions and properties of these fields, lifted to the Riemannian setting.

\subsection{Absolutely continuous curves and conservative fields}


An important notion in the following will be that of an absolutely continuous curve. We follow \cite{Burtscher2012LengthSO} and start by defining an absolutely continuous function on Euclidean space. To this end, let $I\subset \BR$ be a closed interval. We call a function $f: I\to\BR$ absolutely continuous (on $I$) if for all $\varepsilon>0$ there exists a $\delta>0$ such that for any $n\in\BN$ and any selection of disjoint intervals $\{(a_i,b_i)\}_{i=1}^{n}$ with $[a_i,b_i]\subseteq I$, whose overall length is $\sum_{i=1}^n|b_i-a_i|<\delta$, $f$ satisfies 
\begin{equation}
    \sum_{i=1}^n|f(b_i)-f(a_i)|<\varepsilon.
\end{equation}
We furthermore call a function $f\,:\, I\to \BR^n$ \emph{locally absolutely continuous} if it is absolutely continuous on all closed subintervals $[a,b]\subseteq I$. 

Now consider the case of a Riemannian manifold $\CM$. We call a continuous map $\gamma: I\to\CM$ an absolutely continuous curve if, for all $\varepsilon>0$ there exists a $\delta>0$ such that for any $n\in\BN$ and any selection of disjoint intervals $\{(a_i,b_i)\}_{i=1}^{n}$ with $[a_i,b_i]\subseteq I$, whose overall length is $\sum_{i=1}^n|b_i-a_i|<\delta$, $\gamma$ satisfies
\[
\sum_{i=1}^n \bd(\gamma(b_i),\gamma(a_i))<\epsilon
\]
We can see that there is a natural correspondence to the Euclidean setting when applying local diffeomorphisms. That is, for any chart $(U,\varphi)$ of $\CM$, the composition
\[
\varphi\circ \gamma :\gamma^{-1}(\gamma(I)\cap U) \to \varphi(U)\subseteq \BR^m,   
\]
is locally absolutely continuous. Absolutely continuous curves admit a derivative, $\dot\gamma(t)\in T_{\gamma(t)}\CM$, a.e., and the (arc)length, $l(\gamma)$, is well-defined:
\[
l(\gamma):= \int_I \|\dot{\gamma}(t)\|\rd t
\]

We are now ready to lift the relevant notions from \cite{bolte2021conservative} to the Riemannian setting. The good news is that everything generalizes more or less straightforwardly, as was already pointed out in a footnote of \cite{bolte2021conservative}. 

First of all, we have the following lemma, whose proof goes through without any modifications.

\begin{lemma}[Lemma 1 of \cite{bolte2021conservative}]\label{lemma1BP}
    Let $D: \CM\rightrightarrows T^*\CM$ be a set-valued map with nonempty compact values and closed graph. Let $\gamma :[0,1]\to\CM$ be an absolutely continuous curve. Then
    \begin{equation}
        t\mapsto \max_{v\in D(\gamma(t))} \langle v,\dot\gamma(t)\rangle,
    \end{equation}
    defined almost everywhere on $[0,1]$, is measurable. 
\end{lemma}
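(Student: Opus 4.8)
The plan is to reduce the statement to its known Euclidean analogue (Lemma 1 of \cite{bolte2021conservative}) by working chart-by-chart, exploiting that the Riemannian data $(D,\gamma)$ pulls back to exactly the kind of data that lemma requires. First I would fix a point $t_0\in[0,1]$ and a chart $(U,\varphi)$ around $\gamma(t_0)$; by continuity of $\gamma$ the preimage $\gamma^{-1}(U)$ is open, and it suffices to prove measurability of $t\mapsto \max_{v\in D(\gamma(t))}\langle v,\dot\gamma(t)\rangle$ on each such relatively open interval, since $[0,1]$ is covered by countably many of them and measurability is local. On $\gamma^{-1}(U)$ the composition $\varphi\circ\gamma$ is locally absolutely continuous into $\BR^m$ (as recalled in the excerpt), so it has a derivative a.e.; and in the chart the cotangent vectors $v\in T^*_{\gamma(t)}\CM$ become elements of $(\BR^m)^*$ via the coframe $d\varphi$, with the pairing $\langle v,\dot\gamma(t)\rangle$ transforming to the ordinary Euclidean dual pairing $\langle \tilde v(t),\widetilde{\dot\gamma}(t)\rangle$ where $\tilde v$ is the coordinate representation and $\widetilde{\dot\gamma}=(\varphi\circ\gamma)'$. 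The key point is that this transformation is basis-dependent but pointwise linear and smooth in the base point, so the set-valued map $t\mapsto \tilde D(t):=\{\tilde v(t): v\in D(\gamma(t))\}$ still has nonempty compact values and closed graph (precompose/compose $D$ with the smooth bundle chart $T^*U\cong \varphi(U)\times(\BR^m)^*$ and with the continuous $\gamma$).

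Second, with everything transported into $\varphi(U)\subseteq\BR^m$, I would invoke the Euclidean Lemma 1 of \cite{bolte2021conservative} directly: it asserts exactly that for a set-valued map with nonempty compact values and closed graph along an absolutely continuous curve, the function $t\mapsto \max_{w\in \tilde D(t)}\langle w,(\varphi\circ\gamma)'(t)\rangle$ is measurable a.e. Since the coordinate pairing equals the intrinsic pairing $\langle v,\dot\gamma(t)\rangle$ wherever $\dot\gamma(t)$ exists (which is a.e.), measurability transfers back verbatim. The only care needed is that ``absolutely continuous curve into $\CM$'' genuinely restricts to ``locally absolutely continuous into $\BR^m$ in charts,'' which is precisely the correspondence stated in the excerpt, so no new argument is required there.

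The main obstacle — or rather the main point demanding care — is verifying that the pushed-forward field $t\mapsto \tilde D(t)$ retains the closed-graph and compact-values hypotheses after the coordinate change. Compactness is easy: $v\mapsto \tilde v$ at a fixed point is a linear isomorphism, hence a homeomorphism, so compact values stay compact. Closedness of the graph is the delicate part: one must check that if $t_n\to t$ and $\tilde v_n\in\tilde D(t_n)$ with $\tilde v_n\to \tilde v$, then $\tilde v\in\tilde D(t)$; this follows because the map $(t,\tilde v)\mapsto (\gamma(t),\text{(the cotangent vector with coordinates }\tilde v\text{ at }\gamma(t)))$ is continuous (composition of continuous $\gamma$ with the smooth local trivialization of $T^*\CM$), so it pulls the closed graph of $D$ back to a closed set. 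Since this is essentially the observation already flagged in the footnote of \cite{bolte2021conservative}, I would state it as the one substantive line of the proof and otherwise let the Euclidean lemma do the work.

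\begin{proof}[Proof sketch]
Measurability is a local property on $[0,1]$, and $\gamma$ is continuous, so it suffices to fix a chart $(U,\varphi)$ of $\CM$ and prove the claim on the open set $\gamma^{-1}(U)$, as countably many such sets cover $[0,1]$. On $\gamma^{-1}(U)$ the curve $\varphi\circ\gamma$ is locally absolutely continuous into $\BR^m$, hence differentiable a.e., and the smooth local trivialization $T^*U\cong\varphi(U)\times(\BR^m)^*$ identifies each $v\in T^*_{\gamma(t)}\CM$ with a coordinate vector $\tilde v(t)$ depending continuously on $(t,v)$; moreover $\langle v,\dot\gamma(t)\rangle=\langle \tilde v(t),(\varphi\circ\gamma)'(t)\rangle$ wherever $\dot\gamma(t)$ exists. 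Define $\tilde D(t):=\{\tilde v(t):v\in D(\gamma(t))\}$. Since $v\mapsto\tilde v(t)$ is a linear isomorphism, $\tilde D$ has nonempty compact values; and since $(t,\tilde v)\mapsto(\gamma(t),\cdot)$ is continuous, the graph of $\tilde D$ is the preimage of the closed graph of $D$, hence closed. Applying Lemma~1 of \cite{bolte2021conservative} in $\BR^m$ to $\tilde D$ and the absolutely continuous curve $\varphi\circ\gamma$ shows $t\mapsto\max_{\tilde v\in\tilde D(t)}\langle\tilde v,(\varphi\circ\gamma)'(t)\rangle$ is measurable a.e., and by the pairing identity this is $t\mapsto\max_{v\in D(\gamma(t))}\langle v,\dot\gamma(t)\rangle$, which is therefore measurable a.e.\ on $\gamma^{-1}(U)$.
\end{proof}
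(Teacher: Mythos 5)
Your argument is correct, but it is worth noting how it relates to what the paper actually does: the paper offers no proof at all for this lemma, simply declaring that the proof of Lemma~1 in \cite{bolte2021conservative} ``goes through without any modifications,'' i.e.\ the intended route is to re-run the Euclidean argument intrinsically (the graph of $t\mapsto D(\gamma(t))$ is closed since it is the preimage of the closed graph of $D$ under the continuous map $(t,v)\mapsto(\gamma(t),v)$; one then takes a Castaing family of measurable selections and writes the maximum as a countable supremum of measurable functions, using that $\dot\gamma$ is measurable). Your proposal instead reduces the Riemannian \emph{statement} to the Euclidean \emph{statement} by localizing in charts, and you correctly identify and verify the only substantive points of that reduction: local absolute continuity of $\varphi\circ\gamma$, preservation of nonempty compact values under the fiberwise linear isomorphism, closedness of the transferred graph via continuity of the trivialization composed with $\gamma$, and invariance of the canonical pairing $\langle v,\dot\gamma(t)\rangle=\langle\tilde v,(\varphi\circ\gamma)'(t)\rangle$ a.e. The trade-off is that the intrinsic route works globally in one step and avoids coordinates, while your route uses the Euclidean lemma as a black box at the cost of a localization; the one small mismatch to flag is that the Euclidean lemma is stated for a field with closed graph over $\BR^m$, whereas after the chart transfer closedness holds only relative to $\gamma^{-1}(U)\times(\BR^m)^*$ — this is harmless provided you restrict to compact subintervals of $\gamma^{-1}(U)$ (whose images are compact in $U$), or observe that the Euclidean proof only uses closedness of the graph of the composed map $t\mapsto\tilde D(t)$, which your continuity argument already gives. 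With that caveat made explicit, your proof is a legitimate and fully detailed substitute for the paper's one-line citation.
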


Two central objects we will be concerned with are the conservative set-valued maps and their potential functions. 
\begin{definition}[Conservative set-valued field, cf. Def. 1 of \cite{bolte2021conservative}]
    Let $D: \CM\rightrightarrows T^*\CM$ be a set-valued map. We call $D$ a \emph{conservative field} whenever it has a closed graph, nonempty compact values and for any absolutely continuous loop $\gamma: [0,1]\to \CM$, we have 
    \begin{equation}
        \int_0^1 \max_{v\in D(\gamma(t))}\langle \dot\gamma(t),v\rangle \rd t=0.
    \end{equation}
    Equivalently, we could use the minimum in the definition.
\end{definition}

\begin{definition}[Potential functions of conservative fields, cf. Def. 2 of \cite{bolte2021conservative}]
    Let $D: \CM\rightrightarrows T^*\CM$ be a conservative field. A function $f: \CM\to\BR$ defined through any of the equivalent forms
    \begin{equation}
        \begin{aligned}
            f(y)=& f(x)+\int_0^1\max_{v\in D(\gamma(t))}\langle \dot\gamma(t),v\rangle \rd t \\
           =& f(x)+\int_0^1\min_{v\in D(\gamma(t))}\langle \dot\gamma(t),v\rangle \rd t \\
           =& f(x)+\int_0^1\langle \dot\gamma(t),D(\gamma(t))\rangle\rd t,
        \end{aligned}
    \end{equation}
    for any absolutely continuous $\gamma$ with $\gamma(0)=x$ and $\gamma(1)=y$ is called a \emph{potential function for $D$}. It is well-defined and unique up to a constant. We will sometimes also say that $D$ is a conservative field for $f$. 
\end{definition}

We further say that $f$ is \emph{path differentiable} if $f$ is the potential of some conservative field $D$ and that $x$ is a $D$-critical point for $f$ if there exists $v \in D(x)$ with $\langle v,w\rangle=0$ for all $w\in T\mathcal{M}$

Recall that a function $f$ is Lipschitz continuous with constant $K$ on a given subset $S$ of $\CM$ if $|f(x)-f(y)|\leq K\bd(x,y)$, for every $x,y\in S$. We say that $f$ is Lipschitz at $x\in\CM$ if for all $y\in S(x)$, an open neighborhood of $x$, $f$ satisfies the Lipschitz condition for some $K$. Finally $f$ is called locally Lipschitz on $\CM$ if it is Lipschitz continuous at all $x\in\CM$. Note that a potential function of a conservative field, as described above, is locally Lipschitz.

One of the important properties of the conservative fields is that they come equipped with a chain rule. 
\begin{lemma}[Chain rule, cf. Lemma 2 of \cite{bolte2021conservative}]\label{th:chain}
    Let $D: \CM\rightrightarrows T^*\CM$ be a locally bounded, graph closed set-valued map and $f: \CM\to\BR$ a locally Lipschitz continuous function. Then $D$ is a conservative field for $f$ if and only if, for any absolutely continuous curve $\gamma: [0,1]\to \CM$, the function $t\mapsto f(\gamma(t))$ satisfies
    \begin{equation}\label{eq:chainrule}
        \frac{\rd}{\rd t}f(\gamma(t))=\langle v,\dot\gamma(t)\rangle,\qquad \forall v\in D_f(\gamma(t)),
    \end{equation}
    for almost all $t\in [0,1]$. 
\end{lemma}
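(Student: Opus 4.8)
The plan is to transcribe the proof of Lemma~2 of \cite{bolte2021conservative} into the Riemannian setting, the point being that every ingredient it uses is already available here: for an absolutely continuous $\gamma:[0,1]\to\CM$ the derivative $\dot\gamma(t)\in T_{\gamma(t)}\CM$ exists a.e.\ with $t\mapsto\norm{\dot\gamma(t)}$ integrable; the pairing $\langle v,\dot\gamma(t)\rangle$ between $T^*_{\gamma(t)}\CM$ and $T_{\gamma(t)}\CM$ is coordinate-free, so no metric distortion enters; and $t\mapsto\max_{v\in D(\gamma(t))}\langle v,\dot\gamma(t)\rangle$ (and, applying the same to $-D$, the analogous $\min$) is measurable by Lemma~\ref{lemma1BP}. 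I will also use two elementary reductions. First, $f\circ\gamma$ is absolutely continuous on $[0,1]$: the image $\gamma([0,1])$ is compact, $f$ is $L$-Lipschitz on a neighborhood of it, and $\sum_i|f(\gamma(b_i))-f(\gamma(a_i))|\le L\sum_i\bd(\gamma(b_i),\gamma(a_i))$, so the fundamental theorem of calculus applies to $f\circ\gamma$. Second, affine reparametrizations and restrictions of absolutely continuous curves are absolutely continuous, and for $[s,t]\subseteq[0,1]$ the reparametrization $u\mapsto\gamma(s+u(t-s))$ turns the potential-function identity into $f(\gamma(t))-f(\gamma(s))=\int_s^t\max_{v\in D(\gamma(\tau))}\langle v,\dot\gamma(\tau)\rangle\,\rd\tau$, and likewise with $\min$.

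($\Rightarrow$) Suppose $D$ is a conservative field for $f$. Fix an absolutely continuous $\gamma:[0,1]\to\CM$. Subtracting the two reparametrized identities above over $[0,1]$ gives $\int_0^1\big(\max_{v\in D(\gamma(\tau))}-\min_{v\in D(\gamma(\tau))}\big)\langle v,\dot\gamma(\tau)\rangle\,\rd\tau=0$; the integrand is nonnegative and measurable, hence vanishes a.e., so $\langle v,\dot\gamma(\tau)\rangle$ is independent of $v\in D(\gamma(\tau))$ for a.e.\ $\tau$. Write $h(\tau)$ for this common value. Then $h$ is measurable with $|h(\tau)|\le C\norm{\dot\gamma(\tau)}$ for a local bound $C$ on $D$ along the compact image of $\gamma$, hence $h\in L^1([0,1])$ and $f(\gamma(t))=f(\gamma(0))+\int_0^t h(\tau)\,\rd\tau$. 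The Lebesgue differentiation theorem then yields $\frac{\rd}{\rd t}f(\gamma(t))=h(t)=\langle v,\dot\gamma(t)\rangle$ for all $v\in D(\gamma(t))$ and a.e.\ $t$, which is \eqref{eq:chainrule}.

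($\Leftarrow$) Suppose \eqref{eq:chainrule} holds along every absolutely continuous curve. For $\gamma:[0,1]\to\CM$ absolutely continuous with $\gamma(0)=x$, $\gamma(1)=y$, compactness of $D(\gamma(t))$ lets the max and the min in \eqref{eq:chainrule} be attained, so $\max_{v\in D(\gamma(t))}\langle v,\dot\gamma(t)\rangle=\min_{v\in D(\gamma(t))}\langle v,\dot\gamma(t)\rangle=\frac{\rd}{\rd t}f(\gamma(t))$ for a.e.\ $t$, and the set $\langle\dot\gamma(t),D(\gamma(t))\rangle$ is a singleton a.e. Integrating over $[0,1]$ and using that $f\circ\gamma$ is absolutely continuous, each of the three forms integrates to $f(y)-f(x)$; in particular, when $\gamma$ is a loop the integral is $0$, so $D$ is a conservative field, and the identities say exactly that $f$ is a potential function for $D$, i.e.\ $D$ is a conservative field for $f$.

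The step needing the most care is the extraction of ``$\max=\min$ a.e.'' and of the a.e.\ derivative of $f\circ\gamma$ in the forward direction: it is where the measurability statement of Lemma~\ref{lemma1BP} is essential (to pass from a vanishing integral to an a.e.\ vanishing integrand and to legitimately invoke Lebesgue differentiation) and where local boundedness of $D$ secures integrability. Everything else is a faithful copy of the Euclidean argument, the only genuinely Riemannian input being that absolutely continuous curves carry an a.e.-defined, integrable-speed velocity and that $f\circ\gamma$ inherits absolute continuity, both recorded at the outset.
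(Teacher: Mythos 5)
Your proposal is correct and is exactly the route the paper intends: the paper omits this proof, stating only that it is ``trivially lifted'' from Lemma~2 of \cite{bolte2021conservative}, and your argument is precisely that lift, with the right Riemannian ingredients spelled out (a.e.\ velocity of absolutely continuous curves, the metric-free pairing, measurability via Lemma~\ref{lemma1BP}, and absolute continuity of $f\circ\gamma$). No gaps worth flagging beyond the implicit nonemptiness of the values of $D$, which is inherited from the definition of a conservative field.
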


Finally, we will need the following theorem, which we prove in Appendix \ref{app:proof_prel}.
\begin{theorem}[Cf. Theorem 1 in \cite{bolte2021conservative}]\label{th:gradEverywhere}
Consider a conservative field $D: \CM\rightrightarrows T^*\CM$ for the potential $f:\CM\to\BR$. Then $D=\{\rd f\}$ almost everywhere. 
\end{theorem}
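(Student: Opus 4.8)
The plan is to transpose the Euclidean argument of Bolte--Pauwels (Theorem 1 of \cite{bolte2021conservative}) to the manifold, doing all of the actual work inside coordinate charts. Since $\CM$, being a connected Riemannian (hence metrizable) manifold, is second countable, it is covered by countably many charts, and a countable union of $\lambda$-null sets is $\lambda$-null; moreover in any chart $(U,\varphi)$ one has $\mathrm d\lambda=\sqrt{g}\,\mathrm d\lambda_L$, so $\lambda$-null and Lebesgue-null subsets of $\varphi(U)$ coincide. Hence it suffices to fix one chart $(U,\varphi)$, $\varphi:U\to V\subseteq\BR^m$, and show $D=\{\mathrm df\}$ Lebesgue-a.e.\ on $U$. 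As already observed, the potential $f$ is locally Lipschitz, so $f\circ\varphi^{-1}$ is locally Lipschitz on $V$, and Rademacher's theorem gives a full-measure set $R\subseteq U$ of points at which $f$ is differentiable, i.e.\ at which $\mathrm df$ is classically defined.

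Next I would extract directional rigidity of $D$ along coordinate lines. Fix $i\in\{1,\dots,m\}$, let $\partial_i$ be the $i$-th coordinate vector field on $U$, and for $y$ with $y_i=0$ consider the curve $\gamma_y(t)=\varphi^{-1}(y+te_i)$, which is smooth, hence absolutely continuous, with $\dot\gamma_y(t)=\partial_i|_{\gamma_y(t)}$. Because $f$ is the potential of $D$, for every $s$ we have $f(\gamma_y(s))-f(\gamma_y(0))=\int_0^s\max_{v\in D(\gamma_y(t))}\langle\dot\gamma_y(t),v\rangle\,\mathrm dt=\int_0^s\min_{v\in D(\gamma_y(t))}\langle\dot\gamma_y(t),v\rangle\,\mathrm dt$; since the integrands satisfy $\max\ge\min$ pointwise while the two integrals agree for all $s$, we conclude that for a.e.\ $t$ every $v\in D(\gamma_y(t))$ has the same pairing $\langle v,\partial_i|_{\gamma_y(t)}\rangle$ (equivalently, one may invoke the chain rule, Lemma~\ref{th:chain}). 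The measurability needed to make sense of ``for a.e.\ $t$'' is exactly Lemma~\ref{lemma1BP}. Moreover $f\circ\gamma_y$ is absolutely continuous, so it is differentiable a.e.\ with derivative equal a.e.\ to that common pairing; at a.e.\ $t$ with $\gamma_y(t)\in R$ this derivative also equals $\langle\mathrm df(\gamma_y(t)),\partial_i|_{\gamma_y(t)}\rangle$ by the ordinary chain rule. Hence, for a.e.\ $t$ with $\gamma_y(t)\in R$, every $v\in D(\gamma_y(t))$ satisfies $\langle v-\mathrm df(\gamma_y(t)),\partial_i|_{\gamma_y(t)}\rangle=0$.

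Finally I would run a Fubini argument followed by a linear-algebra step. The set $N_i:=\{x\in R:\exists\,v\in D(x),\ \langle v-\mathrm df(x),\partial_i|_x\rangle\neq0\}$ is measurable (again via Lemma~\ref{lemma1BP}, applied to the support functions $x\mapsto\max_{v\in D(x)}\langle v,\partial_i|_x\rangle$ and the corresponding minimum), and by the previous paragraph its slice along every line $t\mapsto y+te_i$ is Lebesgue-null in $\BR$; Fubini, with respect to the splitting $V\subseteq\{z_i=0\}\times\BR$, then forces $N_i$ to be Lebesgue-null, hence $\lambda$-null. Setting $N=\bigcup_{i=1}^mN_i$, for every $x\in(U\setminus N)\cap R$ — a full-measure subset of $U$ — and every $v\in D(x)$ we get $\langle v-\mathrm df(x),\partial_i|_x\rangle=0$ for all $i$; since $\{\partial_i|_x\}_{i=1}^m$ is a basis of $T_x\CM$, this yields $v=\mathrm df(x)$, and as $D(x)\neq\varnothing$ we conclude $D(x)=\{\mathrm df(x)\}$. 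Assembling the countably many charts completes the proof. I expect the main obstacle to be the bookkeeping in the second and third steps: verifying the measurability of the exceptional sets from Lemma~\ref{lemma1BP} and cleanly transferring the ``a.e.\ on a.e.\ coordinate line'' statement to ``a.e.\ on $U$'' via Fubini, together with the minor point of confirming that the chain-rule input (either Lemma~\ref{th:chain} or the self-contained max-equals-min argument above) is available without additional hypotheses on $D$.
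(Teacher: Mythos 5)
Your argument is correct, and its skeleton (Rademacher on the locally Lipschitz potential, the curve-integral identity from the definition of a potential, then Fubini to pass from ``a.e.\ along curves'' to ``a.e.\ on $\CM$'') matches the paper's; but the way you handle the set-valuedness is genuinely different. The paper fixes a single measurable selection $a$ of $D$, shows $a=\rd f$ a.e.\ via Dini derivatives and Fubini, and then upgrades to the full field by invoking a Castaing representation (Corollary 18.15 of Aliprantis--Border: countably many measurable selections $a_k$ with $D(x)=\overline{\{a_k(x)\}}$), intersecting the corresponding full-measure sets. You avoid measurable-selection machinery entirely: the observation that the $\max$ and $\min$ forms of the potential integral coincide forces, for a.e.\ $t$ on each coordinate line, the pairing $v\mapsto\langle v,\partial_i\rangle$ to be constant on $D(\gamma_y(t))$ and equal to $\langle \rd f,\partial_i\rangle$ at points of differentiability, and the basis $\{\partial_i\}$ then pins $D(x)=\{\rd f(x)\}$ on a full-measure set. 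What your route buys is a more elementary and more explicit argument — in particular your chart-by-chart Fubini along coordinate lines is cleaner than the paper's rather loose ``Fubini's theorem then tells us that $\lambda(A)=0$'' for curves through a fixed $(x,v)$; what the paper's route buys is that a single selection-based lemma ($a=\rd f$ a.e.\ for every measurable selection $a$) is reusable and sidesteps your one remaining bookkeeping item. On that item: your citation of Lemma~\ref{lemma1BP} for the measurability of $N_i$ is slightly off target, since that lemma only gives measurability of $t\mapsto\max_{v\in D(\gamma(t))}\langle v,\dot\gamma(t)\rangle$ along a single curve; what you need is (Lebesgue) measurability of $x\mapsto\max_{v\in D(x)}\langle v,\partial_i|_x\rangle$ and the corresponding minimum on the chart, which follows either from upper semicontinuity of the support function when $D$ is locally bounded, or, in general, from the closed graph via a measurable-projection argument — the same flavor of measure-theoretic input the paper hides inside ``fix a measurable selection''. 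With that substitution your proof goes through.
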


A central object in non-smooth analysis is the Clarke subdifferential. We define it for a Riemannian manifold following \cite{Hosseini2011Generalized}.  To this end, we first define the generalized directional derivative.

\begin{definition}[Generalized directional derivative and Clarke subdifferential]
    Let $f: \CM\to\BR$ be a locally Lipschitz function and $(U,\varphi)$ a chart at $x\in\CM$. The \emph{generalized directional derivative} of $f$ at $x$ in the direction $v\in T_x\CM$, denoted $f^\circ(x;v)$, is then defined by
    \begin{equation}
        f^\circ(x;v)\coloneqq \limsup_{y\to x, t\searrow 0}\tfrac{f\circ \varphi^{-1}(\varphi(y)+t\rd \varphi(x)(v))-f\circ\varphi^{-1}(\varphi(y))}{t}.
    \end{equation}

    The Clarke subdifferential of $f$ at $x$, denoted $\partial f(x)$, is furthermore the subset of $T^*_x\CM$ whose support function is $f^\circ(x;\cdot)$. 
\end{definition}

As a corollary of Theorem \ref{th:gradEverywhere}, the Clarke subdifferential gives a minimal convex conservative field:

\begin{theorem}[Cf. Corollary 1 in \cite{bolte2021conservative}]\label{corr:Clarkeae}
    Let $f:\CM\to \BR$ allowing a conservative field $D:\CM\rightrightarrows T^*\CM$. Then $\partial f$ is a conservative field for $f$, and for all $x\in\CM$
    \begin{equation}
        \partial f(x)\subset \text{conv}\,(D(x)).
    \end{equation}
\end{theorem}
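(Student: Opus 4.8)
The plan is to follow the structure of Corollary 1 in \cite{bolte2021conservative}, localizing everything to charts so that the Euclidean facts about the Clarke subdifferential can be used verbatim. First I would record that, $f$ being locally Lipschitz (as the potential of a conservative field), in any chart $(U,\varphi)$ the pushforward $g\coloneqq f\circ\varphi^{-1}$ is locally Lipschitz on $\varphi(U)\subseteq\BR^m$, hence differentiable $\lambda_L$-a.e.\ by Rademacher's theorem, and that, under the identification induced by $\rd\varphi(x)$, the set $\partial f(x)$ is exactly $\partial g(\varphi(x))$, which admits the limiting-gradient representation $\partial g(z)=\mathrm{conv}\{\lim_i\nabla g(z_i):z_i\to z,\ g\text{ differentiable at }z_i\}$. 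In particular $\partial f$ has nonempty, convex, compact values and is locally bounded with closed graph; these are standard properties of the Clarke subdifferential that transfer through charts.

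For the inclusion $\partial f(x)\subset\mathrm{conv}(D(x))$, I would invoke Theorem~\ref{th:gradEverywhere}: there is a $\lambda$-null set $N\subset\CM$ such that for every $y\notin N$, $f$ is differentiable at $y$ and $D(y)=\{\rd f(y)\}$. Working in a chart, $N$ intersects it in a $\lambda_L$-null set (immediate from the local-coordinate formula for $\lambda$, since $\sqrt{g}$ is bounded above and away from $0$ on compact subsets of the chart), and one uses the standard fact that the limiting-gradient formula for $\partial f(x)$ is unchanged if the approximating sequences $y_i\to x$ are further required to avoid the prescribed null set $N$. For such $y_i$ one has $\rd f(y_i)\in D(y_i)$, and if $\rd f(y_i)\to v$ then the closed graph of $D$ forces $v\in D(x)$. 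Hence every generator of $\partial f(x)$ lies in $D(x)$, and passing to convex hulls gives $\partial f(x)\subset\mathrm{conv}(D(x))$.

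It remains to show $\partial f$ is itself conservative for $f$. Having established that $\partial f$ is locally bounded with closed graph, by Lemma~\ref{th:chain} it suffices to verify that for every absolutely continuous $\gamma:[0,1]\to\CM$ one has $\frac{\rd}{\rd t}f(\gamma(t))=\langle v,\dot\gamma(t)\rangle$ for all $v\in\partial f(\gamma(t))$ and a.e.\ $t$. Since $D$ is conservative for $f$, the same lemma gives $\frac{\rd}{\rd t}f(\gamma(t))=\langle w,\dot\gamma(t)\rangle$ for \emph{all} $w\in D(\gamma(t))$ and a.e.\ $t$; thus the affine map $w\mapsto\langle w,\dot\gamma(t)\rangle$ is constant on $D(\gamma(t))$, hence constant with the same value on $\mathrm{conv}(D(\gamma(t)))\supseteq\partial f(\gamma(t))$ by the inclusion just proved. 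This is precisely the chain-rule identity for $\partial f$, so Lemma~\ref{th:chain} yields that $\partial f$ is a conservative field for $f$. (Equivalently, for a loop $\gamma$, $\int_0^1\max_{v\in\partial f(\gamma(t))}\langle\dot\gamma(t),v\rangle\,\rd t=\int_0^1\frac{\rd}{\rd t}f(\gamma(t))\,\rd t=f(\gamma(1))-f(\gamma(0))=0$, using that $f\circ\gamma$ is absolutely continuous.)

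The main obstacle is the chart bookkeeping: one must check that the Euclidean limiting-gradient characterization of the Clarke subdifferential, Rademacher's theorem, and the insensitivity of that characterization to removal of null sets are all applied to $f\circ\varphi^{-1}$ and pulled back consistently via $\rd\varphi$, and that the $\lambda$-null set of Theorem~\ref{th:gradEverywhere} is chart-wise $\lambda_L$-null. No genuinely new analytic difficulty arises beyond this localization, as already anticipated in \cite{bolte2021conservative}.
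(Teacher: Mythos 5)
Your proposal is correct and follows essentially the same route as the paper: restrict to a full-measure set where $D=\{\rd f\}$ (Theorem~\ref{th:gradEverywhere}), use the limiting-gradient representation of the Clarke subdifferential with approximating sequences confined to that set, and invoke the closed graph of $D$ before taking convex hulls. The only differences are presentational: the paper cites the intrinsic Riemannian representation (Lemma 5.5 of \cite{Hosseini2011Generalized}) where you rebuild it through charts, and it leaves the conservativity of $\partial f$ implicit where you spell it out via Lemma~\ref{th:chain} and constancy of $w\mapsto\langle w,\dot\gamma(t)\rangle$ on $\text{conv}(D(\gamma(t)))$.
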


\begin{proof}
    The idea is the same as in the proof of Corollary 1 in \cite{bolte2021conservative}. Fix an $S\subset\CM$, a full measure set such that $ D= \rd f$ on $S$. From Lemma 5.5 of \cite{Hosseini2011Generalized} we have
    \begin{equation}
        \partial f(x)=\overline{\text{co}}\{\lim_{k\to\infty}\rd f(x_k)\,:\,\{x_k\}\subset S,\,x_k\to x\}.
    \end{equation}
    But $D$ has a closed graph and on $S$ we have $D= \rd f$, so the result follows directly. 
\end{proof}

\begin{remark}

An important application of conservative fields is to non-smooth automatic differentiation, as discussed in \cite{bolte2021conservative}. Here, the more standard generalized subdifferentials are not enough to perform the analysis. In fact, we have the following result, relating backpropagation as implemented in standard deep learning automatic differentiation packages, to conservative fields.

\begin{proposition}
  [Cf. Theorem 8 and Corollary 6 of \cite{bolte2021conservative}] Let $f$ be tame and computable as a directed graph of simple function compositions.  Backpropagation on $f$ is a conservative field.
\end{proposition}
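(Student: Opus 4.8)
The plan is to reduce the statement to the Euclidean result of Bolte--Pauwels (Theorem~8 and Corollary~6 of \cite{bolte2021conservative}) by working in charts, and then to glue the chart-wise fields into a single conservative field on $\CM$ via the chain-rule characterization of conservativity, Lemma~\ref{th:chain}. Here ``backpropagation on $f$'' is understood as in the Euclidean setting: at each node the algorithm selects an element of the (conservative) derivative of the corresponding elementary function, and $D_{\mathrm{bp}}(x)\subset T^*_x\CM$ denotes the set of all covectors produced as these selections range over all admissible choices; graph closedness, local boundedness and nonempty compact values of $D_{\mathrm{bp}}$ are inherited node-by-node from the elementary functions exactly as in \cite{bolte2021conservative}.

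First I would fix a chart $(U,\varphi)$ and set $\tilde f\coloneqq f\circ\varphi^{-1}:\varphi(U)\to\BR$. Since $\varphi$ is a diffeomorphism, $\tilde f$ is tame and is computed by the same directed acyclic graph of simple compositions as $f$ (if one prefers, the coordinate maps can be absorbed as extra smooth input nodes). Hence the Euclidean theorem applies: backpropagation on $\tilde f$ produces a set-valued map $\tilde D:\varphi(U)\rightrightarrows\BR^m$ that is conservative for $\tilde f$, so by the Euclidean chain rule $t\mapsto\tilde f(c(t))$ is differentiable a.e.\ with $\frac{\rd}{\rd t}\tilde f(c(t))=\langle\tilde v,\dot c(t)\rangle$ for every $\tilde v\in\tilde D(c(t))$ and every absolutely continuous curve $c$ in $\varphi(U)$. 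I then pull this back to $U$ by $D_U(x)\coloneqq(\rd\varphi(x))^{*}\tilde D(\varphi(x))\subset T^*_x\CM$; smoothness of $\varphi$ keeps the values nonempty and compact, the graph closed, and the field locally bounded, and since $\varphi\circ\gamma$ is locally absolutely continuous with $\frac{\rd}{\rd t}(\varphi\circ\gamma)=\rd\varphi(\dot\gamma)$ a.e., the chain rule transports verbatim to absolutely continuous curves in $U$.

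Next I would verify consistency across charts. On an overlap $U\cap U'$ the transition map is a diffeomorphism, and because backpropagation merely prepends/appends its (single-valued, linear) Jacobian node to the computational graph, the algorithm's output transforms tensorially, i.e.\ $D_U=D_{U'}$ on $U\cap U'$. The local fields therefore glue to a global $D_{\mathrm{bp}}:\CM\rightrightarrows T^*\CM$ with the required regularity. Any absolutely continuous $\gamma:[0,1]\to\CM$ can be split into finitely many subarcs each lying in one chart, so $\frac{\rd}{\rd t}f(\gamma(t))=\langle v,\dot\gamma(t)\rangle$ holds for all $v\in D_{\mathrm{bp}}(\gamma(t))$ and a.e.\ $t$; Lemma~\ref{th:chain} then gives that $D_{\mathrm{bp}}$ is a conservative field for $f$, and by Theorem~\ref{th:gradEverywhere} $D_{\mathrm{bp}}=\{\rd f\}$ almost everywhere, so $f$ is in particular path differentiable.

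The main obstacle is making the two reductions above fully rigorous. First, one must pin down that ``tame and computable as a directed graph of simple compositions'' is chart-independent; this requires fixing an o-minimal structure and an atlas whose transition maps are definable in it (the convention to be set up alongside Section~\ref{s:consriem}), so that composing with charts stays inside the admissible class of building blocks. Second, and more delicate, is the tensorial transformation rule $D_{\mathrm{bp}}(f\circ\psi)=(\rd\psi)^{*}\,D_{\mathrm{bp}}(f)\circ\psi$ as an \emph{exact} equality of set-valued maps for $\psi$ a smooth change of coordinates: this is what makes the glued object well defined rather than merely some field containing a conservative one, and it relies on the composition lemmas of \cite{bolte2021conservative} being equalities (not just inclusions) when the inner and outer maps are smooth. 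Once these are in place the rest is bookkeeping.
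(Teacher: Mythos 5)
Your plan is correct in outline, but it takes a genuinely different route from the paper. The paper does not prove this proposition at all in the Riemannian setting: it is stated as a citation of the Euclidean results of \cite{bolte2021conservative}, and the only justification offered (in the backpropagation paragraph of Appendix~\ref{s:add}) is the observation that in practice the first layer of the computational graph maps the data into Euclidean space and all subsequent nodes are maps between Euclidean spaces, so the Bolte--Pauwels analysis ``goes through without modification.'' In other words, the paper sidesteps the manifold entirely by restricting attention to architectures whose graph already lives in $\BR^n$ past the input. You instead build an intrinsic Riemannian object: chart-wise application of the Euclidean theorem, pullback by $(\rd\varphi)^{*}$, gluing via the tensorial transformation of the backprop output under smooth reparametrization, and then conservativity on $\CM$ via the chain-rule characterization (Lemma~\ref{th:chain}) and the a.e.\ identification with $\{\rd f\}$ (Theorem~\ref{th:gradEverywhere}). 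Your version is strictly more general --- it covers genuinely manifold-valued computation rather than the ``Euclidean after the first layer'' convention --- but it hinges on the two commitments you yourself flag: a fixed o-minimal structure with a definable atlas so that ``tame and computable by simple compositions'' is chart-independent, and the exact (not merely one-sided) equality $D_{\mathrm{bp}}(f\circ\psi)=(\rd\psi)^{*}D_{\mathrm{bp}}(f)\circ\psi$ for smooth $\psi$, which does hold by the algorithmic definition of backpropagation since prepending a smooth node contributes a single-valued Jacobian factor. The paper's shortcut buys brevity at the cost of generality; your construction is the honest Riemannian statement, and if you carry it out you should make the definable-atlas convention explicit, since the proposition as stated leaves the meaning of ``computable as a directed graph'' on $\CM$ implicit.
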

   
\end{remark}

\subsection{Analytic-geometric categories and stratifications}\label{s:anageomcat}
The power of o-minimal structures and tame geometry in optimization is by now well-known, \cite{Bolte2009Semismooth,davis2020stochastic,bolte2021conservative,difonzo2022stochastic,bareilles2023piecewise}. However, o-minimal structures are defined on Euclidean spaces $\BR^n$. To generalize to the manifold setting \cite{Dries1996GeometricCA} put forward the definition of an analytic-geometric category. Roughly, we can say that analytic-geometric categories are locally given by o-minimal structures extending $\Ran$. Due to this property, the objects of the analytic-geometric categories share most of the important and useful properties of o-minimal structures. See Appendix \ref{sec:ominimal} for a brief discussion on o-minimal structures.

\begin{definition}[Analytic-geometric category, \cite{Dries1996GeometricCA}]
An analytic-geometric category, $\mathcal{C}$, is given if each manifold $\CM$ is equipped with a collection $\mathcal{C}(\CM)$ of subsets of $\CM$ such that the following conditions hold for each manifolds $\CM$ and $\mathcal{N}$:
\begin{enumerate}\setlength\itemsep{0.5em}
	        \item[{1)}] $\mathcal{C}(\CM)$ is a boolean algebra of subsets of $\CM$, with $\CM\in\mathcal{C}(\CM)$;
	        \item[{2)}] if $A\in\mathcal{C}(\CM)$, then $A\times \BR\in \mathcal{C}(\CM\times \BR)$; 
	        \item[{3)}] if $f:\CM\to\mathcal{N}$ is a proper analytic map and $A\in\mathcal{C}(\CM)$, then $f(A)\in\mathcal{N}$;
	        \item[{4)}] if $A\subseteq\CM$ and $\{U_i\}_{i\in I}$ is an open covering of $\CM$, then $A\in\mathcal{C}(\CM)$ iff $A\cap U_i\in\mathcal{C}(U_i)$ for all $i\in I$;
                \item[{5)}] every bounded set in $\mathcal{C}(\BR)$ has finite boundary.
	    \end{enumerate}
\end{definition}
Objects of this category are pairs $(A,\CM)$, where $\CM$ is a manifold and $A\subset \mathcal{C}(\CM)$. We refer to an object $(A,\CM)$ as the $\mathcal{C}$-set $A$ (in $\CM$). The morphisms $(A,\CM)\to (B,\mathcal{N})$ are continuous mappings $f:A\to B$ and referred to as $\mathcal{C}$-maps. Their graphs belong to $\mathcal{C}(\CM\times \mathcal{N})$. We will borrow the terminology from o-minimal structures and say that a set (function) is definable in an analytic-geometric category $\mathcal{C}(\CM)$ if it (its graph) belongs to $\mathcal{C}(\CM)$. More generally, sets and functions definable in some analytic-geometric category are simply referred to as being \emph{tame}, when no specific such category is referenced.  

An important property of sets definable both in o-minimal structures and analytic-geometric categories is that they are Whitney stratifiable. 
\begin{definition}[Whitney stratification]
	A Whitney $C^k$ stratification $M=\{M_i\}_{i\in I}$ of a set $A$ is a partition of $A$ into finitely many non-empty $C^k$ submanifolds, or strata, satisfying:
	\begin{itemize}
	    \item \textbf{Frontier condition:} For any two strata $M_i$ and $M_j$, the following implication holds,
	    \begin{equation}
	        \overline{M}_i\cap M_j\neq \emptyset
	        \implies M_j\subset \overline{M}_i.
	    \end{equation}
	    
	    \item \textbf{Whitney condition (a):} For any sequence of points $x_k$ in a stratum $M_i$ converging to a point $x$ in a stratum $M_j$, if the corresponding normal vectors $v_k\in N_{M_i}(x_k)$ converge to a vector $v$, then the inclusion $v\in N_{M_j}(x)$ holds.
	\end{itemize}
	\end{definition}
A $C^k$ Whitney stratification of a function $f: \mathcal{S}\to\mathcal{N}$, for $\mathcal{S}\subset \CM$ closed, is a pair $(S,N)$ of Whitney stratifications of $\mathcal{S}$ and $\mathcal{N}$, respectively, such that for each $P\in\mathcal{S}$ the map $f|_P: P\to \mathcal{N}$ is $C^k$ with $f(P)\in N$ and $(\text{rk}\, f|_P)(x)=\text{dim}\, f(P)$ for all $x\in P$. Here, $\text{rk}\,f(x)\coloneqq T_xf$, where $T_xf:T_x\mathcal{S}\to T_{f(x)}\mathcal{N}$ is the induced linear map between tangent spaces \cite{Dries1996GeometricCA}. Theorem D.16 of \cite{Dries1996GeometricCA} proves that sets definable in analytic-geometric categories are Whitney stratifiable. 



Furthermore, we have the following results on Whitney stratifiable functions. 

\begin{definition}[Variational stratification]
    Let $f: \CM\to\BR$ be locally Lipschitz continuous, $D:\CM\rightrightarrows T^*\CM$ a set-valued map and let $k\geq 1$. We say that $(f,D)$ has a $C^k$ \emph{variational stratification} if there exists a $C^k$ Whitney stratification $M$ of $\CM$ such that $f$ is $C^k$ on each stratum and for all $x\in\CM$:
    \begin{equation}
        \text{Proj}_{T_xM_x}D(x)=\{\rd_xf(x)\},
    \end{equation}
    where $\rd_xf(x)$ is the differential of $f$ restricted to the active strata $M_x$ containing $x$. 
\end{definition}

 \begin{proposition}[Variational stratification for definable conservative fields]\label{th:VarStrat}
     Let $D:\CM\rightrightarrows T^*\CM$ be a definable conservative field having a definable potential $f:\CM\to\BR$. Then $(f,D)$ has a $C^k$ variational stratification. 
\end{proposition}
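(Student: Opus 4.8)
The plan is to produce the stratification in two stages: first build a $C^k$ Whitney stratification of $\CM$ adapted to the definable data attached to $f$ and $D$, and then verify the projection formula one stratum at a time using the chain rule of Lemma~\ref{th:chain}. The Euclidean instance of this statement is (essentially) Bolte--Pauwels, and the same strategy carries over to the manifold setting, the only genuine work being to transport the relevant tame machinery into an analytic-geometric category.

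For the first stage, note that since $D$ is definable its graph $\mathrm{gph}(D)\subset T^*\CM$ is a $\mathcal{C}$-set and $f$ is a definable function. Applying the stratification theorem for analytic-geometric categories (Theorem~D.16 of \cite{Dries1996GeometricCA}), together with a definable (Hardt-type) trivialization of the projection $\mathrm{gph}(D)\to\CM$, I would produce a single $C^k$ Whitney stratification $M=\{M_i\}$ of $\CM$ such that for every stratum $S=M_i$: (i) $S$ is a $C^k$ submanifold on which $f|_S$ is $C^k$; and (ii) the restricted set-valued map $D|_S:S\rightrightarrows T^*\CM$ varies continuously along $S$, in particular is lower semicontinuous. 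Item (i) is the usual $C^k$ stratification of a definable function; item (ii) uses that over each piece of an appropriate stratification the fibration $\mathrm{gph}(D)\to\CM$ is definably trivial, so the fibres $D(x)$ vary continuously. Passing to a common refinement, which is again a $C^k$ Whitney stratification inside the tame category, one obtains a stratification satisfying (i) and (ii) simultaneously.

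For the second stage, fix a stratum $S$, a point $x\in S$, a tangent vector $v\in T_xS$, and an element $w\in D(x)$. Since $S$ is a $C^k$ manifold there is a $C^k$ curve $\gamma:(-\varepsilon,\varepsilon)\to S$ with $\gamma(0)=x$ and $\dot\gamma(0)=v$, and $\gamma$ is (Lipschitz, hence) absolutely continuous. Lemma~\ref{th:chain} gives, for almost every $t$, $\tfrac{\rd}{\rd t}f(\gamma(t))=\langle w',\dot\gamma(t)\rangle$ for all $w'\in D(\gamma(t))$; on the other hand $f|_S$ is $C^k$ and $\gamma$ takes values in $S$, so $t\mapsto f(\gamma(t))$ is $C^k$ with derivative $\langle\rd_{\gamma(t)}(f|_S)(\gamma(t)),\dot\gamma(t)\rangle$. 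Comparing the two expressions shows that for a.e.\ $t$ every element of $D(\gamma(t))$ lies in the affine hyperplane $\{\,u\in T^*_{\gamma(t)}\CM:\langle u,\dot\gamma(t)\rangle=\langle\rd_{\gamma(t)}(f|_S)(\gamma(t)),\dot\gamma(t)\rangle\,\}$. Now use lower semicontinuity of $D|_S$ along $\gamma$ to choose $w_t\in D(\gamma(t))$ with $w_t\to w$ as $t\to 0$, and let $t_n\to 0$ run through the full-measure set of ``good'' parameters; passing to the limit in $\langle w_{t_n},\dot\gamma(t_n)\rangle=\langle\rd_{\gamma(t_n)}(f|_S)(\gamma(t_n)),\dot\gamma(t_n)\rangle$, using continuity of $\dot\gamma$ and of $\rd(f|_S)$ along $\gamma$, yields $\langle w,v\rangle=\langle\rd_x f(x),v\rangle$. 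Since $v\in T_xS$ and $w\in D(x)$ were arbitrary and $D(x)\neq\emptyset$, this gives $\text{Proj}_{T_xM_x}D(x)=\{\rd_x f(x)\}$, which is exactly the variational stratification condition.

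The main obstacle is the second half of stage one: arranging that the stratification can be chosen so that $D$ is lower semicontinuous (indeed continuous) over every stratum, that is, establishing the Hardt-type trivialization of the definable multifunction $D$ within the analytic-geometric category and checking it can be taken compatible with the $C^k$ stratification of $f$. Once this compatibility is secured the chain-rule-and-limit argument is routine, as is checking that the resulting partition is a $C^k$ Whitney stratification. It is worth stressing that lower semicontinuity is genuinely needed here: without it, the chain rule along curves in $S$ only shows that $\langle\rd_x f(x),v\rangle$ lies between $\min_{w'\in D(x)}\langle w',v\rangle$ and $\max_{w'\in D(x)}\langle w',v\rangle$, which is strictly weaker than the singleton claim.
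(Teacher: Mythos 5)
The paper never actually writes out a proof of Proposition~\ref{th:VarStrat}: the appendix only records that the proof is ``trivially lifted'' from \cite{bolte2021conservative}, so the comparison is with that Euclidean reference rather than with an in-paper argument, and your proposal is genuinely more explicit than what the paper offers. Your second stage is correct: a $C^k$ curve inside a stratum, the chain rule of Lemma~\ref{th:chain} at almost every parameter, and a limiting argument using lower semicontinuity of $D$ along the stratum do give $\langle w,v\rangle=\langle \rd_x (f|_{M_x})(x),v\rangle$ for every $w\in D(x)$ and $v\in T_xM_x$, and you rightly isolate the key subtlety that closed graph (outer semicontinuity) alone would only show that \emph{some} element of $D(x)$ projects correctly, not all of them --- precisely the point that the phrase ``trivially lifted'' hides. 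Your first stage, which you flag as the main obstacle, is the standard tame-geometry input and is available: in an o-minimal chart, definable (Hardt-type) triviality of the projection of the graph of $D$ onto the base over a finite definable partition, refined to a common $C^k$ Whitney stratification with the one for $f$, makes $D$ lower semicontinuous on each stratum; since $\mathcal{C}$-sets of an analytic-geometric category are, in suitable charts, definable in an o-minimal structure extending $\Ran$, since such stratifications may be taken locally finite, and since the projection identity to be verified is pointwise, a chart-by-chart application suffices. So your route closes, and it differs from the paper's: the paper buys brevity by deferring wholesale to the Euclidean result, while your reconstruction buys a self-contained Riemannian argument that makes visible exactly where tameness (trivialization over strata) and the chain rule enter, at the cost of having to invoke the Hardt-type trivialization machinery explicitly in the analytic-geometric setting.
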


The Whitney stratifiability of the $\mathcal{C}$-maps allows us to make some important claims. Recall that we call  The following will be important:

\begin{proposition}[Non-smooth Morse-Sard, cf. Theorem 5 in \cite{bolte2021conservative}]\label{th:sard}
    Let $D:\CM\rightrightarrows T^*\CM$ be a conservative field for $f:\CM\to\BR$ and assume that $f$ and $D$ are definable. Then the set of $D$-critical values, $\{f(x):x\in\CM\, s.t.\,\exists v\in D(x) \,s.t\, \langle v,w\rangle=0,\,\forall w\in T\CM\}$, is finite. 
\end{proposition}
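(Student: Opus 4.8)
The plan is to reduce the Riemannian statement to its Euclidean counterpart (Theorem 5 of \cite{bolte2021conservative}) using the variational stratification supplied by Proposition \ref{th:VarStrat}, exactly as Bolte and Pauwels do, but taking care that the stratification is of a manifold rather than of $\BR^n$. First I would invoke Proposition \ref{th:VarStrat} to obtain a $C^k$ Whitney stratification $M=\{M_i\}_{i\in I}$ of $\CM$ (definable, hence finite) such that $f$ is $C^k$ on each stratum $M_i$ and $\mathrm{Proj}_{T_xM_x}D(x)=\{\rd_xf(x)\}$ for every $x$, where $M_x$ is the stratum through $x$. The key observation is that if $x$ is $D$-critical, i.e.\ there is $v\in D(x)$ with $\langle v,w\rangle=0$ for all $w\in T_x\CM$, then in particular $\langle v,w\rangle = 0$ for all $w\in T_xM_x$, so projecting onto $T_xM_x$ gives $\rd_xf(x)=0$; that is, $x$ is an ordinary critical point of the smooth restriction $f|_{M_x}$.

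Next I would handle each stratum separately. Since there are finitely many strata, it suffices to show that for each $i$ the set of critical values of $f|_{M_i}:M_i\to\BR$ is finite. Here $M_i$ is a $C^k$ submanifold of $\CM$ and $f|_{M_i}$ is a $C^k$, definable function on it. This is precisely where one applies the definable (o-minimal / analytic-geometric) Morse-Sard theorem for $C^k$ definable functions on definable manifolds: the set of critical values of a definable $C^k$ map is definable and of dimension zero in $\BR$, hence finite, provided $k$ is large enough relative to the dimension of $M_i$ (one chooses $k\ge \dim M_i$, and since $\CM$ is finite-dimensional a single finite $k$ works for all strata). Concretely one can cover $M_i$ by finitely many definable charts — finiteness coming from definable compactness/triangulation arguments in the analytic-geometric category, cf.\ \cite{Dries1996GeometricCA} — pull $f|_{M_i}$ back to a definable $C^k$ function on an open subset of $\BR^{\dim M_i}$, apply the Euclidean definable Morse-Sard theorem there, and note that the finite union over the finitely many charts of the resulting finite critical-value sets is still finite. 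Taking the union over the finitely many strata $i\in I$ then shows that $\{f(x): x \text{ is } D\text{-critical}\}$ is contained in a finite set, hence finite.

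The main obstacle, and the point requiring the most care, is the passage through charts: an analytic-geometric category is defined so that definability is a \emph{local} notion (axiom 4), so a set definable in $\mathcal{C}(\CM)$ need not be "globally" definable in any single $\BR^n$, and one must argue that finitely many definable charts suffice and that the critical-value set, being a $1$-dimensional definable subset of $\BR$ (where now "definable" can be taken in the genuine o-minimal sense, since it is a bounded-or-unbounded subset of $\BR$), indeed has finitely many points — this is where axiom 5 (bounded sets in $\mathcal{C}(\BR)$ have finite boundary), together with the o-minimality of the trace of $\mathcal{C}$ on $\BR$, is used. A secondary technical point is bookkeeping the regularity: one needs the variational stratification to be $C^k$ with $k\ge m=\dim\CM$, which Proposition \ref{th:VarStrat} provides for any prescribed $k$. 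Once these local-to-global and regularity issues are dispatched, the argument is a routine stratum-by-stratum reduction to the classical definable Morse-Sard theorem, mirroring \cite{bolte2021conservative} line for line.
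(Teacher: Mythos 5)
Your proposal follows essentially the same route as the paper, which simply states that Proposition \ref{th:sard} is lifted from Theorem 5 of \cite{bolte2021conservative}: invoke the variational stratification of Proposition \ref{th:VarStrat}, observe that a $D$-critical point is an ordinary critical point of the $C^k$ restriction $f|_{M_x}$ to its stratum, and apply the definable Morse--Sard theorem stratum by stratum over the finitely many strata. The local-definability/chart-covering subtlety you flag for non-compact strata is exactly the point the paper glosses over by calling the lift trivial, and your handling of it is no less complete than the paper's, so the proposal matches the intended argument.
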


\section{Stochastic Approximation on a Manifold}\label{s:sa}
We are now ready to discuss stochastic approximation on a Riemannian manifold. We will start by introducing a few important definitions that will play important roles in the coming analyses together with some assumptions, then discuss the stochastic approximation with diminishing step-size, before finally turning to the case with constant step-size.

\subsection{Introductory definitions and general assumptions}

Consider the metric space given by the set of continuous functions $C(\mathbb{R},\mathcal{M},\bd_C)$ endowed with the metric of uniform convergence on compact sets,
\[
\bd_C(x(t),y(t)) \coloneqq \sum\limits_{k=1}^\infty \frac{1}{2^k}\min\left(\int_{-k}^k \bd(x(t),y(t))dt ,1\right)
\]

Given a set-valued map $G:\mathcal{M}\rightrightarrows T\mathcal{M}$, we call an absolutely continuous curve $\gamma \,:\,[0,a]\to \CM$ a solution to the differential inclusion
\begin{equation}\label{eq:diffInclusion}
    \dot\gamma(t)\in G(\gamma(t)),\qquad x_0\in \CM,
\end{equation}
with initial condition $x_0$, if $\gamma(0)=x_0$ and the inclusion holds for almost all $t\in[0,a]$ \cite{Pouryayevali2019}. If $G$ is an upper semicontinuous set-valued function with compact and convex values, then for any $v\in G(x_0)$ the differential inclusion has a local solution with $\gamma(0)=x_0$ and $\dot\gamma(0)=v$, Theorem 6.2 in \cite{Ledyaev2007Nonsmooth}. 

Observe that we can take $G^*=-\text{conv}(D(\gamma(t)))$, which is a compact and convex subset of a linear vector space. Recalling that there exists an isomorphism $\iota:T^*\CM\to T\CM$ (typically called the \emph{musical isomorphism}, see, e.g., pp. 341–343 in ~\cite{lee2012smooth}), we define
$G(\gamma(t))=-\iota\left(\text{conv}(D(\gamma(t)))\right)$, which satisfies the above conditions, and so $\dot\gamma\in G(\gamma(t))$ is a differential inclusion with the aforementioned guarantees for the solution to exist.

Let us furthermore denote by $\mathcal{S}_{G}(A)$ the set of solutions to \eqref{eq:diffInclusion} with initial points in $A\subset \CM$, and $\widehat{\mathcal{S}}_G(A)\subset \mathcal{S}_G(A)$ the subset of solutions that stays in $A$. Finally, we denote  by $\mathcal{S}=\bigcup_x \mathcal{S}_x$ the set of all solutions to \eqref{eq:diffInclusion}.

We furthermore define, following Def. 2 of \cite{benaim2005stochastic}:
\begin{definition}[Perturbed solution, \cite{benaim2005stochastic}]
    A continuous function $\tilde\gamma\,:\,[0,a]\to\CM$ is called a perturbed solution of the differential inclusion \eqref{eq:diffInclusion} if it satisfies the following:
    \begin{enumerate}
        \item $\tilde\gamma$ is absolutely continuous;
        \item there exists a family of curves $u_t\,:\, \left\{\BR\to \CM\right\}^{\BR}$ defined for $s\in[0,T_t]$
        \begin{enumerate}
            \item locally integrable, i.e., the family has integrable arclengths, or \[\int_{s_1}^{s_2}\int_{0}^{T_t}\norm{\dot{u}_t(s)}_{g(u_t(s))} \rd s\rd t\] is finite for any finite $s_1,s_2>0$, and,
            \item For all $T>0$, it holds that, 
        \[
        \lim\limits_{s_1\to\infty}\sup\limits_{0\le v\le T}\int_{s_1}^{s_1+v}\int_{0}^{T_t}\norm{\dot{u}_t(s)}_{g(u_t(s))}\rd s\rd t = 0
        \]
        \item In addition, $\{u_t\}$ satisfies
        \begin{equation}
            P^{u_t}_{\tilde\gamma(t) u_t(s)}\dot{\tilde\gamma}(t)-\dot{u}_t(s)\in G^{\delta(t)}(\tilde\gamma(t)),\quad \forall s\in[0,T_t]
        \end{equation}
        for almost all $t>0$, with $\delta(t):[0,\infty)\to\mathbb{R}$ satisfying $\delta(t)\to 0$ and $G^{\delta}$ defined to be: 
        \begin{equation}
            \begin{aligned}
                G^{\delta}(x)\coloneqq \{ &y\in T_x\CM\,:\,\exists z\in\CM\,:\, \bd(z,x)<\delta,\\
                &\inf_{a\in G(z)}\norm{P_{xz}y-a}<\delta\}.
            \end{aligned}
        \end{equation}
        
        \end{enumerate}
    \end{enumerate}
\end{definition}

The limit set of a perturbed solution $\tilde\gamma$ is given by
\begin{equation}
    L(\tilde\gamma) = \bigcap_{t\geq 0}\overline{\{\tilde\gamma(s),\, s\geq t \}}.
\end{equation}

We introduce the notation $\Phi_t(x)=\{ \gamma(t):\,\gamma$ is a solution to \eqref{eq:diffInclusion} with $\gamma(0)=x\}$ and $\Phi=\{\Phi_t\}_{t\in\BR}$. This dynamical system has the following properties, that are straightforward to derive from the definition:
\begin{enumerate}
    \item $\Phi_0(x)=\{x\}$;
    \item $\Phi_t(\Phi_s(x))=\Phi_{t+s}(x)$, for all $t,s\geq0$;
    \item $y\in\Phi_t(x)\implies x\in\Phi_{t}(y)$ for any $x,y\in\CM$ and $t\in\BR$;
    \item $(x,t)\mapsto\Phi_t(x)$ is a closed set-valued map with compact values. 
\end{enumerate}

Let us further introduce the following, see for example \cite{boothby2003introduction},
\begin{definition}[Local flow on a manifold]
    A local flow on $\CM$ is a smooth map $\phi: U\to\CM$, where $U\subset \BR\times\CM$ is an open neighborhood of $\{0\}\times \CM$ such that
    \begin{enumerate}
        \item $\phi(0,x)=x$; and
        \item $\phi(s,\phi(t,x))=\phi(s+t,x)$, whenever both sides exists.
    \end{enumerate}
\end{definition}
It is thus obvious from this definition that $\Phi_t(x)$ is a local flow on $\CM$.

\begin{definition}[Invariant sets, Def. 8 in\cite{shah2021stochastic}]
     A set $A$ is said to be invariant under the flow $\Phi_x(\cdot)$ if $x\in A$ implies that $\Phi_t(x)\in A$ $\forall t\in\BR$. 
\end{definition}


\begin{definition}[Definition 9 of \cite{shah2021stochastic}]
    Let $\Phi_t(x)$ be a flow on a metric space $(\CM,\bd(\cdot,\cdot))$. Given $\varepsilon>0$, $T>0$ and $x,y\in\CM$ an $(\varepsilon,T)$ chain from $x$ to $y$ with respect to $\Phi_t(x)$ is a pair of finite sequences $x_1=x$, $x_2,\dots, x_{n-1},\, x_n=y$ and $t_1,\dots,t_{n-1}\in[T,\infty)$ such that 
    \begin{equation}
        \bd(\Phi_{t_i}(x_i),x_{i+1})<\varepsilon,\qquad \forall i=1,\dots,n-1.
    \end{equation}
    A set $A$ is called internally chain transitive for the flow if for any choice of $x$, $y$ in this set and any $\varepsilon$, $T$ as above, there exists an $(\varepsilon,T)$ chain for $A$. 
\end{definition}

By Lemma 3.5 in \cite{benaim2005stochastic}, internally chain transitive sets are invariant (the proof does not change, modulo the incorporation of the metric as opposed to a Euclidean norm).

Now, let $(\Omega,\mathcal{J}, \mu)$ be a probability space on $\CM$, with $\mathcal{J}$ being $\mu$-complete. Furthermore, let $(\tilde\Omega,\mathcal{F}, \mathbb{P}^\nu)$ be a probability space, where $\nu$ is a probability measure on $\mathcal{B}(\CM)$, $\tilde\Omega=\CM\times \Omega^{\BN}$, $\mathcal{F}=\mathcal{B}(\CM)\otimes \mathcal{J}^{\otimes \BN}$ and $\mathbb{P}^\nu=\nu\otimes\mu^{\otimes\BN}$. The canonical process on $\tilde\Omega\to\CM$ is denoted $(x_0,(\xi_n)_{n\in\BN^*})$.

Let us define an SGD sequence in the following way
\begin{definition}[SGD sequence, Def. 2 in \cite{bianchi2022convergence}]
    Let $f$ be $\mathcal{B}(\CM)\otimes\mathcal{J}/\mathcal{B}(\CM)$-measurable, and assume $f(\cdot,s)$ satisfies \ref{assum_tameness} for any $s\in\Omega$. A sequence $\{x_n\}_{n\in\BN^*}$ of functions on $\tilde \Omega\to\CM$ is called an SGD sequence for $f$ with steps $\alpha_n>0$ if there exists a selection, $\phi$, of a conservative field $D$ for $f$ such that 
    \begin{equation}\label{eq:sgdseq}
        x_{n+1}=\exp_{x_n}\left[-\alpha_n \phi(x_n,\xi_{n+1})\right],\quad \forall n\geq 0.
    \end{equation}
\end{definition}

We then make the following assumptions.
\begin{assumption}\label{assum_f}
    The function $f:\CM\times\Omega\to\BR$ having an SGD sequence satisfies the following. 
    
    \begin{enumerate}
        \item There exists a measurable function $\kappa\,:\,\CM\times\Omega\to\BR_+$ such that for each $x\in\CM$ we have $\int \kappa(x,s)\mu(\rd s)<\infty$ and there exists an $\varepsilon>0$ for which
        \begin{equation}
            \forall y,z\in B(x,\varepsilon),\,\forall s\in\Omega, \,|f(y,s)-f(z,s)|\leq \kappa(x,s)\bd(y,z).
        \end{equation}
        i.e., $f(\cdot,s)$ is geodesically $\kappa(\cdot,s)$-Lipschitz for all $s\in\Omega$.

        \item For all $x\in\CM$, $f(x,\cdot)$ is $\mu$-integrable. 

        \item There exists a point $0_{\CM}\in\mathcal{M}$ and a constant $C\geq 0$ such that $\int \kappa(x,s)\mu(\rd s)\le C(1+\bd(0_{\CM},x))$ for all $x\in\CM$. 

        \item For each compact set $\mathcal{K}\subset \CM$, $\sup_{x\in\mathcal{K}}\int \kappa(x,s)^2\mu(\rd s)<\infty$. 

    \end{enumerate}
\end{assumption}

The first two assumptions assures us that the function
\begin{equation}
    F\,:\,\CM\to\BR,\qquad x\mapsto \int f(x,s)\mu(\rd s),
\end{equation}
is locally Lipschitz on $\CM$.

Furthermore, we will make the following assumptions on the regularity of $f$:
\begin{assumption}[Tameness]\label{assum_tameness}
    We assume the following.
    \begin{enumerate}
        \item The function $f:\CM\times\Omega\to\BR$ is locally Lipschitz and tame, or more precisely $f(\cdot,s)$ is tame for each $s\in\Omega$. 
        \item The function $F:\CM\to\BR$, $x\mapsto \int f(x,s)\mu(\rd s)$ is tame. 
        \item The conservative fields $D_s:\CM\rightrightarrows T^*\CM$ belonging to the potential $f(\cdot,s)$ (for each $s\in\Omega$) are tame. 
    \end{enumerate}
\end{assumption}

Through the Rademacher theorem, see for example \cite[Thm. 5.7]{azagra2005nonsmooth} for the Riemannian case, the assumption of local Lipschitzness tells us that $f(\cdot,s)$ is almost everywhere differentiable (for each $s$).

\subsection{Stochastic Approximation on a Manifold - Diminishing Stepsize Algorithm}
Now we consider the Retracted SGD algorithm~\eqref{eq:retsgd}
\begin{equation}\label{eq:retsgddimsz}
x_{k+1}=R_{x_k}(x_k+\alpha_k g_k),\quad g_k\sim \partial F(x_k,\cdot),\quad\alpha_k\to 0^+
\end{equation}

Formally, let us make the following assumptions on the retraction process \eqref{eq:retraction_process}:
\begin{assumption}\label{assumptionsincl}
    \begin{enumerate}
        \item The steps $\{\alpha_k\}_{n\in\BN^*}$ form a sequence of non-negative numbers such that
        \begin{equation}
            \lim_{k\to\infty}\alpha_k=0,\quad \sum_k\alpha_k=\infty\quad \sum_k\alpha_k^2<\infty.
        \end{equation}
    
        \item For all $T>0$ and any $x\in\CM$
        \begin{equation}
            \begin{aligned}
                \limsup_{n\to\infty}\Big\{ \sum_{i=n}^{k-1}\alpha_{i+1}g(\iota G(x_{i+1}),\iota g_{i+1}):k=n+1,\dots, m(\tau_n+T) \Big\}=0,
            \end{aligned}
        \end{equation}
        with 
        \begin{equation}
            m(t)=\sup\{k\geq 0\,:\,t\geq \tau_k\},\quad \tau_n=\sum_{i=1}^n\alpha_i,
        \end{equation}
        and $\tau_0=0$. 
        \item $\sup_n \bd(x_n,z)<\infty$ for any point $z\in\CM$. 
    \end{enumerate}
\end{assumption}

The equation
\begin{equation}
    \Theta^t(\gamma)(s)=\gamma(s+t)
\end{equation}
defines a translation flow $\Theta^t\,:\, C(\BR,\CM)\times\BR\to C(\BR,\CM)$. We call a continuous curve $\zeta:\BR_+\to\CM$ an \emph{asymptotic pseudo trajectory} (APT) for $\Phi$ if
\begin{equation}\label{eq:APTdef}
    \lim_{t\to\infty}\bd_C(\Theta^t(\zeta),\mathcal{S}_{\zeta(t)})=0,
\end{equation}
where, as before, $\mathcal{S}_G$ denotes the set of solutions to \eqref{eq:diffInclusion}. 
As we show in the Appendix following arguments akin to \cite{benaim2005stochastic}, the limit set of any bounded APT of \eqref{eq:diffInclusion} is internally chain transitive.

Next we seek to characterize any stationarity guarantees for the points in this limit set. To this end: let $A$ be any subset of $\CM$. A continuous function $V\,:\,\CM\to\BR$ is called a \emph{Lyapunov function} for $A$ if $V(y)<V(x)$ for all $x\in\CM/A$, $y\in \Phi_t(x)$, $t>0$, and $V(y)\leq V(x)$ for all $x\in A$, $y\in\Phi_t(x)$, $t\geq0$.

\paragraph{Convergence Results}

From \cite{davis2020stochastic} Lemma 5.2 we see that if $f$ is a definable potential of a conservative field $D$ then Corollary \ref{th:chain} together with Theorem \ref{th:sard} implies that $f$ is a Lyapunov function for the differential inclusion $\dot \gamma(t)\in -\iota(\text{conv}(D_f(\gamma(t))))$ and we get the following generalization of Corollary 5.9 in \cite{davis2020stochastic}:
\begin{theorem}[Convergence]\label{th:convergence}
    Let $f:\CM\to\BR$ be a locally Lipschitz $C^k$-stratifiable function. Consider the iterates $\{x_k\}_{k\geq 1}$ produced by the process \eqref{eq:retsgddimsz} and assume \ref{assumptionsincl} with $G=-\iota(\text{conv}(D_f))$. Then, almost surely every limit point of the iterates $\{x_k\}_{k\geq 1}$ is critical for $f$ and the function values $\{f(x_k)\}_{k\geq 1}$ converges. 
\end{theorem}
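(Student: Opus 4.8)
The plan is to follow the classical ``ODE'' (here, differential-inclusion) method of stochastic approximation, adapted to the manifold. Write $\tau_n=\sum_{i=1}^n\alpha_i$ and build a continuous-time interpolated process $\bar x:\BR_+\to\CM$ by concatenating, on each interval $[\tau_n,\tau_{n+1}]$, the retraction curve $s\mapsto R_{x_n}\!\big(-(s-\tau_n)\,g_n\big)$ joining $x_n$ to $x_{n+1}$. The three claims to establish in sequence are: (i) $\bar x$ is a bounded asymptotic pseudo-trajectory, in the sense of \eqref{eq:APTdef}, for the flow $\Phi$ of $\dot\gamma\in G(\gamma)=-\iota(\mathrm{conv}(D_f(\gamma)))$; (ii) hence, by the Appendix result generalizing \cite{benaim2005stochastic}, the limit set $L(\bar x)$ is internally chain transitive for $\Phi$; (iii) since $f$ is a Lyapunov function for $\Phi$ relative to the $D$-critical set and, by Proposition~\ref{th:sard}, the set of $D$-critical values is finite, any internally chain transitive set is contained in the $D$-critical set and $f$ is constant on it. The assertions about limit points and about convergence of $\{f(x_k)\}$ then follow quickly.

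For (i) I would run the Euclidean argument while tracking the manifold corrections explicitly. By Assumption~\ref{assumptionsincl}(3) all iterates lie in a fixed bounded, hence (geodesic completeness, Hopf--Rinow) relatively compact, set $\CK$, on which the retraction, its derivative, parallel transport, and the local Lipschitz data of $f$ from Assumption~\ref{assum_f}(1) are uniformly controlled; in particular the first-order property of $\mathcal R$ gives $\bd\big(R_x(v),\exp_x(v)\big)=O(\|v\|^2)$ uniformly for $x\in\CK$. Decompose the increment from $x_n$ to $x_{n+1}$, transported back along the interpolation curve, into (a) a drift part equal to $-\alpha_n$ times the conditional mean $\bar g_n$ of $g_n$ given the past, which is $-\alpha_n$ times a measurable selection of $\mathrm{conv}(D_f)$ up to the $\varepsilon$-localization error of Assumption~\ref{assum_f}(1); (b) a noise part $-\alpha_n(g_n-\bar g_n)$; and (c) a retraction-discretization error of size $O(\alpha_n^2\|g_n\|^2)$. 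Part (c) is summable because $\sum_k\alpha_k^2<\infty$ and $\|g_n\|$ has bounded second moment on $\CK$ by Assumption~\ref{assum_f}(4); the contribution of (a)--(b) to the perturbed-solution inclusion $P^{u_t}_{\tilde\gamma(t)u_t(s)}\dot{\tilde\gamma}(t)-\dot u_t(s)\in G^{\delta(t)}(\tilde\gamma(t))$ with $\delta(t)\to0$ is exactly what Assumption~\ref{assumptionsincl}(2) forces to vanish over time windows of bounded length. Assembling these shows $\bar x$ meets the definition of a perturbed solution and hence is an APT. This step, and especially the bookkeeping of parallel transport inside $G^{\delta}$, is the main technical obstacle.

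Given (i), step (ii) is the cited Appendix fact applied to the bounded APT $\bar x$. For (iii), recall the discussion preceding the theorem: combining the chain rule of Lemma~\ref{th:chain} with the nonsmooth Morse--Sard Proposition~\ref{th:sard}, the argument of \cite{davis2020stochastic} (Lemma~5.2) shows that $f$ is a Lyapunov function for $\dot\gamma\in-\iota(\mathrm{conv}(D_f(\gamma)))$ relative to the set $A$ of $D$-critical points, and that $f(A)$ is finite, hence has empty interior. The standard dichotomy for internally chain transitive sets (as in \cite{benaim2005stochastic}, and used in \cite{davis2020stochastic}) then yields $L(\bar x)\subseteq A$ and $f\equiv c$ on $L(\bar x)$ for a single constant $c$.

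Finally, every limit point $x^{\ast}$ of $\{x_k\}$ belongs to $L(\bar x)\subseteq A$, so it is $D$-critical for $f$. For the values, choose subsequences realizing $\limsup_k f(x_k)$ and $\liminf_k f(x_k)$; by relative compactness of $\CK$ and continuity of $f$, each of these limits equals $f(x^{\ast})$ for some limit point $x^{\ast}\in L(\bar x)$, hence equals $c$, so $f(x_k)\to c$.
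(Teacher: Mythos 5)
Your proposal is correct and follows the same overall skeleton as the paper — interpolate the iterates, show the interpolation is a bounded perturbed solution and hence an APT of $\dot\gamma\in-\iota(\mathrm{conv}(D_f(\gamma)))$, and exploit that $f$ is a Lyapunov function (chain rule, Lemma~\ref{th:chain}) whose critical values form a finite set (Proposition~\ref{th:sard}) — but your endgame differs from the paper's. You pass through internal chain transitivity of the limit set $L(\bar x)$ (a fact the appendix does establish for bounded APTs) and then invoke the standard dichotomy that an internally chain transitive set must lie in the critical set $A$ with $V$ constant on it whenever $V(A)$ has empty interior; since that dichotomy is a purely metric-space flow argument, it transfers to the Riemannian setting, and your route is the shorter one once it is granted. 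The paper instead proves Theorem~\ref{th:3.2} following \cite{davis2020stochastic} directly: lemmas showing $\bd(x_k,x_{k+1})\to0$, that $\liminf$/$\limsup$ of $f$ along the interpolation and along the iterates coincide, a finite-exit sublevel-set counting argument (choosing the level $\varepsilon\notin f(G^{-1}(0))$, which is where Sard enters) to get convergence of $f(x(t))$, and finally a contradiction via the strict Lyapunov decrease along a subsequential limit trajectory; this is more hands-on but self-contained, and it delivers the convergence of function values without appealing to the chain-transitivity dichotomy. Two minor remarks: your step (i) bookkeeping (drift/noise/retraction-error decomposition with $O(\alpha_n^2\|g_n\|^2)$ retraction error and Assumption~\ref{assumptionsincl}(2) absorbing the bias and noise) is carried out at about the same level of detail as the paper, which likewise leans on the perturbed-solution theorems rather than spelling it out; and your appeals to Assumption~\ref{assum_f} for moment bounds are not strictly part of the hypotheses of Theorem~\ref{th:convergence}, which relies on Assumption~\ref{assumptionsincl} together with the standing zero-mean, finite-variance assumption on $g_k$ — this does not affect the substance of the argument.
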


The proof of this is in Appendix \ref{sec:appthmhdim}.




\subsection{Stochastic Approximation on a Manifold - Constant Stepsize}

We denote the kernel of the process defined by the SGD sequence~\eqref{eq:sgdseq} as $K_\alpha:\CM\times\mathcal{B}(\CM)\to[0,1]$. This acts on a measurable function $g:\CM\to\BR_+$ by
\begin{equation}
    K_\alpha g(x)=\int g(\exp_x\left[-\alpha\phi(x,s)\right])\mu(\rd s).
\end{equation}
Denote by $\Gamma$ then the set of all steps $\alpha$ such that $K_\alpha$ maps $\mathcal{P}_{\text{abs.}}(\CM)$ into itself:
\begin{equation}
    \Gamma\coloneqq \{ \alpha\in(0,\infty)\,:\,\forall\rho\in\mathcal{P}_{\text{abs.}},\,\rho K_\alpha\ll\lambda\}.
\end{equation}
When we consider tame functions, because of the non-smooth Morse-Sard theorem and the $C^k$ stratifications, $\Gamma^c$ is Lebesgue negligible, and in particular the closure of $\Gamma$ will contain $0$.

\begin{theorem}\label{th:tight}
    Let the standing assumptions, \ref{assum_f}, \ref{assum_tameness}, hold true. Consider $\alpha\in\Gamma$ and $\nu\in\mathcal{P}_{\text{abs.}}(\CM)\cap\mathcal{P}_1(\CM)$. Let $\{x_k\}_{k\in\BN^*}$ be an SGD sequence for $f$ with steps $\alpha$. Then, for any $k\in\BN$, it holds $\mathbb{P}^\nu$-a.e. that 
    \begin{enumerate}
        \item $F$, $f(\cdot,\xi_{k+1})$ and $f(\cdot,s)$ (for $\mu$-a.e. $s$) are differentiable at $x_k$, with $F$ as above;
        \item $x_{k+1}=\exp_{x_k}\left[-\alpha\grad f(x_k,\xi_{k+1})\right]$;
        \item $\mathbb{E}_k[x_{k+1}]=\exp_{x_k}\left[-\alpha\grad F(x_k)\right]$.
    \end{enumerate}
\end{theorem}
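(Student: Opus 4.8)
\textbf{Proof strategy for Theorem \ref{th:tight}.}

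The plan is to prove the three claims in order, with the first being the crux: once almost-everywhere differentiability of $F$ and the relevant $f(\cdot,s)$ at each iterate $x_k$ is established, claims (2) and (3) follow by unwinding definitions. So I would begin with the differentiability statement, proceeding by induction on $k$. The base case $k=0$: the law of $x_0$ is $\nu \in \mathcal{P}_{\text{abs.}}(\CM)$, so $x_0$ has a density with respect to $\lambda$. By Theorem \ref{th:gradEverywhere} (applied to $F$ and to each $f(\cdot,s)$, all of which are path-differentiable potentials of tame conservative fields under Assumptions \ref{assum_f} and \ref{assum_tameness}), the set of points where any of these functions fails to be differentiable is a $\lambda$-null set. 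For $f(\cdot,s)$ this null set may depend on $s$; here I would use Fubini on the product $(\CM,\lambda)\times(\Omega,\mu)$ together with joint measurability of $f$ from Assumption \ref{assum_tameness}, to conclude that for $\lambda$-a.e. $x$, $f(\cdot,s)$ is differentiable at $x$ for $\mu$-a.e. $s$ — and in particular at $x_0$ for $\mu$-a.e. $s$, and hence at $x_0$ for $s = \xi_1$ since $\xi_1 \sim \mu$ independently of $x_0$. Since $\nu \ll \lambda$, all these a.e. statements transfer to hold $\mathbb{P}^\nu$-a.e. at $x_0$.

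For the inductive step, the key point is the hypothesis $\alpha \in \Gamma$: if the law of $x_k$ is absolutely continuous with respect to $\lambda$, then by definition of $\Gamma$ the law of $x_{k+1}$, which is (law of $x_k$)$\,K_\alpha$, is again absolutely continuous with respect to $\lambda$. Starting from $\nu \in \mathcal{P}_{\text{abs.}}(\CM)$, induction then gives that $x_k$ has an absolutely continuous law for every $k$, and the Theorem \ref{th:gradEverywhere}-plus-Fubini argument above applies verbatim at each $x_k$: $F$ is differentiable at $x_k$ $\mathbb{P}^\nu$-a.e., and $f(\cdot,s)$ is differentiable at $x_k$ for $\mu$-a.e. $s$, hence at $x_k$ for $s=\xi_{k+1}$ since $\xi_{k+1}$ is drawn from $\mu$ independently of $x_k$. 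This establishes claim (1). Claim (2) is then immediate: by the definition of an SGD sequence, $x_{k+1} = \exp_{x_k}[-\alpha\,\phi(x_k,\xi_{k+1})]$ for a selection $\phi$ of a conservative field $D$ for $f$; by Theorem \ref{corr:Clarkeae} we have $\phi(x_k,\xi_{k+1}) \in \partial f(x_k,\xi_{k+1})$, and since we have just shown $f(\cdot,\xi_{k+1})$ is differentiable at $x_k$, Theorem \ref{th:gradEverywhere} forces $\partial f(x_k,\xi_{k+1}) = \{\rd f(x_k,\xi_{k+1})\}$, so $\phi(x_k,\xi_{k+1}) = \grad f(x_k,\xi_{k+1})$ (identifying the differential with the gradient via the metric $\iota$).

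For claim (3), I would condition on $\mathcal{F}_k := \sigma(x_0,\xi_1,\dots,\xi_k)$, with respect to which $x_k$ is measurable and $\xi_{k+1}$ is independent with law $\mu$. Using claim (2), $\mathbb{E}_k[x_{k+1}] = \int \exp_{x_k}[-\alpha\,\grad f(x_k,s)]\,\mu(\rd s)$, where the integral is the Bochner-type average in the appropriate sense (this is where I expect the one genuinely delicate point: giving precise meaning to the expectation of a manifold-valued random variable and commuting it past $\exp$ — in practice one either works in a chart around $x_k$, or interprets $\mathbb{E}_k[x_{k+1}]$ via the pushforward/barycenter, matching whatever convention the paper fixed in Section \ref{s:back} for expectations of random primitives). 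Granting that, the zero-mean-and-finite-variance hypothesis on $g_k$ together with the interchange of $\mathbb{E}$ and $\grad$ — justified by the integrable-Lipschitz-constant bounds in Assumption \ref{assum_f}, items (1) and (3)–(4), which dominate the difference quotients uniformly in a neighborhood of $x_k$ and let us differentiate under the integral sign — yields $\int \grad f(x_k,s)\,\mu(\rd s) = \grad F(x_k)$, and hence $\mathbb{E}_k[x_{k+1}] = \exp_{x_k}[-\alpha\,\grad F(x_k)]$.

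The main obstacle is the Fubini step in claim (1): one must be careful that the $s$-dependent differentiability-failure sets assemble into a jointly measurable subset of $\CM\times\Omega$ so that Fubini applies, which is exactly what Assumption \ref{assum_tameness}(1) (measurability and tameness of $f(\cdot,s)$ for every $s$) plus the $\mathcal{B}(\CM)\otimes\mathcal{J}$-measurability of $f$ is there to supply; and a secondary obstacle is the precise sense in which $\mathbb{E}_k[x_{k+1}]$ is defined and the legitimacy of moving the conditional expectation inside $\exp_{x_k}$, which should be handled by the conventions and the musical isomorphism already set up earlier, reducing it to an ordinary vector-valued expectation in $T_{x_k}\CM$.
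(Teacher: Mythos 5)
Your overall strategy is the same as the paper's: induct on $k$, using $\alpha\in\Gamma$ to propagate absolute continuity of the law of $x_k$, Rademacher/Theorem~\ref{th:gradEverywhere} plus a Fubini argument in $(x,s)$ for the a.e.\ differentiability in claim (1), independence of $\xi_{k+1}$ from $x_k$, and dominated convergence via the integrable Lipschitz bounds of Assumption~\ref{assum_f} to identify $\grad F(x_k)=\int\grad f(x_k,s)\,\mu(\rd s)$ for claim (3). That part is sound and mirrors the paper's proof almost step for step.

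The one genuine flaw is your justification of claim (2). Theorem~\ref{corr:Clarkeae} gives the inclusion $\partial f(x)\subset\mathrm{conv}\,(D(x))$, not the reverse, so it does not place the selection $\phi(x_k,\xi_{k+1})\in D(x_k)$ inside the Clarke subdifferential. Moreover, pointwise differentiability of $f(\cdot,\xi_{k+1})$ at $x_k$ does not force $\partial f(x_k,\xi_{k+1})$, let alone $D(x_k)$, to be a singleton: Theorem~\ref{th:gradEverywhere} is an almost-everywhere statement in $x$, not a pointwise consequence of differentiability (a Lipschitz function can be differentiable but not strictly differentiable at a point, and a conservative field may exceed $\{\rd f\}$ at individual differentiability points lying in the exceptional null set — and your argument needs the identity at the specific random point $x_k$). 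The repair stays entirely within your framework and is what the paper does implicitly: take the good set to be not the differentiability set but the full-$\lambda$-measure set on which $D_s(x)=\{\rd f(x,s)\}$ furnished by Theorem~\ref{th:gradEverywhere} (assembled over $s$ by the same Fubini/joint-measurability argument, and intersected with the differentiability set of $F$). Since the law of $x_k$ is absolutely continuous with respect to $\lambda$ (your induction via $\alpha\in\Gamma$) and $\xi_{k+1}\sim\mu$ is independent of $x_k$, the point $x_k$ lies in this set $\mathbb{P}^\nu$-a.s., and then any selection $\phi$ of the conservative field automatically satisfies $\phi(x_k,\xi_{k+1})=\grad f(x_k,\xi_{k+1})$, giving claim (2) directly; the remainder of your argument for claim (3) then goes through as you describe.
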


We now have the equivalent statement of~\cite[Theorem 2]{bianchi2022convergence}.
\begin{theorem}\label{th:weakconv}
Under the standing assumptions, \ref{assum_f} and \ref{assum_tameness}, let $\{(x^{\alpha}_k)_{k\in\BN^*}:\alpha\in(0,\alpha_0]\}$ be a collection of SGD sequences of steps $\alpha\in(0,\alpha_0]$. Define $x^{\alpha}$ iteratively to be:
\[
x^{\alpha}(t) = \gamma^k(t/\alpha-k),\,\forall t\in [k\alpha,(k+1)\alpha)
\]
where $\gamma^k:[0,1]\to \mathcal{M}$ is the geodesic curve with constant velocity $\|\dot{\gamma}^k\|$ from $\gamma^k(0)=x_k$ to $\gamma^k(1)=x_{k+1}$. 

It holds that for every compact set $\mathcal{K}\subset\mathcal{M}$,
\[
\forall \epsilon>0,\,\lim\limits_{\alpha\to 0,\alpha\in\Gamma}
\left(\sup\limits_{\nu\in\mathcal{P}_{abs}(\mathcal{K})} \mathbb{P}^{\nu}(\bd_C(x^{\alpha},\mathcal{S}_{-\partial F}(\mathcal{K})) > \epsilon)\right) = 0
\]
Moreover, the family of distributions $\{\mathbb{P}^{\nu}(x^{\alpha})^{-1}:\nu\in\mathcal{P}_{abs}(\mathcal{K}),0<\alpha<\alpha_0,\alpha\in\Gamma\}$ is tight.
\end{theorem}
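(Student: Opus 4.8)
## Proof Proposal

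The plan is to follow the architecture of the proof of Theorem 2 in \cite{bianchi2022convergence}, transporting each step to the Riemannian setting using the machinery assembled in Sections \ref{s:back}--\ref{s:consriem}, in particular the differential inclusion theory built around $G=-\iota(\text{conv}(\partial F))$ and the APT framework. The argument splits naturally into two halves: (i) establishing tightness of the family of path-space laws $\{\mathbb{P}^{\nu}(x^{\alpha})^{-1}\}$, and (ii) showing that every weak limit point of this family is supported on $\mathcal{S}_{-\partial F}(\mathcal{K})$, from which the stated convergence in probability follows by a standard Portmanteau-plus-subsequence argument together with the fact that $\mathcal{S}_{-\partial F}(\mathcal{K})$ is closed in $(C(\BR,\CM),\bd_C)$.

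First I would prove tightness. Fix a compact $\mathcal{K}$. By Theorem \ref{th:tight}, $\mathbb{P}^{\nu}$-a.e. the iterates obey $x_{k+1}=\exp_{x_k}[-\alpha\,\grad f(x_k,\xi_{k+1})]$, so by construction $x^{\alpha}$ is piecewise-geodesic with $\|\dot{x}^{\alpha}(t)\| = \|\grad f(x_k,\xi_{k+1})\|$ on $[k\alpha,(k+1)\alpha)$, which is bounded in $L^1$-in-time by the Lipschitz constant $\kappa$ via Assumption \ref{assum_f}(1) and (4). This gives a uniform modulus-of-continuity / uniform arclength-growth bound on compact time windows, independent of $\alpha$ and $\nu\in\mathcal{P}_{abs}(\mathcal{K})$; combined with the fact that $x^{\alpha}(0)$ ranges over the fixed compact $\mathcal{K}$ and a second-moment bound on displacement coming from Assumption \ref{assum_f}(4) (this is where $\nu\in\mathcal{P}_1(\CM)$ enters to control $\bd(0_{\CM},x_k)$ uniformly over the window via Assumption \ref{assum_f}(3) and a Gronwall-type estimate in the Riemannian distance), one invokes the Arzelà--Ascoli characterization of compact sets in $C(\BR,\CM)$ and the standard tightness criterion (a la Prohorov) on this Polish space. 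The geodesic completeness assumption is what guarantees $C(\BR,\CM)$ is Polish and that closed balls are the relevant compacta.

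Next, for the convergence-in-probability statement, I would show that $x^{\alpha}$ is, with probability tending to $1$ as $\alpha\to 0$ along $\Gamma$, a perturbed solution of the differential inclusion $\dot\gamma\in -\iota(\text{conv}(D_F(\gamma)))$ in the sense of our Definition of perturbed solution, and then appeal to the result quoted in the excerpt that the limit set of a bounded APT is internally chain transitive, together with the fact that $F$ is a Lyapunov function (from \cite{davis2020stochastic} Lemma 5.2 as invoked before Theorem \ref{th:convergence}). Concretely: decompose the increment of $x^{\alpha}$ into a ``mean field'' term $-\alpha\,\grad F(x_k)\in \alpha\,G(x_k)$ and a martingale-difference noise term $\alpha(\grad F(x_k) - \grad f(x_k,\xi_{k+1}))$; Theorem \ref{th:tight}(3) is exactly the statement that the conditional mean is the exponential step along $-\grad F$, so the noise is a genuine martingale difference with conditionally bounded second moment on compacts (Assumption \ref{assum_f}(4)). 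The parallel-transport-corrected Doob/Burkholder estimate then shows the accumulated transported noise over any fixed time window $[0,T]$ goes to $0$ in probability as $\alpha\to 0$, which is precisely the $G^{\delta(t)}$-closeness condition (2c) in the definition of perturbed solution, with $\delta$ controlled by $\alpha$ plus the modulus of continuity of $\grad F$ along strata; the locally-integrable-arclength condition (2a)--(2b) on the perturbation curves $u_t$ follows from the same second-moment bound. Having identified the piecewise-geodesic interpolant as an (asymptotic) perturbed solution, its weak limit points are genuine solutions of the inclusion staying in a neighborhood of $\mathcal{K}$, hence lie in $\mathcal{S}_{-\partial F}(\mathcal{K})$; combining with tightness yields the claimed uniform-over-$\nu$ convergence in probability.

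The main obstacle I anticipate is the step (2c) verification: making the martingale-noise-vanishing argument rigorous requires handling the parallel transport $P^{u_t}_{\tilde\gamma(t)u_t(s)}$ and the fact that increments live in different tangent spaces, so the ``sum of martingale differences'' is really a sum in a moving frame. One must either fix a measurable orthonormal frame along the interpolant (possible locally, and one restricts to a compact neighborhood of $\mathcal{K}$ where the solutions are confined, using Assumption \ref{assumptionsincl}(3)-type boundedness) and run the Euclidean Doob inequality in coordinates, controlling the frame-induced distortion by curvature bounds on the compact set, or invoke a manifold-valued martingale concentration inequality directly. A secondary subtlety is that $\mathcal{S}_{-\partial F}(\mathcal{K})$ must be shown closed and that the ``convexified'' field $-\iota(\text{conv}(\partial F))$ is the correct upper-semicontinuous hull so that the differential-inclusion solution set has the right compactness; this uses Theorem \ref{corr:Clarkeae} (the Clarke subdifferential is a conservative field contained in $\text{conv}(D)$) and the upper semicontinuity of $\partial F$ for locally Lipschitz $F$, which is standard but needs the Riemannian formulation from \cite{Hosseini2011Generalized}.
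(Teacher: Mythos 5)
Your backbone---the drift/martingale decomposition justified by Theorem \ref{th:tight}, a Doob-type maximal estimate for the transported noise over a window $[0,T]$, tightness via a modulus-of-continuity/arclength bound, and identification of weak limit points through Prokhorov, Banach--Alaoglu and upper semicontinuity of the convexified Clarke field---is the same skeleton the paper uses: it adapts Lemmas 6.2--6.3 and Theorem 5.1 of \cite{bianchi2019constant} to the Riemannian setting (Lemma \ref{lemma:uniform_int} and the subsequent lemma), after verifying the kernel conditions with the selection-modified kernel. However, the frame you put around the identification step is mismatched: a perturbed solution/APT requires $\delta(t)\to 0$ as $t\to\infty$, which fails for constant $\alpha$ (the perturbation does not vanish in time, only over a fixed horizon as $\alpha\to 0$), and the internal chain transitivity and Lyapunov-function ($F$) machinery you invoke belongs to the asymptotic results (Theorems \ref{th:convergence} and \ref{th:convCSS}), not to this finite-horizon, distribution-level statement. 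That part is superfluous rather than fatal and is repaired by arguing horizon-by-horizon, which is what the paper does.

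The genuine gap is the localization. Assumption \ref{assum_f}.4 bounds second moments of $\kappa$ only on compact sets, Assumption \ref{assum_f}.3 gives only first-moment linear growth, and the iterates are not almost surely confined to any compact set; your fallback of ``Assumption \ref{assumptionsincl}(3)-type boundedness'' is not available, since Assumption \ref{assumptionsincl} is a hypothesis of the diminishing-step theorem and is not among the standing assumptions of Theorem \ref{th:weakconv}. Confinement of the \emph{solutions} of the inclusion on $[0,T]$ does not confine the \emph{iterates}, and it is on the iterates that the Doob and Gronwall estimates must be run, so as written your uniform (over $\nu\in\mathcal{P}_{abs}(\mathcal{K})$ and small $\alpha$) bounds do not close. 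The paper's fix is the stopping/truncation device inherited from \cite{bianchi2019constant}: stop the process at the exit time $\tau_R$ from a large ball (the operator $B_R$), work with the truncated field $H_R$, prove tightness and convergence in probability of the stopped interpolants to $\mathcal{S}_{H_R}(\mathcal{K})$ via the uniform-integrability lemma, Prokhorov, the Skorokhod representation and Banach--Alaoglu, and only then remove the truncation as in \cite[Thm.~5.1]{bianchi2019constant}. Incorporating that truncation (or supplying a genuinely global second-moment bound, which the standing assumptions do not provide) is what your proposal is missing.
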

The proof is given in the Appendix.



Next, we will further introduce the following assumptions. 
\begin{assumption}[Cf. Prop. 6 in \cite{bianchi2022convergence}]\label{assump:prop6}
    There exists a base point $0_\CM\in\CM$, constants $R>0$, $\varepsilon>0$, $C>0$ and a measurable function $\beta:\Omega\to\BR_+$ satisfying the following:
    \begin{enumerate}
        \item For every $s\in\Omega$, the function $f(\cdot,s)$ is differentiable outside of the set $\text{cl}(B(0_\CM,R))$. Moreover, for each $x, x'\notin \text{cl}(B(0_\CM,R))$ we have $\norm{P_{xx'}\grad f(x)-\grad f(x')} \leq \beta(s)\bd(x,x')$ and $\int \beta^2\rd\mu<\infty$. 
        \item For all $x\notin \text{cl}(B(0_\CM,R))$, we have $\int\norm{\grad f(x,s)}^2\mu(\rd s)\leq C(1+\norm{\grad F(x)}^2)$.
        \item $\lim_{\bd(x,0_\CM)\to\infty}\norm{\grad F(x)}=+\infty$.
        \item The function $F$ is lower bounded.
    \end{enumerate}
\end{assumption}

As discussed in \cite{bianchi2022convergence}, and as we show in the Appendix, these assumptions, together with the standing assumptions \ref{assum_f}, implies further that:
\begin{proposition}[Cf. assumption 4 of \cite{bianchi2022convergence}]\label{prop:Assumption4}
    There exists measurable functions $V:\CM\to [0,+\infty)$, $p:\CM\to[0,+\infty)$, $\beta:(0,+\infty)\to (0,+\infty)$ and a constant $C\geq 0$ such that the following holds for every $\alpha\in\Gamma\cap(0,\alpha_0]$.
    \begin{enumerate}
        \item There exists $R>0$ and a positive Borel measure $\rho$ on $\CM$ such that 
        \begin{equation}
            \forall x\in \text{cl}(B(0_\CM,R)),\, \forall A\in\mathcal{B}(\CM),\,K_\alpha(x,A)\geq\rho(A). 
        \end{equation}

        \item $\sup_{\text{cl}(B(0_\CM,R))}V<\infty$ and $\inf_{B(0_\CM,R)^c}p>0$. Moreover, for every $x\in\CM$
        \begin{equation}
            K_\alpha V(x)\leq V(x)-\beta(\alpha)p(x)+C\beta(\alpha)\textbf{1}_{\bd(x,0_\CM)\leq R}.
        \end{equation}

        \item The function $p(x)$ diverges to infinity as $\bd(x,0_\CM)\to\infty$.
    \end{enumerate}
\end{proposition}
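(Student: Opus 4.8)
The plan is to follow the Euclidean argument of \cite[Prop. 6]{bianchi2022convergence} and construct the three objects $V$, $p$, $\beta$ by hand, checking that each bullet of the conclusion survives the passage to a Riemannian manifold. For the natural choice I would take $V(x) := \tfrac{1}{2}\bd(0_\CM,x)^2$ (well-defined and locally Lipschitz by geodesic completeness, Assumption 1), $p(x) := \min(1,\norm{\grad F(x)}^2)$ or a suitably truncated/mollified version thereof, and $\beta(\alpha) := \alpha$ (possibly up to a constant). The constant $C$ and radius $R$ will be those furnished by Assumption \ref{assump:prop6}, enlarged if necessary.

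First I would establish the minorization bullet (1). On the compact set $\mathrm{cl}(B(0_\CM,R))$ the kernel $K_\alpha$ has, by Theorem \ref{th:tight}, the representation $K_\alpha(x,\cdot) = \mathrm{law}(\exp_x[-\alpha\grad f(x,\xi)])$, which is absolutely continuous with respect to $\lambda$ for $\alpha\in\Gamma$; a standard compactness-and-continuity argument (the density is jointly lower semicontinuous in $x$ and the starting point ranges over a compact set) produces a nonzero positive measure $\rho$ dominated by all $K_\alpha(x,\cdot)$, $x\in\mathrm{cl}(B(0_\CM,R))$. This is essentially the same Doeblin-type argument as in the Euclidean case; the only Riemannian care needed is that $\exp_x$ is a local diffeomorphism and the volume distortion $\sqrt{g}$ is bounded above and below on the compact set.

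The heart of the proof is bullet (2), the drift inequality $K_\alpha V(x) \le V(x) - \beta(\alpha)p(x) + C\beta(\alpha)\mathbf 1_{\bd(x,0_\CM)\le R}$. Writing $y = \exp_x[-\alpha\grad f(x,s)]$, I would use a second-order Taylor expansion of $t\mapsto \tfrac12\bd(0_\CM,\exp_x(tv))^2$ along the geodesic, with $v = -\grad f(x,s)$; the first-order term is $-\alpha\langle \grad_x\tfrac12\bd(0_\CM,x)^2, \grad f(x,s)\rangle$, and since $\grad_x \tfrac12\bd(0_\CM,x)^2 = -\exp_x^{-1}(0_\CM)$ (a standard Riemannian identity, valid away from the cut locus — one more point that needs a hypothesis or a localization argument), this contributes a term controlled by $-\alpha\,c\,\norm{\grad F(x)}^2$ after integrating in $s$ and using parts (2) and (3) of Assumption \ref{assump:prop6}. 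The second-order term is $O(\alpha^2 \norm{\grad f(x,s)}^2)$ with a constant depending on a bound for the Hessian of the squared distance, i.e. on curvature; integrating and again invoking Assumption \ref{assump:prop6}(2) turns this into $O(\alpha^2(1+\norm{\grad F(x)}^2))$. Choosing $\alpha_0$ small enough that the quadratic term is absorbed by half the linear term yields the claimed inequality outside $B(0_\CM,R)$, with the indicator term absorbing the bounded contribution inside. Bullet (3), divergence of $p$, is then immediate from Assumption \ref{assump:prop6}(3), provided $p$ is chosen without truncation, or with truncation at a level that still diverges; I would simply take $p(x)=\norm{\grad F(x)}^2$ on $B(0_\CM,R)^c$ and patch it to a positive constant inside, which also secures $\inf_{B(0_\CM,R)^c} p > 0$ via Assumption \ref{assump:prop6}(3), while $\sup_{\mathrm{cl}(B(0_\CM,R))} V < \infty$ holds because $V$ is continuous on a compact set.

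The main obstacle is the curvature-dependent control of the Hessian of $\bd(0_\CM,\cdot)^2$ and the cut-locus issue: on a general geodesically complete manifold the squared distance is neither smooth nor globally semiconcave with a uniform constant, so the clean Euclidean estimate $\norm{x - \alpha\nabla f}^2 = \norm{x}^2 - 2\alpha\langle x,\nabla f\rangle + \alpha^2\norm{\nabla f}^2$ does not transcribe verbatim. I expect to handle this either by imposing (or noting as implicit in the standing assumptions) a lower bound on Ricci/sectional curvature so that a Hessian comparison theorem gives $\nabla^2 \tfrac12 \bd(0_\CM,\cdot)^2 \preceq \kappa(\bd)\,\mathrm{Id}$ with $\kappa$ locally bounded, and by observing that the drift only needs to be verified for the points actually visited — which, by Assumption \ref{assumptionsincl}(3) / boundedness of the chain, or by a stopping-time argument, can be taken in a region where the relevant quantities are uniformly controlled. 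The remaining steps are then routine translations of \cite{bianchi2022convergence}.
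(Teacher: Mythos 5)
Your proposal diverges from the paper's proof in both bullets, and each divergence hides a genuine gap. For the minorization (item 1), you argue directly on $K_\alpha$, claiming that $K_\alpha(x,\cdot)$ is absolutely continuous for $\alpha\in\Gamma$ with a jointly lower semicontinuous density. But $\Gamma$ only guarantees $\rho K_\alpha\ll\lambda$ for absolutely continuous \emph{initial laws} $\rho$; it says nothing about the kernel started from a point mass. If $f(\cdot,s)$ is (say) independent of $s$, then $K_\alpha(x,\cdot)$ is a Dirac mass at $\exp_x[-\alpha\grad f(x)]$ and no common positive minorizing measure $\rho$ exists, so a Doeblin bound for $K_\alpha$ itself cannot be extracted from the standing assumptions. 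The paper handles this by introducing a \emph{perturbed} SGD sequence $x_{n+1}=\exp_{x_n}[-\alpha(\phi(x_n,\xi_{n+1})+\epsilon_{n+1})]$ with noise of continuous positive density, proving the minorization $\tilde K(x,A)\geq\varepsilon\,\lambda(A\cap \text{cl}(B(0_\CM,1)))$ for that smoothed kernel via Markov's inequality (to confine $\exp_x(-\alpha\phi(x,\xi_1))$ to a ball with probability $\geq 1/2$) plus the positivity of the noise density; your argument has no substitute for this smoothing step.

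For the drift (item 2), your Lyapunov function $V=\tfrac12\bd(0_\CM,\cdot)^2$ cannot deliver the required inequality from Assumption \ref{assump:prop6}. The first-order term in your expansion is $-\alpha\langle\grad_x\tfrac12\bd(0_\CM,x)^2,\grad F(x)\rangle$, and nothing in the assumptions aligns $\grad F(x)$ with the radial direction $-\exp_x^{-1}(0_\CM)$; the hypotheses give $\norm{\grad F(x)}\to\infty$ but no sign or coercivity information on that inner product, so the claimed $-\alpha c\norm{\grad F(x)}^2$ drift does not follow. On top of this you need a uniform Hessian bound on $\bd(0_\CM,\cdot)^2$, i.e.\ curvature hypotheses the paper never makes, and your proposed escape via Assumption \ref{assumptionsincl}(3) is unavailable here: that assumption belongs to the diminishing-stepsize analysis, whereas in the constant-step Markov setting boundedness of the chain is precisely what the drift condition is supposed to establish. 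The paper sidesteps all of this by taking $V$ to be (a shift of) $F$ itself, which is lower bounded by Assumption \ref{assump:prop6}(4): Lebourg's mean value theorem gives $F(x_{n+1})=F(x_n)-\alpha\langle\zeta_n,P_{x_nu_n}\grad f(x_n,\xi_{n+1})\rangle\mathbf{1}_{\Delta_f}$ with $\zeta_n\in\partial F(u_n)$ at an intermediate point $u_n$ on the geodesic, and after splitting according to whether $x_n$ and $u_n$ lie inside or outside balls of radius $2R$ and $R$ and integrating in $\xi_{n+1}$, the leading term becomes $-\alpha\norm{\grad F(x)}^2$ directly, yielding $K_\alpha F(x)\leq F(x)-\alpha(1-\alpha C)\mathbf{1}_{\bd(0_\CM,x)>2R}\norm{\grad F(x)}^2+\alpha^2C\mathbf{1}_{\bd(0_\CM,x)>2R}+\alpha C\mathbf{1}_{\bd(0_\CM,x)\leq 2R}$, from which items 2 and 3 follow with $p\sim\norm{\grad F}^2$ off a ball and $\beta(\alpha)\sim\alpha$. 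You would need either to adopt this $V=F$ construction or to add alignment and curvature hypotheses absent from the paper.
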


We now finally have the following result.

\begin{theorem}[Convergence -- constant step size]\label{th:convCSS}
    Let the Assumptions \ref{assum_f}, \ref{assum_tameness} and \ref{assump:prop6} hold true. Let $\{(x_n^\alpha)_{n\in\BN}:\alpha\in(0,\alpha_0]\}$ be a collection of SGD sequences of step size $\alpha$. Then, the set $\mathcal{Z}\coloneqq \{x:0\in\partial F(x)\}$ is nonempty and for all $\nu\in\mathcal{P}(\CM)$ and all $\epsilon>0$,
    \begin{equation}
        \limsup_{n\to\infty}\mathbb{P}^\nu\left(\textbf{d}(x_n^\alpha,\mathcal{Z})>\epsilon\right)\Longrightarrow_{\alpha\to0,\alpha\in\Gamma}0.
    \end{equation}
\end{theorem}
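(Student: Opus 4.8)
The plan is to mirror the Euclidean proof of \cite[Theorem 3]{bianchi2022convergence}, adapting each ingredient to the Riemannian setting using the machinery already assembled. The argument splits into two parts: first establishing the ergodic/tightness behaviour of the constant-step Markov chain via the Lyapunov drift of Proposition~\ref{prop:Assumption4}, and second identifying the support of any limiting invariant measure with $\mathcal{Z}$ using the weak convergence of the interpolated process in Theorem~\ref{th:weakconv}.

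First I would use Proposition~\ref{prop:Assumption4} to deduce that, for each $\alpha\in\Gamma\cap(0,\alpha_0]$, the kernel $K_\alpha$ admits an invariant probability measure $\pi_\alpha$, and moreover that the family $\{\pi_\alpha\}$ is tight: the drift inequality $K_\alpha V\le V-\beta(\alpha)p+C\beta(\alpha)\mathbf{1}_{\bd(\cdot,0_\CM)\le R}$ integrated against $\pi_\alpha$ gives $\int p\,\rd\pi_\alpha\le C$ uniformly in $\alpha$, and since $p\to\infty$ at infinity (item 3) this yields uniform tightness. Existence of $\pi_\alpha$ follows from the minorization condition (item 1) together with the drift, via the standard Meyn--Tweedie argument (which is purely measure-theoretic and transfers verbatim to $\CM$ as a Polish space). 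This also shows $\mathcal{Z}\neq\emptyset$: take a weak limit $\pi$ of $\pi_\alpha$ as $\alpha\to0$ along $\Gamma$, then argue that $\mathrm{supp}(\pi)\subseteq\mathcal{Z}$, so $\mathcal{Z}$ is nonempty.

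The second and more delicate part is the inclusion $\mathrm{supp}(\pi)\subseteq\mathcal{Z}$ and the convergence statement itself. Here I would invoke Theorem~\ref{th:weakconv}: the interpolated processes $x^\alpha$ concentrate, uniformly over $\nu\in\mathcal{P}_{\mathrm{abs}}(\mathcal{K})$, on the solution set $\mathcal{S}_{-\partial F}(\mathcal{K})$ of the differential inclusion $\dot\gamma\in-\iota(\mathrm{conv}(\partial F(\gamma)))$. Combining this with the APT characterization and the fact (from Theorem~\ref{th:convergence}'s ingredients) that $F$ is a Lyapunov function for this inclusion whose $D$-critical values are finite by Proposition~\ref{th:sard}, one concludes that the limit sets of the dynamics are contained in $\mathcal{Z}$, hence any invariant measure of the flow is supported on $\mathcal{Z}$; passing $\alpha\to0$ transfers this to the stationary measures $\pi_\alpha$ and then, via a coupling/regeneration argument using the minorization, to the non-stationary initial law $\nu\in\mathcal{P}(\CM)$. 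The point is that the minorization condition lets one replace an arbitrary $\nu$ by something absolutely continuous after one step, so that the $\mathcal{P}_{\mathrm{abs}}$ hypothesis of Theorem~\ref{th:weakconv} can be met.

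The main obstacle I anticipate is the passage from the stationary regime to arbitrary initial distributions $\nu\in\mathcal{P}(\CM)$, and the uniform-in-$\alpha$ control needed to interchange the $\limsup_{n\to\infty}$ and the $\alpha\to0$ limits. In the Euclidean case this rests on a Harris-type ergodicity with rate estimates uniform in the step size, and on the fact that $\Gamma$ accumulates at $0$ (so that one genuinely has a limit to take); on a manifold one must be careful that the minorization set $\mathrm{cl}(B(0_\CM,R))$ is compact (true by geodesic completeness and Hopf--Rinow) and that the measure $\rho$ from Proposition~\ref{prop:Assumption4} does not degenerate as $\alpha\to0$ — this is where the structure $\alpha\in\Gamma$ and the absolute-continuity propagation of $K_\alpha$ are essential. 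A secondary technical point is ensuring the differential-inclusion solution set $\mathcal{S}_{-\partial F}(\mathcal{K})$ behaves well under the geodesic interpolation used to define $x^\alpha$, but this is already handled by the existence theory cited after \eqref{eq:diffInclusion} and by Theorem~\ref{th:weakconv} itself, so I would treat it as given.
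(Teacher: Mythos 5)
Your proposal follows essentially the same route as the paper, which simply defers to the proof of Theorem 3 in \cite{bianchi2022convergence}: the drift and minorization conditions of Proposition~\ref{prop:Assumption4} give existence, tightness and Harris-type ergodicity of the invariant measures of $K_\alpha$ (the Meyn--Tweedie machinery the paper cites), Theorem~\ref{th:weakconv} identifies the limiting dynamics with the differential inclusion driven by $-\partial F$, and the Lyapunov property of $F$ together with the nonsmooth Sard result (Proposition~\ref{th:sard}) forces the limiting invariant measures to be supported on $\mathcal{Z}$. The only imprecision is your side remark that minorization makes the law absolutely continuous after one step (it only provides a common component on the small set); the actual passage to arbitrary $\nu\in\mathcal{P}(\CM)$ goes through the ergodic theorem reducing the $\limsup_n$ to the stationary measure $\pi_\alpha$, exactly as in the cited Euclidean argument.
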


\section{Numerical Results}

We present the results on the numerical performance of Riemannian stochastic approximation on a set of standard representative examples of nonsmooth data-fitting objective functions. We are going to report the results of performing the Retracted-SGD using three step-size rules: 
\begin{itemize}
    \item Diminishing step size Regime 1 : $s_0,~s_{k+1}=(1-cs_k)s_k$.
    \item Diminishing step size Regime 2 : $s_k=a/(1+k)^b$.
    \item Constant step size $s_k\equiv s$
\end{itemize}  
such that $a>0,1/2<b\le 1, s_0>0$ and $0<c<1$ are constant parameters and $k$ is the current iterate. 

For the experiments, we used an Asus notebook computer with a dedicated NVIDIA GeForce GT 820M graphics card with 1GB of GDDR3 discrete memory and a 6GB memory Corei7 processor using the \texttt{Pytorch} and \texttt{Pymanopt} libraries. Hyperparameters were optimized for the diminishing stepsize parameters using \texttt{optuna}. We performed $5$ runs for the experiments such that each run consists of $10^4$ iterations.

\subsection{Sparse PCA}
We seek the principal component vectors of a large-scale matrix $A$. The problem can be written formally as,
\begin{equation}\label{eq:sparsepca}
\begin{array}{rl} 
\min\limits_{X\in\mathcal{M}} & -\mathop{tr}(X^TA^T A X)+\rho \|X\|_1\\
& \mathcal{M}:=\{ X\in\mathbb{R}^{n\times p},\, X^T X = I_p\}
\end{array}
\end{equation}
To consider the problem as stochastic, at each iteration, we sample a subset of rows of $A$, i.e., 
\[
A = \mathbb{E}[A(\xi)]= n \begin{pmatrix} \mathbf{1}_{p}(1)a_1 \\ \mathbf{1}_{p}(2) a_2 \\ \cdots \end{pmatrix} 
\]
where with probability $1/n$ we sample $p\in [n]$.

We display the results for a random matrix $A$ of order $100\times100$. First, Let $s_0=1$, $a$=\texttt{10}, and different constant step sizes, for fixed values $c=$\texttt{1e-4} and $b=1$. In Figure \ref{Fig:SparsePCA0} we can see the monotonic decrease in the loss of the objective function.
\begin{figure}[h!] 
    \centering
    \includegraphics[scale=0.35]{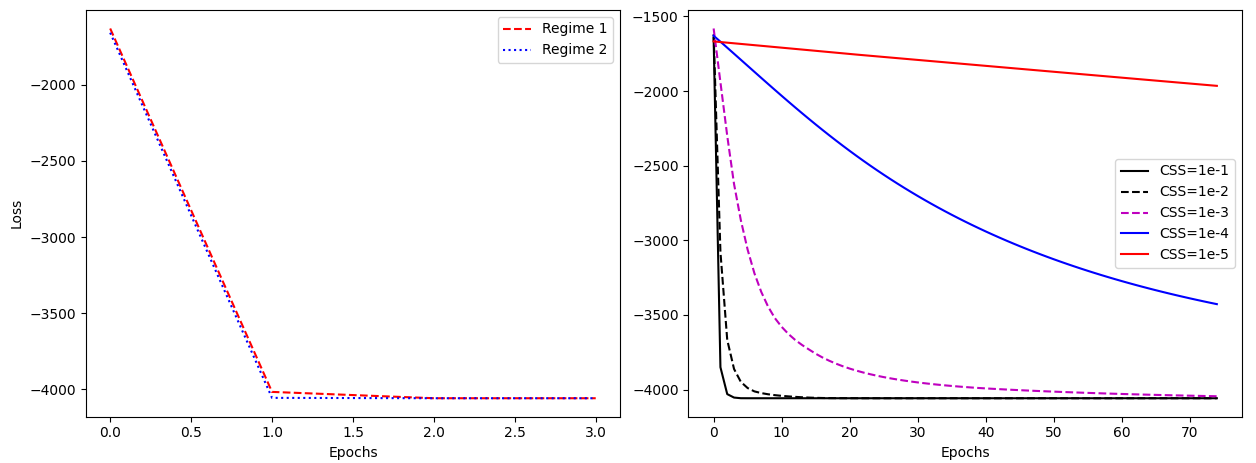}
    \caption{Diminishing~Step~Size,~$s_0 \leq\leq a$, and Constant Step Sizes.}
    \label{Fig:SparsePCA0}
\end{figure}

For numerical convergence of the methods, observe Figure \ref{Fig:SparsePCA1}: as the number of iterates ($x$-axis) increases the norm of the retracted gradients ($y$-axis) decreases to zero. Both methods are asymptotically convergent, with variable rates. 
\begin{itemize}
    \item Figure \ref{Fig:SparsePCA1} displays the norm of the computed retracted gradient in both methods. Regime 2 outperforms Regime 1 in the diminishing step size, and a large stepsize is warranted for this problem in the case of a constant stepsize.
\begin{figure}[h!]
    \centering
    \includegraphics[scale=0.4]{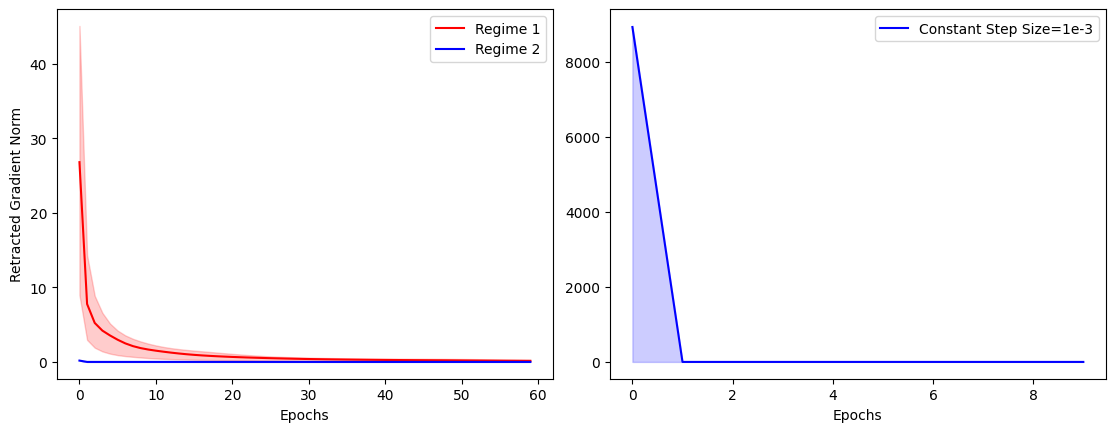}
    \caption{Diminishing~Step~Size,~$s_0 \leq\leq a$, and Constant Step Size.}
    \label{Fig:SparsePCA1}
\end{figure}
\item In Figure \ref{Fig:SparsePCA2}, we change $s_0$=\texttt{1e3}, $a$=\texttt{0.2}, for fixed values $c=$\texttt{1e-4} and $b=1$, we can see the differences compared to figure \ref{Fig:SparsePCA1}
\begin{figure}[h!]
    \centering
    \includegraphics[scale=0.5]{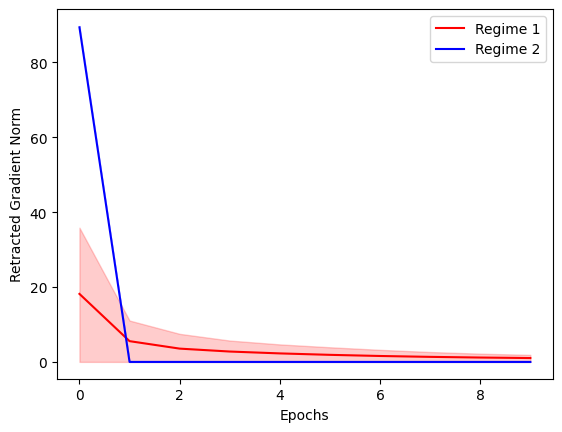}
    \caption{Changed Initial Values (Diminishing Step Size $s_0 \geq\geq a$)}
    \label{Fig:SparsePCA2}
\end{figure}
\item In Figure \ref{fig:SparsePCA3},  by considering a reduced dimension, we can see the empirical distribution of trajectories on the Stiefel manifold (p=1), for constant step sizes $1e-1$, $1e-2$ and $1e-4$, starting from the same initial point. We observe that the limiting behavior, as illustrated by the empirical distribution of end points, generally resembles the central limit ergodic stationary distribution with variance proportional to the stepsize.
\begin{figure}
    \centering
\includegraphics[scale=0.2]{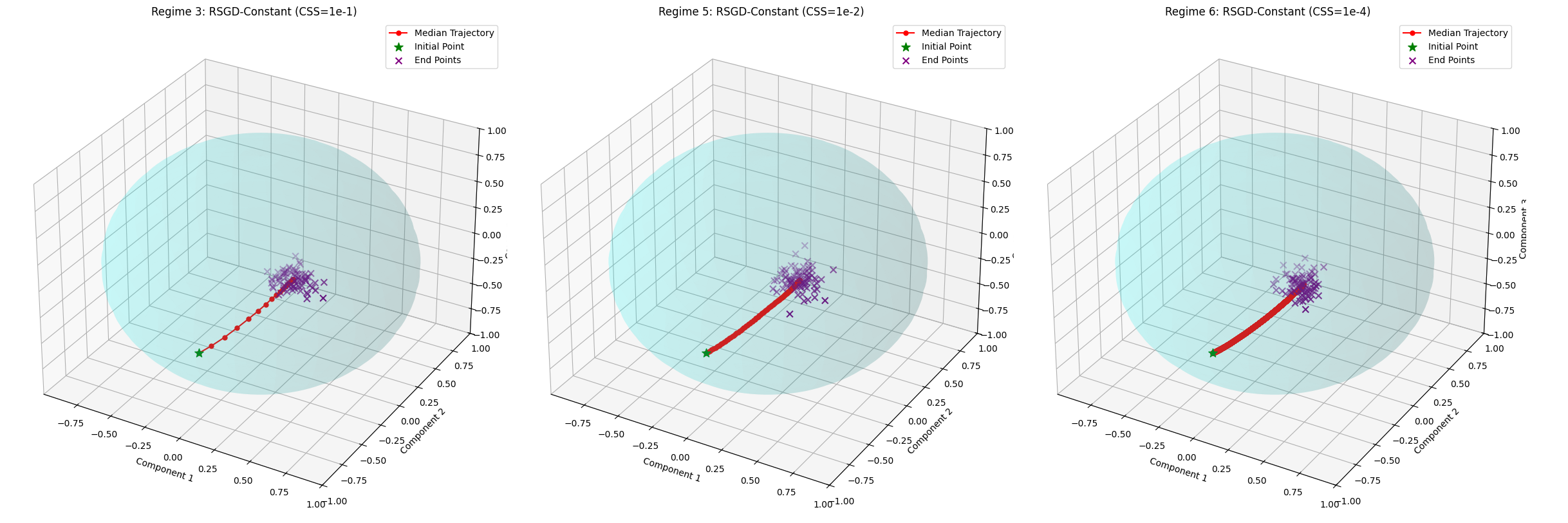}
    \caption{Distribution on The Stiefel Manifold}
    \label{fig:SparsePCA3}
\end{figure}

\end{itemize}
\newpage

\subsection{Low Rank Matrix Completion}
The low-rank Matrix Completion problem can be written as,
\begin{equation}\label{eq:matrixcompletion}
\begin{array}{rl} 
\min\limits_{X\in\mathcal{M}} & \sum\limits_{i,j}|A_{ij}-X_{ij}|\\
& \mathcal{M}:=\{ X\in\mathbb{R}^{m\times n},\, \mathop{rank}(X)=p\}
\end{array}
\end{equation}
To make the problem stochastic we consider that access to the components of $A$ is noisy, i.e., every evaluation of $A_{ij}$ is of the form $A_{ij}+\epsilon$ where $\epsilon\sim \mathcal{N}(0,\sigma)$ for a small $\sigma$.

We report the results of using Retracted-SGD with diminishing and constant step sizes. We consider a normalized random matrix of order $10^2\times 10^2$ with added noise $\epsilon$ where $\epsilon\in \mathcal{N}(0,\sigma I)$ and $\sigma=\texttt{1e-3}$.

In Figure \ref{Fig:LowRankMatrixCompletion0} we will show the decrease in the loss of objective function. Let $s_0=a$=\texttt{1e1}, and different constant step sizes, for fixed values $c=$\texttt{1e-4} and $b=1$.

\begin{figure}[h!]
    \centering
    \includegraphics[scale=0.4]{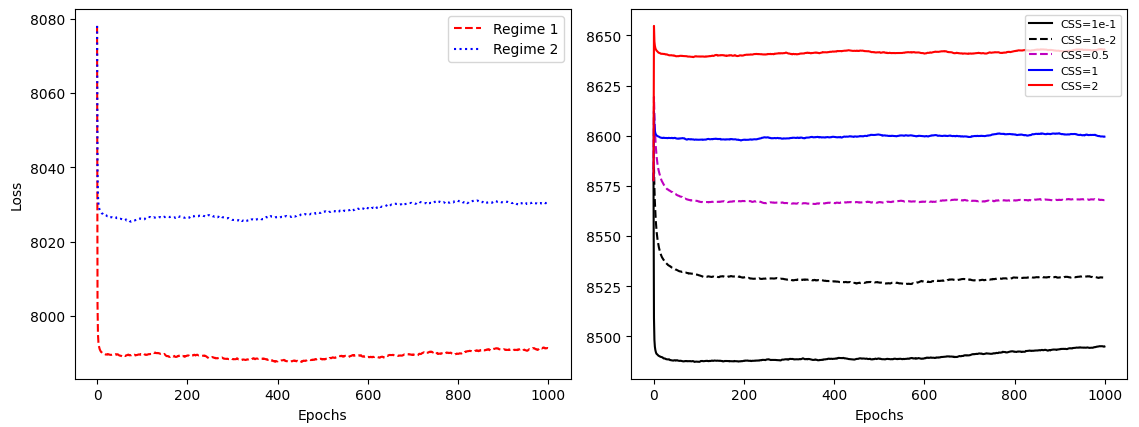}
    \caption{Diminishing Step Size, $s_0 \leq\leq a$, and Constant Step Sizes}
    \label{Fig:LowRankMatrixCompletion0}
\end{figure}

\begin{itemize}
    \item In Figure \ref{Fig:LowRankMatrixCompletion1}, we can see the value of the retracted gradient ($y$-axis) decreases as the number of iterations ($x$-axis) increases. 
\begin{figure}[h!]
    \centering
    \includegraphics[scale=0.4]{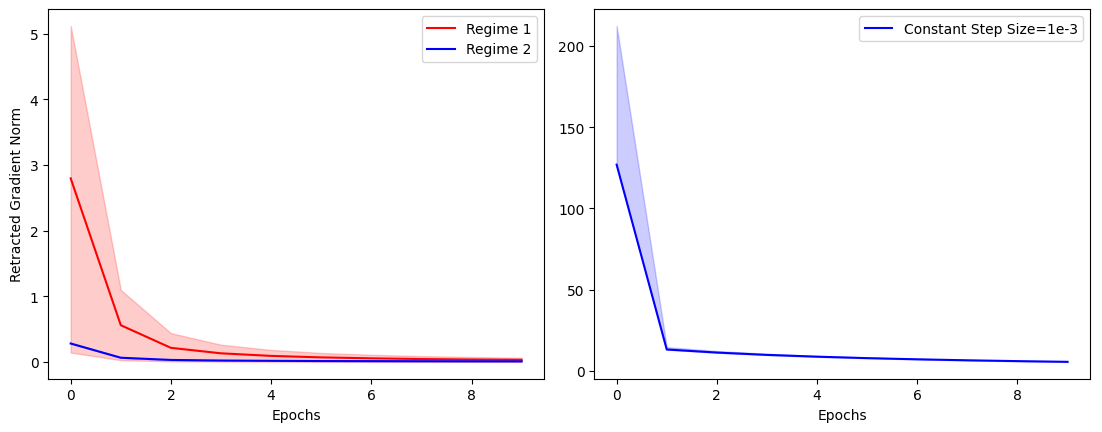}
    \caption{Diminishing Step Size, $s_0 \leq\leq a$, and Constant Step Sizes} 
    \label{Fig:LowRankMatrixCompletion1}
\end{figure}
\item In Figure \ref{Fig:LowRankMatrixCompletion2}, for diminishing step size, we set $s_0$=\texttt{1e2}, $a$=\texttt{1e-1} and fixed values $c=$\texttt{1e-4} and $b=1$. In comparison to Figure \ref{Fig:LowRankMatrixCompletion1}, regime 1 performs better than regime 2. We observe a rapid convergence for both regimes again.
\begin{figure}[h!]
    \centering
    \includegraphics[scale=0.5]{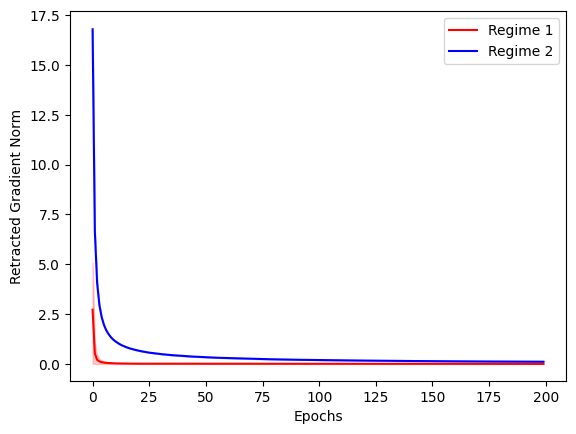}
    \caption{Changed Initial Values (Diminishing Step Size $s_0 \geq\geq a$)}
    \label{Fig:LowRankMatrixCompletion2}
\end{figure}
\end{itemize}

\newpage
\subsection{ReLU Neural Network with Batch Normalization}
Inspired from~\cite{hu2020brief}, we shall also consider the problem of training a neural network with batch normalization. We consider regression with a network composed of Rectified Linear Units (ReLUs), activation functions that take a component-wise maximum between the previous layer, linearly transformed, and zero. For additional nonsmoothness, we consider an $l1$ loss function. Batch Normalization amounts to scaling the weights at each layer to encourage stability in training. Formally,
\begin{equation}\label{eq:relubatchnorm}
\begin{array}{rl} 
\min\limits_{w\in\mathcal{M}} & \frac{1}{N}\sum\limits_{i=1}^N|\hat{y}(x_i;w)-y_i|\\
& \mathcal{M}:=\{ x\in\mathbb{S}^{n_1}\times \mathbb{S}^{n_2}\times\cdots \times \mathbb{S}^{n_L}\times \mathbb{R}^{n_o}\}
\end{array}
\end{equation}
where we have $N$ training examples $\{x_i,y_i\}$ and $\hat{y}$ is the neural network
model given the set of weights $w$, at each layer $j=1,...,L$, there are $n_j$ weights to be normalized, and there are a $n_0$ additional unnormalized weights.

We are reporting our results for $L=3$, therefore we will have a $3$-layers neural network. The number of nodes in the hidden layers is of order $3\times 3$. We have a LIBSVM regression data set,
 \texttt{"bodyfat"} in which $N=252$ and the number of features is $14$. 

In the Figure \ref{Fig:3 data sets-BatchNormalization (1)}, for a random batch size = $64$, we try to set optimal values, respectively,
$s_0=a\approx 0.052,~c\approx 0.00325$, $ b\approx 0.7$ and the constant step sizes as in the previous problems from 1e-1 to 1e-5. The loss of the neural network ($y$-axis) decreases as the number of iterates ($x$-axis) increases. We have the convergence in both methods.
\begin{center}
\includegraphics[scale=0.4]{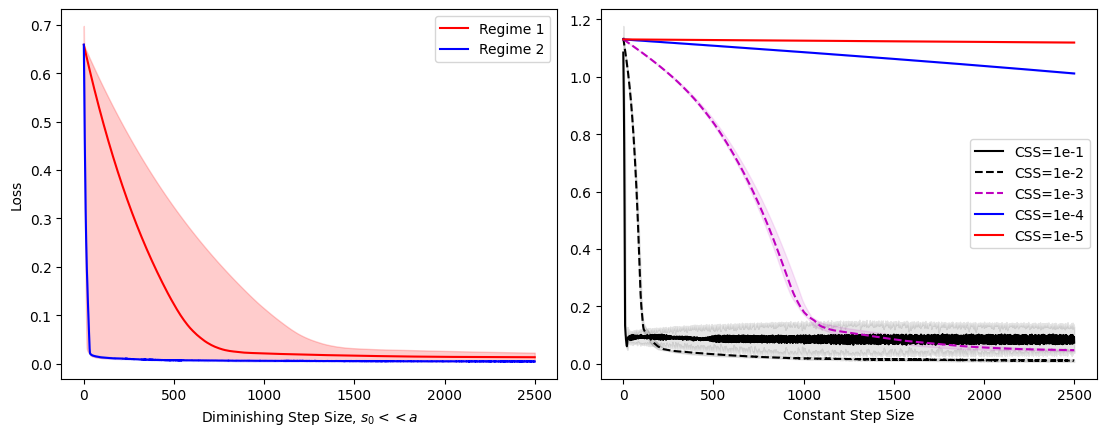}
\captionof{figure}{Methods Convergence and The Loss of Objective Function }
\label{Fig:3 data sets-BatchNormalization (1)}
\end{center}

The method's efficiency is sensitive to initial parameter values. In Figure \ref{Fig:3 data sets-BatchNormalization (2)}, we have again batch size = 64,  but nonoptimal values $s_0=\texttt{1e1},~a=\texttt{1e3}, ~c=\texttt{1e-4}$,  $b=1$, for diminishing step size. One can see that the methods depend on parameters by comparing  \ref{Fig:3 data sets-BatchNormalization (1)}.
\begin{center}
\includegraphics[scale=0.5]{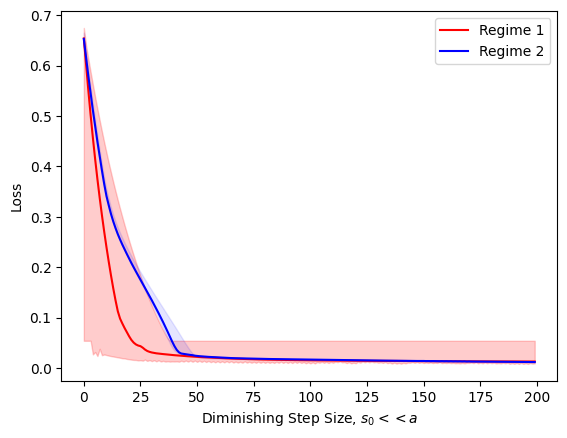}
\captionof{figure}{The Regimes' Sensitivity To The Parameters' Initial Values.}
\label{Fig:3 data sets-BatchNormalization (2)}
\end{center}

\section{Discussion and Conclusions}
We presented a foundational development of nonsmooth stochastic approximation for problems constrained on Riemannian manifolds. We presented background on Riemannian geometry, Riemannian probability and measure theory, tame functions, and SGD sequences on manifolds. There are a number of intuitive algorithmic features that are present in the Euclidean case that are not transferable to manifolds. We hope that the readers have appreciated as us the rich interplay of multiple deep mathematical domains in the problem

We presented, and proved in the Appendix, convergence to almost sure solutions of the asymptotic pseudotrajectory defined by the iterates as well as ergodicity and weak convergence of trajectories for constant stepsize SGD. We hope that the work will encourage the development of new methods, possibly using the manifold structure to improve convergence properties. The proof techniques illustrate the key important adjustment between the Euclidean and Riemannian case: 1) In the former, (sub)gradient directions and vector differences between points lie in the same space, 2) in the latter, the corresponding cotangent bundles and geodesic paths are completely distinct. 

The numerical results indicated no fundamental problem as far as seeing the usual convergence properties of stochastic subgradient methods. Future work could be to understand the interplay between the manifold curvature and convergence properties, investigate the consequences of challenging problem assumptions, and the consideration of more adaptive algorithms as well as other contemporary ML methods (e.g., momentum methods and Adam).

\bibliographystyle{plain}
\bibliography{refs.bib}

\section{Appendix: Proofs}
In this Appendix, we provide the proofs of all the results in the main text.

\subsection{Preliminary Results}\label{app:proof_prel}
The proofs of Theorems \ref{th:chain}, \ref{th:VarStrat} and \ref{th:sard} are trivially lifted from the references to the Riemannian setting, and we do not include them here.

\begin{proof}\textbf{of Theorem} \ref{th:gradEverywhere}:
    We follow the procedure in \cite{bolte2021conservative}. Fix a measurable selection $a:\CM\to T^*\CM$ of $D$ and a potential $f$ of $D$. Further, fix a point $x\in\CM$, a vector $v\in T_x\CM$ and let $\gamma(t)\,:\,[0,1]\to\CM$ be the curved defined by $\gamma(0)=x$ and $\dot\gamma(0)=v$. 

    The definition of a potential function gives us
    \begin{equation}
        f(\gamma(1))-f(x)=\int_0^1\langle a(\gamma(t)),\dot\gamma(t)\rangle\rd t,
    \end{equation}
    giving us $\rd f_x(v)=\lim_{t\to 0}\frac{f(\gamma(t))-f(x)}{t} =\langle a,v\rangle$ almost everywhere along $\gamma(t)$, since Rademacher's theorem, see \cite[Thm. 5.7]{azagra2005nonsmooth}, says that $f$ is differentiable almost everywhere. 


    Introduce the Dini derivatives \cite{ferreira2008dini}
    \begin{equation}
        \begin{aligned}
            &f'_u(y;v)\coloneqq \limsup_{t\to 0^+}\frac{f(\gamma(t))-f(x)}{t},\\
            &f'_l(y;v)\coloneqq \liminf_{t\to 0^+}\frac{f(\gamma(t))-f(x)}{t}.
        \end{aligned}
    \end{equation}
    Since $f$ is measurable, $f'_u$ and $f'_l$ are as well. Now, consider the set
    \begin{equation}
        A=\{y\in\CM\,:\, f'_u(y;v)\neq \langle a(y),v\rangle\text{ or } f'_l(y;v)\neq \langle a(y),v\rangle\}.
    \end{equation}
    This set is measurable and for $y\in\CM\backslash A$ and $v\in T_y\CM$ we have
    \begin{equation}
        \rd f_y(v)=f'_u(y;v)=f'_l(y;v)=\langle a(y),v\rangle.
    \end{equation}
    Furthermore, $\lambda(A\cap \gamma(I))=0$, with $I=[0,1]$. Fubini's theorem then tells us that $\lambda(A)=0$. Both $x$ and $v$ were chosen at random and we can run the same argument for any $x\in\CM$ and any $v\in T_x\CM$, then we see that $\rd f_y=a(y)$ for almost all $y\in\CM$.

    Furthermore, the selection $a$ was chosen arbitrarily and Corollary 18.15 of \cite{aliprantisborder} then tells us that there exists a sequence of measurable selections $(a_k)_{k\in\BN}$ of $D$ such that for any $x\in\CM$ $D(x)=\overline{\{a_k(x)\}}_{k\in\BN}$. Rademacher again tells us that there exists a sequence of measurable sets $(S_k)_{k\in\BN}$ with full measure and such that $a_k=\rd f$ on each $S_k$. Setting $S=\bigcap_{k\in\BN}S_k$ we have that $\CM\backslash S$ has zero measure and this implies that $D=\{\rd f\}$ on $S$. 
    
\end{proof}

\subsection{Proof of Theorem \ref{th:convergence}}\label{sec:appthmhdim}

For later reference, let us state the following definitions.

\begin{definition}[Attracting set, \cite{benaim2005stochastic}]
    Given a closed invariant set $L$, the induced set-valued dynamical system $\Phi^L$ is the family of set-valued mappings $\Phi_L=\{\Phi_t^L\}_{t\in\BR}$ defined by
    \begin{equation}
        \begin{aligned}
            \Phi_t^L(x)\coloneqq \{& \gamma(t)\,:\, \gamma(t) \text{ is a solution of \eqref{eq:diffInclusion} with }\\
            &\gamma(0)=x \text{ and } \gamma(\BR)\subset L\}.
        \end{aligned} 
    \end{equation}
    A compact subset $A\subset L$ is called an \emph{attracting set} for $\Phi^L$, provided that there is a neighborhood $U$ of $A$ in $L$ with the property that for any $\varepsilon>0$ there exists $t_\varepsilon>0$ such that
    \begin{equation}
        \Phi_t^L(U)\subset \bigcup_{x\in A}B(x,\varepsilon),\quad \forall t>t_\varepsilon.
    \end{equation}
    If $A$ is an invariant set, then $A$ is called an \emph{attractor} for $\Phi^L$. Note that, an attracting set (or attractor) for $\Phi$ is an attracting set (or attractor) for $\Phi^L$ with $L=\CM$. If $A\neq L,\,\varnothing$, then $A$ is called a \emph{proper} attracting set (or attractor) for $\Phi^L$. 
    Finally, the set $U$ is referred to as a \emph{fundamental neighborhood} of $A$ for $\Phi^L$.
\end{definition}

\begin{definition}[Asymptotic stability, \cite{benaim2005stochastic}]
    A set $A\subset L$ is called \emph{asymptotically stable} for $\Phi^L$ if it satisfies the following:
    \begin{enumerate}
        \item $A$ is invariant;
        \item $A$ is Lyapunov stable, meaning that for every neighborhood $U$ of $A$ there exists a neighborhood $V$ of $A$ such that $\Phi_{[0,\infty)}(V)\subset U$;
        \item $A$ is attractive, i.e., there exists a neighborhood $U$ of $A$, such that for any $x\in U$ we have $\omega_\Phi(x)\subset A$. 
    \end{enumerate}
\end{definition}

The set
\begin{equation}
    \omega_\Phi(x)\coloneqq \bigcap_{t\geq 0}\overline{\Phi_{[t,\infty)}(x)}
\end{equation}
is the $\omega$-limit set of a point $x\in\CM$.  

We now have the following results from \cite{benaim2005stochastic}:
    \begin{theorem}[Cf. Theorem 4.1 in \cite{benaim2005stochastic}]\label{th:APT}
    Assume $\zeta$ is bounded. If $\zeta$ is uniformly continuous and any limit point of $\{\Theta^t(\zeta)\}$ is in $\mathcal{S}$, then $\zeta$ is an APT for $\Phi$. 

    Here, the set $\{\Theta^t,\,t\geq 0\}$ is relatively compact. 
\end{theorem}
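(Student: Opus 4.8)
The plan is to adapt the proof of Theorem 4.1 of \cite{benaim2005stochastic} to the manifold metric $\bd$, the only genuine changes being (i) replacing the Euclidean norm by $\bd$ in the equicontinuity and Arzel\`a--Ascoli arguments and (ii) comparing solutions of the inclusion intrinsically rather than by vector subtraction.

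First I would establish the auxiliary claim (the last sentence of the statement) that $\{\Theta^t(\zeta):t\ge 0\}$ is relatively compact in $(C(\BR,\CM),\bd_C)$. Since $\zeta$ is bounded, geodesic completeness and Hopf--Rinow give a compact $\CK\subset\CM$ with $\zeta(\BR)\subset\CK$, and then $\Theta^t(\zeta)(\BR)\subset\CK$ for every $t\ge 0$. Uniform continuity of $\zeta$ makes the family $\{\Theta^t(\zeta)\}_{t\ge0}$ uniformly equicontinuous, because the modulus of continuity is translation invariant. Applying the Arzel\`a--Ascoli theorem on each interval $[-k,k]$ and extracting diagonally, every sequence in the family has a $\bd_C$-convergent subsequence whose limit is $\CK$-valued; this proves relative compactness, and hence also that $\{\Theta^t:t\ge 0\}$ is relatively compact.

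Next I would argue by contradiction: if $\zeta$ is not an APT for $\Phi$, there are $\varepsilon>0$ and $t_j\to\infty$ with $\bd_C(\Theta^{t_j}(\zeta),\mathcal{S}_{\zeta(t_j)})\ge\varepsilon$. By the relative compactness just shown, pass to a subsequence with $\Theta^{t_j}(\zeta)\to\zeta^\ast$ in $\bd_C$. By hypothesis every limit point of $\{\Theta^t(\zeta)\}$ lies in $\mathcal{S}$, so $\zeta^\ast$ solves \eqref{eq:diffInclusion}; using the translation structure of $\Theta$, its restriction to $[0,\infty)$ lies in $\mathcal{S}_{\zeta^\ast(0)}$, and $\zeta^\ast(0)=\lim_j\zeta(t_j)$. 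It then remains to show $\bd_C(\Theta^{t_j}(\zeta),\mathcal{S}_{\zeta(t_j)})\to 0$, which contradicts the lower bound $\varepsilon$. Fix $T>0$ and $\delta>0$. Since $G=-\iota(\mathrm{conv}(D))$ is upper semicontinuous with nonempty compact convex values and all relevant trajectories stay in $\CK$, the solution-set map $x\mapsto\mathcal{S}_x$ is upper semicontinuous on $\CK$ in the $C([0,T],\CM)$ topology --- equivalently, property 4 of $\Phi$ (the map $(x,t)\mapsto\Phi_t(x)$ has closed graph with compact values) together with the local existence theorem, Theorem 6.2 of \cite{Ledyaev2007Nonsmooth}, and a Filippov-type compactness argument for differential inclusions gives solutions $\gamma_j\in\mathcal{S}_{\zeta(t_j)}$ with $\sup_{0\le s\le T}\bd(\gamma_j(s),\zeta^\ast(s))<\delta$ for $j$ large. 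Combining with $\sup_{0\le s\le T}\bd(\Theta^{t_j}(\zeta)(s),\zeta^\ast(s))<\delta$ (valid for large $j$ by $\bd_C$ convergence) and letting $\delta\to0$, $T\to\infty$ yields $\bd_C(\Theta^{t_j}(\zeta),\mathcal{S}_{\zeta(t_j)})\to 0$, the desired contradiction.

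I expect the main obstacle to be the upper-semicontinuity/continuous-dependence statement for $x\mapsto\mathcal{S}_x$ on the manifold: one must localize to the compact set $\CK$, cover it by finitely many charts (or work intrinsically using parallel transport and the uniform bounds on $G$ over $\CK$), and check that the standard Euclidean compactness-of-solutions argument survives the pullback. This is precisely the point where cotangent directions (the values of $D$) and geodesic displacements must be kept separate, as flagged in the paper's discussion of the Riemannian versus Euclidean dichotomy; everything else is a routine transcription of \cite{benaim2005stochastic}.
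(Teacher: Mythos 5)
Your relative-compactness paragraph (boundedness plus Hopf--Rinow, uniform equicontinuity of the translates, Arzel\`a--Ascoli with a diagonal extraction) is exactly what the paper's own proof consists of. The problem is the bridging step you add afterwards. Having extracted $\Theta^{t_j}(\zeta)\to\zeta^\ast\in\mathcal{S}$, you claim to produce solutions $\gamma_j$ with $\gamma_j(0)=\zeta(t_j)$ \emph{exactly} and $\sup_{0\le s\le T}\bd(\gamma_j(s),\zeta^\ast(s))<\delta$, citing upper semicontinuity of $G$, the closed-graph/compact-values property of $(x,t)\mapsto\Phi_t(x)$, local existence, and a Filippov-type compactness argument. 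Those facts give \emph{upper} semicontinuity of the solution funnel (limits of solutions from nearby initial points are solutions from the limit point, i.e. $\limsup_j\mathcal{S}_{\zeta(t_j)}\subset\mathcal{S}_{\zeta^\ast(0)}$); what you need is the \emph{lower} semicontinuity statement that the particular solution $\zeta^\ast$ can be shadowed by solutions started at the perturbed points $\zeta(t_j)$, and that is false in general for upper semicontinuous convex compact valued inclusions — including the subgradient inclusions this paper cares about. Take $\CM=\BR$, $f(x)=-|x|$, so $G=-\mathrm{conv}(\partial f)$ equals $\{1\}$ for $x>0$, $\{-1\}$ for $x<0$, $[-1,1]$ at $0$, and take $\zeta(t)=e^{-t}$: it is bounded, uniformly continuous, and the only limit point of its translates is the constant solution $0\in\mathcal{S}$, yet the unique forward solution from $\zeta(t_j)>0$ is $s\mapsto e^{-t_j}+s$, which does not stay near $\zeta^\ast\equiv0$. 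So the step as you wrote it cannot be justified, and read literally with the pinned set $\mathcal{S}_{\zeta(t)}$ in \eqref{eq:APTdef} the conclusion itself fails for this example.

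The repair is not a stronger continuous-dependence lemma but the correct reading of the APT property, which is the one \cite{benaim2005stochastic} uses and the one the paper's one-line proof implicitly invokes: measure the $\bd_C$-distance of $\Theta^t(\zeta)$ to the set $\mathcal{S}$ of all solutions (closeness in $\bd_C$ already forces the initial point of the approximating solution to be near $\Theta^t\zeta(0)=\zeta(t)$, so nothing is lost). With that reading your contradiction argument terminates one step earlier: $\bd_C(\Theta^{t_j}\zeta,\mathcal{S})\le\bd_C(\Theta^{t_j}\zeta,\zeta^\ast)\to0$, and relative compactness together with the limit-point hypothesis is the entire proof — which is what the paper records, if tersely. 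In short, your route coincides with the paper's up to the last step; that last step should be deleted rather than proved, because as stated it is a lower-semicontinuity claim that is genuinely false.
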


\begin{proof}
    The proof follows \cite{benaim2005stochastic}. If $\zeta$ is uniformly continuous, the family of functions $\{ \Theta^t(\zeta)\,:\,t\geq 0\}$ is equicontinuous and the Arzel\`a-Ascoli theorem tells us that the family is relatively compact, and \eqref{eq:APTdef} follows. 
    

\end{proof}

\begin{theorem}[Theorem 4.2 in \cite{benaim2005stochastic}]\label{th:pert-APT}
    Any bounded perturbed solution $\tilde\gamma$ is an APT of the differential inclusion \eqref{eq:diffInclusion}.
\end{theorem}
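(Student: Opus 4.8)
\textbf{Proof plan for Theorem \ref{th:pert-APT}.}

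The plan is to follow the strategy of Theorem 4.2 in \cite{benaim2005stochastic}, adapted to the manifold setting, by showing that a bounded perturbed solution $\tilde\gamma$ satisfies the hypotheses of Theorem \ref{th:APT}: namely, that $\tilde\gamma$ is uniformly continuous and that every limit point of the translates $\{\Theta^t(\tilde\gamma)\}$ is an honest solution in $\mathcal{S}$ of the differential inclusion \eqref{eq:diffInclusion}. Once these two facts are in place, Theorem \ref{th:APT} immediately gives that $\tilde\gamma$ is an APT for $\Phi$.

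First I would establish uniform continuity. Since $\tilde\gamma$ is a perturbed solution, the defining inclusion $P^{u_t}_{\tilde\gamma(t)u_t(s)}\dot{\tilde\gamma}(t)-\dot u_t(s)\in G^{\delta(t)}(\tilde\gamma(t))$, combined with local boundedness of $G=-\iota(\mathrm{conv}(D))$ on the bounded set $\overline{\{\tilde\gamma(s):s\geq 0\}}$ and the integrability/vanishing conditions on the $\{u_t\}$ family, gives an essential bound $\|\dot{\tilde\gamma}(t)\|\leq M$ for a.e.\ $t$; integrating the norm of the velocity along geodesics then yields $\bd(\tilde\gamma(t),\tilde\gamma(t'))\leq M|t-t'|$, so $\tilde\gamma$ is (uniformly) Lipschitz, hence uniformly continuous. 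Boundedness of $\tilde\gamma$ is assumed, so Arzelà--Ascoli (in the $C(\BR,\CM,\bd_C)$ topology) gives relative compactness of $\{\Theta^t(\tilde\gamma):t\geq 0\}$, so limit points exist.

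The substantive step is to show any such limit point $\gamma^\infty = \lim_{j}\Theta^{t_j}(\tilde\gamma)$ (uniform convergence on compacts, $t_j\to\infty$) solves $\dot\gamma^\infty(t)\in G(\gamma^\infty(t))$ a.e. Here I would mirror Bena\"im--Hofbauer--Sorin: write $\tilde\gamma(t_j+\cdot)-(\text{accumulated perturbation})$ as an integral of velocities, use that the perturbation terms vanish as $t_j\to\infty$ by condition 2(b) of the perturbed-solution definition (the $\sup_{0\le v\le T}\int_{s_1}^{s_1+v}\int_0^{T_t}\|\dot u_t(s)\|\,\rd s\,\rd t\to 0$ property), and pass to the limit using upper semicontinuity and convexity of $G$ together with the fact that $\delta(t)\to 0$ so that $G^{\delta(t)}(\tilde\gamma(t))$ shrinks to $G$. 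The measurable-selection / weak-$L^1$ closure argument (velocities are uniformly bounded, hence a subsequence converges weakly in $L^1$ on each $[-k,k]$, and the limit lies in $G(\gamma^\infty(\cdot))$ a.e.\ by Mazur's lemma and closedness of the graph of $G$) is the standard way to conclude.

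The main obstacle is precisely this last limiting argument, because on a manifold the velocities $\dot{\tilde\gamma}(t_j+s)$ live in different tangent spaces for different $j$ and $s$, so ``weak $L^1$ convergence of velocities'' is not literally meaningful without transporting everything to a common space. The plan is to handle this by working in a single chart (or a finite atlas) around the compact set $\overline{\{\tilde\gamma(s):s\ge 0\}}$ and using parallel transport $P_{xz}$ to compare tangent vectors, exactly as the perturbed-solution definition already does via $P^{u_t}_{\tilde\gamma(t)u_t(s)}$; the continuity of parallel transport and of $\iota$, plus the uniform bound on $\|\dot{\tilde\gamma}\|$, let the Euclidean-style closure argument go through locally, and a patching argument over the (finitely many needed) charts finishes it. The remaining conditions of Theorem \ref{th:APT} are then satisfied and the conclusion follows.
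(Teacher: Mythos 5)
Your plan follows essentially the same route as the paper's proof: reduce to Theorem \ref{th:APT} by deriving uniform continuity of $\tilde\gamma$ from the boundedness of the perturbed velocities $P^{u_t}_{\tilde\gamma(t)u_t(s)}\dot{\tilde\gamma}(t)-\dot u_t(s)\in G^{\delta(t)}(\tilde\gamma(t))$ plus the vanishing $u_t$ contributions, invoke Arzel\`a--Ascoli for relative compactness of the translates, and then identify each limit point as a solution of \eqref{eq:diffInclusion} via a weak-compactness argument (the paper uses Banach--Alaoglu in $L^2$ where you propose weak $L^1$, an immaterial difference for uniformly bounded velocities) combined with Mazur's lemma, closedness/convexity of $G$, and $\delta(t)\to 0$. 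Your explicit chart/parallel-transport treatment of velocities living in different tangent spaces is a point the paper leaves implicit, but it does not change the structure of the argument.
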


\begin{proof}
    We prove that $\tilde\gamma$ satisfies the assumptions of Thm. \ref{th:APT}. To this end, set $v(t,s)=P^{u_t}_{\tilde\gamma(t)u_t(s)}\dot{\tilde\gamma}(t)-\dot u_t(s)\in G^{\delta(t)}(\tilde\gamma(t))$. Then, 
    \begin{equation}\label{eq:t1plust2}
        \begin{aligned}
            &\bd(\tilde\gamma(t_1+t_2),\tilde\gamma(t_1))\\
            =& \int_{T_{t_1}}^{T_{t_1+t_2}}\int_0^{t_2} \norm{v(t_2+\tau,s)}_{g_{u_{t_2+\tau}(s)}}\rd \tau\rd s\\
            &+\int_{T_{t_1}}^{T_{t_1+t_2}}\int_{t_1}^{t_1+t_2}\norm{\dot u_\tau(s)}_{g_{u_\tau(s)}}\rd \tau\rd s.
        \end{aligned}
    \end{equation}
    By the defining properties of $u_t$ the second integral goes to zero as $t\to\infty$. The boundedness of $\tilde\gamma$, $\tilde\gamma(\BR)\in A$ for compact $A$ then implies the boundedness of $v$ and we thus see that $\tilde\gamma$ is uniformly continuous. Thus, $\Theta^t(\tilde\gamma)$ is equicontinuous, and relatively compact. Let $\zeta=\lim_{n\to\infty}\Theta^{t_n}(\tilde\gamma)$ be a limit point, set $t_1=r_n$ above \eqref{eq:t1plust2} and define $v_n(t_2,s)=v(r_n+t_2,s)$. Again, by the defining properties of $u_t$ the second integral in \eqref{eq:t1plust2} will vanish uniformly when $n\to\infty$. Hence, 
    \begin{equation}
        \bd(\zeta(t_2),\zeta(0))=\lim_{n\to\infty}\int_{T_{r_n}}^{T_{r_n}+T_{t_2}} \int_0^{t_2}\norm{v_n(\tau)}_{g_{u_{r_n+\tau}(s)}}\rd\tau\rd s.
    \end{equation}
\end{proof}
Since $(v_n)$ is uniformly bounded, Banach-Alaoglu tells us that a subsequence of $v_n$ will converge weakly in $L^2[0,t_2]$ to some function $v$ with $v(t)\in G(\zeta(t))$, for almost any $t$, since $v_n(t)\in G^{\delta(t+r_n)}(\tilde \gamma(t+r_n))$ for every $t$. Now, a convex combination of $\{v_m,\,m>n\}$ converges almost surely to $v$, by Mazur's lemma, and $\lim_{m\to\infty} \text{conv}\left(\bigcup_{n\geq m} G^{\delta(t+r_n)}(\tilde\gamma(t+r_n))\right)\subset G(\zeta(t))$. We thus have that $\bd(\zeta(t_2),\zeta(0))=\int_{T_{t_1}}^{T_{t_1}+T_{t_2}}\int_0^{t_2}\norm{v(\tau,s)}_{g_{u_\tau}(s)} \rd\tau\rd s$. This proves that $\zeta$ is a solution of \eqref{eq:diffInclusion} and that $\zeta\in \widehat{\mathcal{S}}_A$.

\begin{theorem}[Theorem 4.3 in \cite{benaim2005stochastic}]\label{th:ict}
    Let $\zeta$ be a bounded APT of \eqref{eq:diffInclusion}. Then $L(\zeta)$ is internally chain transitive. 
\end{theorem}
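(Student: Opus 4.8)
The plan is to follow the proof of Benaim, Hofbauer and Sorin (\cite{benaim2005stochastic}, Theorem 4.3) essentially verbatim, replacing the Euclidean norm by the Riemannian distance $\bd$ throughout: once Theorems~\ref{th:APT} and~\ref{th:pert-APT} are available and $\Phi$ is recognized as a genuine set-valued flow on the metric space $(\CM,\bd)$ with the four properties listed above, the argument becomes purely metric and the Conley-theoretic machinery transfers without change. Concretely I would proceed in three stages: (i) establish the topological structure of $L(\zeta)$; (ii) recall the Conley-type characterization of internal chain transitivity; (iii) rule out proper attractors using the tracking property of the APT.

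For stage (i): since $\zeta$ is bounded and, by the geodesic completeness assumption, closed bounded subsets of $\CM$ are compact (Hopf--Rinow), the set $L(\zeta)=\bigcap_{t\ge 0}\overline{\{\zeta(s):s\ge t\}}$ is a decreasing intersection of nonempty compact connected sets (each $\overline{\{\zeta(s):s\ge t\}}$ is connected, being the closure of the continuous image of an interval), hence nonempty, compact and connected. Invariance of $L(\zeta)$ under $\Phi$ is obtained as follows: if $x=\lim_n\zeta(t_n)\in L(\zeta)$, then by the equicontinuity established in Theorem~\ref{th:APT} the family $\{\Theta^t(\zeta)\}$ is relatively compact in $(C(\BR,\CM),\bd_C)$, so along a subsequence $\Theta^{t_n}(\zeta)\to\gamma$; the APT property \eqref{eq:APTdef} forces $\gamma\in\mathcal{S}$ with $\gamma(0)=x$, and plainly $\gamma(\BR_+)\subset L(\zeta)$, which together with the flow properties of $\Phi$ (in particular property 3 above) yields $\Phi_t(x)\subset L(\zeta)$ for every $t\in\BR$.

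For stages (ii)--(iii): recall (Conley theory for set-valued flows, carried over to the metric space $(\CM,\bd)$) that a nonempty compact invariant set $L$ is internally chain transitive if and only if the restricted dynamical system $\Phi^L$ admits no attractor $A$ with $\varnothing\ne A\subsetneq L$. It therefore suffices to exclude proper attractors of $\Phi^{L(\zeta)}$. Suppose $A$ were one, with fundamental neighborhood $U\subset L(\zeta)$ and associated attracting-time estimate; thicken $U$ to an open $\widehat U\subset\CM$ with $\widehat U\cap L(\zeta)\subset U$. Since $A\subset L(\zeta)$ is nonempty, $\zeta(t)$ comes within any prescribed distance of $A$ for arbitrarily large $t$; using that $\zeta$ shadows genuine solutions of \eqref{eq:diffInclusion} over windows longer than the attracting time, together with the uniform attraction of $A$, one shows that once $\zeta$ enters a sufficiently small neighborhood of $A$ it never leaves $\widehat U$, so $L(\zeta)\subset\overline{\widehat U}$ and, pushing forward by the flow and using that $A$ attracts $U$, $L(\zeta)\subset A$, contradicting $A\subsetneq L(\zeta)$. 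Hence $\Phi^{L(\zeta)}$ has no proper attractor and $L(\zeta)$ is internally chain transitive.

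I expect the main obstacle to be the rigorous execution of stage (iii): one must carefully quantify ``$\zeta$ shadows a solution of \eqref{eq:diffInclusion} over a window of length $T$ to within $\varepsilon$'' in terms of $\bd_C$ and $\bd$ rather than a Euclidean norm, thicken the fundamental neighborhood $U$ (a subset of $L(\zeta)$ carrying the subspace topology) to a neighborhood in $\CM$ in a compatible way, and repeatedly invoke Hopf--Rinow to extract convergent subsequences of solution curves. The Conley characterization invoked in stage (ii) also has to be stated for set-valued flows on a Riemannian metric space, but since $\Phi$ satisfies exactly the abstract properties its classical proof requires, this is bookkeeping rather than a new idea.
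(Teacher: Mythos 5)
Your route is genuinely different from the paper's. The paper (following Bena\"im--Hofbauer--Sorin) does not touch attractors at all: it works with the translation flow $\Theta$ on $(C(\BR,\CM),\bd_C)$, notes that $\{\Theta^t(\zeta):t\ge 0\}$ is relatively compact so that the $\omega$-limit set $\omega_\Theta(\zeta)$ of this genuine (single-valued) flow is internally chain transitive by classical Conley theory, observes from the APT property that $\omega_\Theta(\zeta)\subset\mathcal{S}$, and then pushes everything down through the evaluation map $\Pi(\gamma)=\gamma(0)$, which sends $(\varepsilon,T)$-chains for $\Theta$ to $(\varepsilon,T)$-chains for $\Phi$ and $\omega_\Theta(\zeta)$ onto $L(\zeta)$. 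That lift-and-project argument is short precisely because it avoids proving anything about attractors of the set-valued flow; your plan instead follows the Bena\"im--Hirsch style characterization (ICT iff $\Phi^{L(\zeta)}$ has no proper attracting set) plus an exclusion argument, which is legitimate in principle but puts all the weight on stage (iii).

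And stage (iii), as you have sketched it, has a genuine gap beyond ``metric bookkeeping.'' The attracting-time estimate you invoke holds only for the induced system $\Phi^{L(\zeta)}$, i.e.\ for solutions of \eqref{eq:diffInclusion} that remain in $L(\zeta)$ for all time; but the APT property only tells you that $\zeta$ shadows \emph{arbitrary} solutions of the inclusion started near $L(\zeta)$, and those solutions need not stay in $L(\zeta)$, so the uniform attraction of $A$ cannot be applied to them directly. Closing this requires an extra step -- e.g.\ an upper-semicontinuity/compactness argument showing that, over a fixed finite window $[0,T]$, solutions starting $\delta$-close to $L(\zeta)$ stay $\varepsilon$-close to solutions lying in $L(\zeta)$ -- and that transfer is the actual core of the attractor-exclusion proof, not an afterthought. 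A smaller but real inaccuracy is in stage (i): the conclusion $\Phi_t(x)\subset L(\zeta)$ for all $t$ is too strong and false in general for differential inclusions (some solutions through $x$ may leave $L(\zeta)$); what is true, and all you need, is the weak invariance that the limit curve $\gamma$ you construct already provides, namely a complete solution through $x$ contained in $L(\zeta)$. If you repair these two points (or simply switch to the paper's translation-flow argument), the rest of your outline is sound.
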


\begin{proof}
    The proof is more or less identical to that of \cite{benaim2005stochastic}. The set $\{\Theta^t(\zeta)\,:\,t\geq 0\}$ is relatively compact, and the $\omega$-limit set of $\zeta$ for the flow $\Theta$,
    \begin{equation}
        \omega_\Theta(\zeta)=\bigcap_{t\geq 0}\overline{\{\Theta^s(\zeta)\,:\,s\geq t\}},
    \end{equation}
    is therefore internally chain transitive. From \eqref{eq:APTdef} we know that $\omega_\Theta(\zeta)\subset \mathcal{S}$.

    Let $\Pi\,:\,(C^0(\BR,\CM),\bd_C)\to(\CM,\bd(\cdot,\cdot))$ be the projection map defined by $\Pi(\zeta)=\zeta(0)$. This gives $\Pi(\omega_\Theta(\zeta))=L(\zeta)$. Now, set $p=\lim_{n\to\infty}\zeta(t_n)$ and $w$ a limit point of $\Theta^{t_n}(\zeta)$, then $w\in \omega_\Theta(\zeta)$ and $\Pi(w)=p$. This implies that $L(\zeta)$ is nonempty, compact and invariant under $\Phi$, since $\omega_\Theta(\zeta)\subset \mathcal{S}$. The projection $\Pi$ has Lipschitz constant one and maps every $(\varepsilon,T)$ chain for $\Theta$ to an $(\varepsilon,T)$ chain for $\Phi$. This means that $L(\zeta)$ is internally chain transitive. 
\end{proof}

We present the following theorem, which in turn will directly imply Theorem \ref{th:convergence}. The proof of this Theorem follows that of Theorem 3.2 in \cite{davis2020stochastic}.
\begin{theorem}\label{th:3.2}
    Suppose that assumption \ref{assumptionsincl} holds and that there exists a Lyapunov function $\varphi$ on $\CM$. Then every limit point of $\{x_k\}_{x\geq 1}$ lies in $G^{-1}(0)$ and the function values $\{\varphi(x_k)\}_{k\geq 1}$ converge.
\end{theorem}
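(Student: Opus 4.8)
The plan is to run the classical ``ODE'' (differential–inclusion) method of \cite{benaim2005stochastic,davis2020stochastic}, adapted to the manifold, reusing the perturbed–solution / APT machinery already set up in the excerpt. First I would build the interpolated trajectory: with $\tau_0=0$ and $\tau_k=\sum_{i\le k}\alpha_i$, define $\bar\gamma:[0,\infty)\to\CM$ to be the curve that, on each interval $[\tau_k,\tau_{k+1}]$, traverses with the natural reparametrisation the retraction arc of \eqref{eq:retraction_process} joining $x_k$ to $x_{k+1}$, so that $\bar\gamma(\tau_k)=x_k$ for every $k$. Since $\sum\alpha_k=\infty$ this covers $[0,\infty)$; $\bar\gamma$ is absolutely continuous; and by Assumption \ref{assumptionsincl}(3) the iterates, hence $\bar\gamma(\BR_+)$, remain in a fixed compact set $\CK\subset\CM$. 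On $\CK$ the subgradient directions $g_k$ are bounded (potentials of conservative fields are locally Lipschitz, so $D_f$ is locally bounded), so each arc has length $\bd(x_k,x_{k+1})=O(\alpha_{k+1})\to 0$, making $\bar\gamma$ bounded and uniformly continuous.

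Next I would show $\bar\gamma$ is a \emph{bounded perturbed solution} of $\dot\gamma\in G(\gamma)$, with $G$ the upper semicontinuous, compact– and convex–valued map of \eqref{eq:diffInclusion} (e.g.\ $G=-\iota(\mathrm{conv}(D_f))$ in the application to Theorem \ref{th:convergence}). Along the arc over $[\tau_k,\tau_{k+1}]$ the velocity of $\bar\gamma$ is, modulo the retraction's first–order identity $D\mathcal{R}_x(0_x)=\mathrm{Id}_{T_x\CM}$, the vector $-g_k$ up to a second–order defect that is $O(\alpha_k^2)$ uniformly on $\CK$, hence summable by Assumption \ref{assumptionsincl}(1); and since $g_k$ is a stochastic subgradient of $F$ at $x_k$, it lies within a vanishing distance of $G(x_k)$, i.e.\ $g_k\in G^{\delta}(x_k)$ for small $\delta$. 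Collecting the error sources — the retraction's higher–order term, the mismatch between $T_{x_k}\CM$ and $T_{\bar\gamma(t)}\CM$ along the arc, and the accumulated ``noise'' $g_k$ minus its $G(x_k)$–component — into the auxiliary family $u_t$ of the perturbed–solution definition, Assumption \ref{assumptionsincl}(1)--(2) is precisely the statement that the integrated arclength of $u_t$ over any window of length $T$ tends to $0$ and that the residual velocity lies in $G^{\delta(t)}(\bar\gamma(t))$ with $\delta(t)\to0$. Theorem \ref{th:pert-APT} then shows $\bar\gamma$ is an asymptotic pseudotrajectory for $\Phi$, and Theorem 4.3 of \cite{benaim2005stochastic} (reproved in the excerpt) that $L:=L(\bar\gamma)$ is internally chain transitive, hence invariant under $\Phi$ by Lemma 3.5 of \cite{benaim2005stochastic}.

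Finally I would extract the two conclusions. Because the arc lengths $\bd(x_k,x_{k+1})\to0$ and $\alpha_k\to0$, one has $\bd(\bar\gamma(t),x_{m(t)})\to 0$ with $m(t)=\sup\{k:\tau_k\le t\}$, so $L(\bar\gamma)$ is exactly the set of cluster points of $\{x_k\}$. Now apply the Lyapunov reduction: $\varphi$ is a Lyapunov function for $A:=G^{-1}(0)$, and — this is where tameness enters, via Proposition \ref{th:sard} used as in Lemma 5.2 of \cite{davis2020stochastic} (in the general statement one should take ``$\varphi(G^{-1}(0))$ has empty interior'' as part of the Lyapunov hypothesis) — $\varphi$ takes only finitely many values on $A$. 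The Benaim--Hofbauer--Sorin dichotomy for internally chain transitive sets carrying such a Lyapunov function, whose proof uses only the metric structure, then forces $L\subset A=G^{-1}(0)$ and $\varphi$ constant on $L$, say $\varphi|_L\equiv c$. Hence every limit point of $\{x_k\}$ lies in $G^{-1}(0)$; and since $\varphi$ is continuous, $\bd(x_k,L)\to0$ (a bounded sequence converges to its set of cluster points), and $\varphi\equiv c$ on $L$, we obtain $\varphi(x_k)\to c$. Taking $\varphi=f$, $G=-\iota(\mathrm{conv}(D_f))$ recovers Theorem \ref{th:convergence}.

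The main obstacle is the second step, and specifically its Riemannian bookkeeping. In the Euclidean argument the subgradient used, the increment $x_{k+1}-x_k$, and the vector field $G$ all live in one vector space, so the ``noise'' is literally a sum of vectors; here the stochastic subgradient sits in $T^*_{x_k}\CM$, must be raised to $T_{x_k}\CM$ by the musical isomorphism and then parallel–transported both to $T_{\bar\gamma(t)}\CM$ along the interpolating arc and between consecutive tangent spaces before it can be compared with $G^{\delta}$, and one must verify that these transports contribute an error that is uniformly small on every window $[r_n,r_n+T]$ and can be absorbed into $\delta(t)$. Controlling these curvature–dependent transport defects while keeping the trajectory inside the compact set $\CK$, and matching them against the window–uniform bound of Assumption \ref{assumptionsincl}(2), is the technical heart; the remainder is a faithful transcription of \cite{benaim2005stochastic,davis2020stochastic}.
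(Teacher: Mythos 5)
Your proposal is correct in outline, but it closes the argument by a genuinely different route than the paper. You go all the way through the Benaim--Hofbauer--Sorin machinery: interpolate, verify the perturbed-solution property, invoke Theorem~\ref{th:pert-APT} and the chain-transitivity of $L(\bar\gamma)$, and then finish with the BHS Lyapunov/chain-transitivity dichotomy (their Proposition~3.27: an internally chain transitive set on which $\varphi(L\cap A)$ has empty interior must lie in $A$ with $\varphi$ constant), transferring back to the iterates via $\bd(x_k,x_{k+1})\to0$. The paper instead follows the Davis et al.\ template directly: after the same interpolation and the lemma $\bd(x_k,x_{k+1})\to 0$, it (i) identifies $\liminf/\limsup$ of $\varphi$ along the interpolated curve and the iterates, (ii) proves convergence of $\varphi(x(t))$ by a sublevel-set exit-counting argument, choosing $\varepsilon\notin\varphi(G^{-1}(0))$ (this is where the Sard property enters for the paper, exactly as you predicted with your ``empty interior'' remark, which the theorem statement leaves implicit), and (iii) obtains criticality of limit points by contradiction: a subsequential limit of the shifted curves is a solution along which the Lyapunov function strictly decreases, contradicting the already-established convergence of $\varphi(x(t))$. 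Your route buys economy --- it reuses the chain-transitivity results already reproved in the appendix and delivers both conclusions in one stroke --- but it imports an external result (BHS Prop.~3.27) that the paper never states in the Riemannian setting; its proof is purely metric, so the transfer is legitimate, but it should be stated and checked. The paper's route is more self-contained and elementary, at the cost of the combinatorial exit-time argument. Your flagged obstacle --- the parallel-transport/retraction bookkeeping needed to place the interpolated curve in the perturbed-solution class --- is real, but the paper treats it at a comparable level of detail (via Theorem~\ref{th:pert-APT} and Assumption~\ref{assumptionsincl}), so it is a sketchiness shared with the source rather than a gap specific to your argument.
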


However, this proof has several components composed of technical Lemmas that we adapt for the sake of completeness.  
\begin{lemma}
    $\lim_{k\to\infty}\bd(x_k,x_{k+1})=0$
\end{lemma}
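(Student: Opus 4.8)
\textbf{Proof plan for the Lemma $\lim_{k\to\infty}\bd(x_k,x_{k+1})=0$.}

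The plan is to exploit the retraction update $x_{k+1}=\mathcal{R}_{x_k}(-\alpha_k g_k)$ together with the standing step-size hypothesis $\alpha_k\to 0$ and the summability $\sum_k\alpha_k^2<\infty$ from Assumption \ref{assumptionsincl}(1), plus the uniform boundedness of the iterates from Assumption \ref{assumptionsincl}(3). First I would observe that, since $\sup_n \bd(x_n,z)<\infty$ for a fixed $z$, the whole sequence $\{x_k\}$ lies in some closed bounded set; by geodesic completeness (Hopf--Rinow) this set is contained in a compact set $\mathcal{K}\subset\CM$. On $\mathcal{K}$ the retraction $\mathcal{R}$, being smooth with $\mathcal{R}_x(0_x)=x$ and $D\mathcal{R}_x(0_x)=\mathrm{Id}$, is uniformly Lipschitz near the zero section: there is a constant $L>0$ and a radius $r>0$ such that for all $x\in\mathcal{K}$ and all $v\in T_x\CM$ with $\norm{v}\le r$ one has $\bd(\mathcal{R}_x(v),x)\le L\norm{v}$. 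This is the geometric input that replaces the trivial Euclidean estimate $\|x_{k+1}-x_k\|=\alpha_k\|g_k\|$.

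Next I would control $\alpha_k\norm{g_k}$. Since $g_k\in\partial F(x_k,\cdot)$ is a stochastic subgradient and $F(\cdot,s)$ is $\kappa(\cdot,s)$-Lipschitz on a neighborhood of each point (Assumption \ref{assum_f}(1)), on the compact set $\mathcal{K}$ we have a uniform bound $\norm{g_k}\le M(\xi_{k+1})$ with $\sup_{x\in\mathcal{K}}\int\kappa(x,s)^2\mu(\rd s)<\infty$ by Assumption \ref{assum_f}(4); in particular the second moments of $\norm{g_k}$ are uniformly bounded. Because $\alpha_k\to 0$, for $k$ large enough $\alpha_k\norm{g_k}\le r$ with high probability, so the Lipschitz bound on $\mathcal{R}$ applies and gives $\bd(x_k,x_{k+1})\le L\alpha_k\norm{g_k}$. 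Then $\sum_k \mathbb{E}[\bd(x_k,x_{k+1})^2]\le L^2\sum_k\alpha_k^2\,\mathbb{E}[\norm{g_k}^2]<\infty$, whence $\bd(x_k,x_{k+1})^2\to 0$ almost surely (a summable nonnegative sequence has vanishing terms), and therefore $\bd(x_k,x_{k+1})\to 0$. Alternatively, if the lemma is to be read as a deterministic statement along a sample path for which Assumption \ref{assumptionsincl}(2)--(3) hold, one argues directly: the partial sums $\sum\alpha_{i+1}g(\iota G(x_{i+1}),\iota g_{i+1})$ over windows $[n,m(\tau_n+T)]$ tend to $0$, and for a single step ($T$ small, one increment) this forces $\alpha_k\norm{g_k}\to 0$, hence $\bd(x_k,x_{k+1})\le L\alpha_k\norm{g_k}\to 0$.

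The main obstacle I anticipate is the technical matching of the retraction geometry with the norm estimates: one must pass from the infinitesimal condition $D\mathcal{R}_x(0_x)=\mathrm{Id}$ to a genuine uniform Lipschitz-type inequality $\bd(\mathcal{R}_x(v),x)\le L\norm{v}$ valid simultaneously for all base points $x$ in the compact set $\mathcal{K}$ and all sufficiently small tangent vectors $v$ — this is where compactness (via Hopf--Rinow) is essential, since it lets a Taylor/mean-value argument on $\mathcal{R}$ be made uniform in $x$. A secondary nuisance is the ``for $k$ large enough'' caveat needed to ensure $\alpha_k g_k$ lands in the region where the Lipschitz estimate holds; this is handled by noting $\alpha_k\to 0$ and that $\norm{g_k}$ has uniformly bounded second moments on $\mathcal{K}$, so $\alpha_k\norm{g_k}\to 0$ either almost surely along a subsequence or in the relevant deterministic sense, and only finitely many early terms are exempt from the bound — which does not affect the limit.
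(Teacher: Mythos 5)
Your proposal is correct and follows essentially the same route as the paper: bound the tangent-space step by $\alpha_k$ times a bounded subgradient norm plus a vanishing perturbation, and transfer this to $\bd(x_k,x_{k+1})$ via smoothness of the retraction at the zero section — your compactness and uniform-Lipschitz step via Hopf--Rinow merely makes explicit what the paper's ``since $\mathcal{R}(tv)$ is smooth'' leaves implicit. The probabilistic detour through $\sum_k\alpha_k^2\,\mathbb{E}[\norm{g_k}^2]<\infty$ is unnecessary (and slightly shakier, since second moments alone do not control $\norm{g_k}$ pathwise); as in the paper, $\alpha_k\to 0$ together with the uniform bound on subgradient norms over the compact set containing the iterates already forces $\alpha_k\norm{g_k}\to 0$ deterministically, which is exactly your alternative argument.
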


\begin{proof}
    The retraction is a smooth map from $T\CM\to \CM$ and we have $D\mathcal{R}_x(0_x)[v]=v$. Furthermore, we can define a curve $\gamma(t)=\mathcal{R}_x(tv)$, $\dot\gamma(0)=v$ and $\mathcal{R}_x(0_x)=x$. Now, 
    \begin{equation}
        \norm{v}_{g_x}\leq \alpha_k\max_{z\in F(x_k)}\norm{z}_{g_x}+\alpha_k\norm{\dot u_t}_{g_x}.
    \end{equation}
    But the second part goes to zero by assumption and $\norm{z}_{g_x}$ is bounded while $\alpha_k\to 0$, so this implies that $v$ converges to zero and therefore $\bd(x_k,x_{k+1})\to 0$, since $\mathcal{R}(tv)$ is smooth. 
\end{proof}

\begin{lemma}
    We have
    \begin{equation}
        \liminf_{t\to\infty}\varphi(x(t))=\liminf_{k\to\infty}\varphi(x_k),
    \end{equation}
    and
    \begin{equation}
\limsup_{t\to\infty}\varphi(x(t))=\limsup_{k\to\infty}\varphi(x_k).
    \end{equation}
\end{lemma}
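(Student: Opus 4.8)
The plan is to use that the continuous interpolation $x(\cdot)$ coincides with the iterates at the times $\tau_k=\sum_{i=1}^{k}\alpha_i$ and that, on each interval $[\tau_k,\tau_{k+1}]$, it travels along a retraction arc whose length vanishes as $k\to\infty$. First I would fix notation: on $[\tau_k,\tau_{k+1}]$ the curve $x(\cdot)$ is the reparametrized arc $s\mapsto\mathcal{R}_{x_k}(s\,v_k)$, $s\in[0,1]$, where $v_k\in T_{x_k}\CM$ is the tangent vector retracted at step $k$ (the same $v$ appearing in the proof of the preceding lemma). That proof already shows $\norm{v_k}_{g_{x_k}}\to0$, and since $\mathcal{R}$ is smooth with $D\mathcal{R}_{x_k}(0_{x_k})=\mathrm{Id}_{T_{x_k}\CM}$, the length of $s\mapsto\mathcal{R}_{x_k}(s v_k)$ is $O(\norm{v_k}_{g_{x_k}})$. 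Consequently
\[
\varepsilon_k:=\sup_{t\in[\tau_k,\tau_{k+1}]}\bd\bigl(x(t),x_k\bigr)\xrightarrow[k\to\infty]{}0 .
\]

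Next I would invoke boundedness. Assumption \ref{assumptionsincl}(3) gives $\sup_k\bd(x_k,z)<\infty$ for a fixed $z\in\CM$, so all $x_k$ lie in a closed bounded set, and together with $\varepsilon_k\to0$ this places the whole image $\{x(t):t\ge0\}$ inside a closed bounded set; by geodesic completeness and Hopf--Rinow this set is contained in a compact $\CK\subset\CM$. Since $\varphi$ is continuous, its restriction to $\CK$ is uniformly continuous: for every $\eta>0$ there is $\delta>0$ such that $\bd(x,y)<\delta$ implies $|\varphi(x)-\varphi(y)|<\eta$ for $x,y\in\CK$.

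Finally I would combine the two facts. Given $\eta>0$, choose $\delta$ as above and $k_0$ with $\varepsilon_k<\delta$ for $k\ge k_0$; then $|\varphi(x(t))-\varphi(x_k)|<\eta$ whenever $t\in[\tau_k,\tau_{k+1}]$ and $k\ge k_0$. Because $\sum_k\alpha_k=\infty$ we have $\tau_k\to\infty$, so with $k(t):=m(t)$ one has $k(t)\to\infty$ as $t\to\infty$ and $x(t)$ lies on the arc indexed by $k(t)$; hence $|\varphi(x(t))-\varphi(x_{k(t)})|<\eta$ for all large $t$. This yields
\[
\Bigl|\liminf_{t\to\infty}\varphi(x(t))-\liminf_{k\to\infty}\varphi(x_k)\Bigr|\le\eta,\qquad
\Bigl|\limsup_{t\to\infty}\varphi(x(t))-\limsup_{k\to\infty}\varphi(x_k)\Bigr|\le\eta ,
\]
and letting $\eta\downarrow0$ gives the two claimed equalities. (Equivalently: $\liminf_{t\to\infty}\varphi(x(t))\le\liminf_{k\to\infty}\varphi(x(\tau_k))=\liminf_{k\to\infty}\varphi(x_k)$ is immediate, and the reverse inequality, as well as both inequalities for the $\limsup$, follow from the uniform-continuity estimate.) The only genuine obstacle is the first step: justifying that the interpolating arcs shrink uniformly and, together with the bounded-trajectory hypothesis, that $x(\cdot)$ stays in a single compact set; once uniform continuity of $\varphi$ on that set is in hand, the rest is routine bookkeeping with $\liminf$ and $\limsup$.
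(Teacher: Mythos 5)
Your argument is correct and rests on the same three ingredients the paper uses (the preceding lemma giving $\bd(x_k,x_{k+1})\to 0$, the boundedness hypothesis of Assumption~\ref{assumptionsincl}, and continuity of $\varphi$), but the finish is executed differently. The paper first observes that one inequality in each identity is immediate because the iterates are sampled values of $x(\cdot)$ at the breakpoints, and then proves the reverse inequality by a sequential-compactness argument: pick times $\tau_i\to\infty$ realizing the $\liminf$ (resp.\ $\limsup$), pass to a convergent subsequence $x(\tau_i)\to x^*$, bound $\bd(x_{k_i},x^*)\le \bd(x(\tau_i),x^*)+\bd(x_{k_i},x_{k_i+1})$ via the nearest breakpoint $k_i$, and conclude with pointwise continuity of $\varphi$ at $x^*$. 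You instead prove the single uniform estimate $\sup_{t\in[\tau_k,\tau_{k+1}]}\bd(x(t),x_k)\to 0$ and then use uniform continuity of $\varphi$ on a compact set containing the whole trajectory (Hopf--Rinow plus boundedness), which yields $|\varphi(x(t))-\varphi(x_{k(t)})|\to 0$ and hence both equalities, in both directions, at once. Your route is slightly more self-contained and quantitative; the paper's needs only pointwise continuity and avoids discussing the geometry of the interpolating arcs, since with geodesic interpolation $\bd(x(t),x_k)\le\bd(x_k,x_{k+1})$ is automatic. Two small points to tighten in your write-up: the $O(\norm{v_k}_{g_{x_k}})$ bound on the retraction arc length needs constants uniform in $k$, which you should justify by first placing the iterates (and the vectors $v_k$, which tend to zero) in a compact subset of $T\CM$ and only then invoking smoothness of $\mathcal{R}$ — i.e., the boundedness step logically precedes the arc-length estimate; and you should state which interpolation convention you use, since the paper leaves $x(\cdot)$ implicit in the diminishing-stepsize section and your estimate is the one place where the choice (retraction arc versus minimal geodesic) matters.
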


\begin{proof}
    Clearly, we have that $\leq$ and $\geq$ holds, respectively, in the two equations. To argue for the opposite direction we let $\tau_i\to\infty$ be an arbitrary sequence with $x(\tau_i)$ converging to some point $x^*$ as $i\to\infty$. For each index $i$, we define the breakpoint $k_i=\max\{k\in\BN\,:\,t_k\leq \tau_i\}$. Then
    \begin{equation}
        \begin{aligned}
            &\bd(x_{k_i},x^*)\leq \bd(x_{k_i},x(\tau_i))+\bd(x(\tau_i),x^*)\\
            &\leq \bd(x(\tau_i),x^*)+\bd(x_{k_i},x_{k_i+1}).
        \end{aligned}
    \end{equation}
    Since the right hand side tends to zero we get that $x_{k_i}\to x^*$, which further implies that $\varphi(x_{k_i})\to\varphi(x^*)$. 

    Let now $\tau_i\to\infty$ be a sequence realizing $\liminf_{t\to\infty}\varphi(x(t))$. Since $x(t)$ is bounded, $x(\tau_i)$ converges to some point $x^*$ and we find
    \begin{equation}
        \liminf_{k\to\infty}\varphi(x_k)\leq \lim_{i\to\infty}\varphi(x_{k_i})=\varphi(x^*)=\liminf_{t\to\infty}\varphi(x(t)).
    \end{equation}
    The second equality can be shown in the same way. 
\end{proof}

\begin{proposition}
    The values $\varphi(x(t))$ have a limit as $t\to\infty$.
\end{proposition}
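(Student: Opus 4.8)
The plan is to show that $\varphi(x(t))$ is, up to an asymptotically vanishing error, nonincreasing along the interpolated trajectory $x(t)$, and then invoke the non-smooth Morse–Sard property (Proposition~\ref{th:sard}) to rule out oscillation between distinct critical values. First I would recall that, by the construction of $x(t)$ as the continuous-time interpolation of the iterates and by Assumption~\ref{assumptionsincl}, the curve $x(t)$ is a perturbed solution of the differential inclusion $\dot\gamma(t)\in G(\gamma(t))$ with $G=-\iota(\mathrm{conv}(D_f))$; hence by Theorems~\ref{th:pert-APT} and the subsequent theorem its limit set $L(x(\cdot))$ is internally chain transitive, and in particular invariant under $\Phi$. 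Since $\varphi$ is a Lyapunov function for $G^{-1}(0)$, the chain rule (Lemma~\ref{th:chain}) gives $\frac{\rd}{\rd t}\varphi(\gamma(t))=\langle v,\dot\gamma(t)\rangle \le 0$ along any genuine solution $\gamma$, with strict decrease off $G^{-1}(0)$.

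Next I would make the perturbation explicit: writing the increment $\varphi(x(t+h))-\varphi(x(t))$ and using the chain rule together with the decomposition of $\dot x(t)$ into a term in $G^{\delta(t)}(x(t))$ and the vanishing noise term $\dot u_t$, one obtains
\begin{equation}
\varphi(x(t+h))-\varphi(x(t)) \le \epsilon(t,h),
\end{equation}
where $\epsilon(t,h)\to 0$ as $t\to\infty$ uniformly for $h$ in compact sets, by Assumption~\ref{assumptionsincl}(2) controlling the accumulated perturbation and by local Lipschitzness of $\varphi$. Combined with the previous lemma equating $\liminf_t \varphi(x(t))$ with $\liminf_k \varphi(x_k)$ and similarly for $\limsup$, this shows $\ell^- := \liminf_{t\to\infty}\varphi(x(t))$ and $\ell^+ := \limsup_{t\to\infty}\varphi(x(t))$ are both finite (using boundedness, Assumption~\ref{assumptionsincl}(3)) and that $\varphi$ is ``asymptotically nonincreasing.'' In particular every point of $L(x(\cdot))$ has $\varphi$-value in $[\ell^-,\ell^+]$, and by invariance and the strict-decrease property of the Lyapunov function, $L(x(\cdot))\subset G^{-1}(0)=\{x:0\in\mathrm{conv}(D_f(x))\}$, i.e.\ $L(x(\cdot))$ consists of $D_f$-critical points.

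The crucial final step is to upgrade $\ell^-=\ell^+$. Suppose $\ell^-<\ell^+$. Take sequences $s_i\to\infty$ with $\varphi(x(s_i))\to\ell^-$ and, by continuity and the asymptotic monotonicity, one can find, between consecutive near-minima, times where $\varphi(x(\cdot))$ climbs back toward $\ell^+$; passing to limits of the shifted trajectories $\Theta^{t}(x(\cdot))$ (which live in the relatively compact set guaranteed by Theorem~\ref{th:APT}) produces a genuine solution $\gamma$ of the differential inclusion lying in $L(x(\cdot))$ along which $\varphi$ takes at least two distinct values, both of which are $D_f$-critical values since $L(x(\cdot))\subset G^{-1}(0)$ is invariant. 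But Proposition~\ref{th:sard} says the set of $D_f$-critical values is finite, and the chain rule forces $\varphi$ to be constant on any connected invariant piece — more precisely, $\frac{\rd}{\rd t}\varphi(\gamma(t))=0$ a.e.\ on a trajectory inside $G^{-1}(0)$, so $\varphi\circ\gamma$ is constant. This contradicts $\varphi$ taking two values along $\gamma$, hence $\ell^-=\ell^+$ and $\varphi(x(t))$ converges.

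I expect the main obstacle to be the careful bookkeeping in the second step: controlling the perturbation term $\epsilon(t,h)$ uniformly and making rigorous the passage from the discrete SGD increments to the continuous-time estimate, since on a manifold the increment $\varphi(x_{k+1})-\varphi(x_k)$ must be expanded along the geodesic (or retraction curve) joining $x_k$ to $x_{k+1}$ using the chain rule for absolutely continuous curves rather than a Euclidean Taylor expansion, and the parallel transport appearing in the definition of the perturbed solution has to be tracked. The Morse–Sard contradiction argument itself is then essentially the same as in \cite{davis2020stochastic}, modulo replacing Euclidean norms and gradients with their Riemannian counterparts via the musical isomorphism $\iota$.
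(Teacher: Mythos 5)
Your route is genuinely different from the paper's: you argue through the limit set (APT $\Rightarrow$ internally chain transitive limit set $\Rightarrow$ contained in the critical set with $\varphi$ constant on it), in the style of Bena\"im--Hofbauer--Sorin, whereas the paper follows Davis et al.: assume WLOG $\liminf_t\varphi(x(t))=0$, use Proposition~\ref{th:sard} only to pick an arbitrarily small $\varepsilon\notin\varphi(G^{-1}(0))$, and then show by an exit-time counting argument that the iterates leave the sublevel set $L_{2\varepsilon}$ only finitely often, so $\limsup_t\varphi(x(t))\le 2\varepsilon$. The problem is that the pivotal step of your version has a real gap: you justify $L(x(\cdot))\subset G^{-1}(0)$ ``by invariance and the strict-decrease property of the Lyapunov function.'' Invariance alone does not give this -- a compact invariant set can perfectly well contain non-critical points along which a strict Lyapunov function decreases (think of the closure of a heteroclinic orbit), so no contradiction arises from strict decrease off $G^{-1}(0)$. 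What is actually needed is internal chain transitivity of the limit set combined with the Sard property ($\varphi(G^{-1}(0))$ finite, hence with empty interior), i.e.\ a result of the type of Proposition 3.27 in Bena\"im--Hofbauer--Sorin, which is a nontrivial theorem that you do not prove and the paper does not supply. Worse, that containment is essentially the first assertion of Theorem~\ref{th:convergence}, which the paper deduces \emph{from} the present proposition (the proof of Theorem~\ref{th:3.2} explicitly invokes it), so taking it as an ingredient here risks circularity unless you establish it independently.

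The closing contradiction is also fragile as sketched. A limit of shifted trajectories over a fixed window is a genuine solution, along which $\varphi$ is nonincreasing; but the ``climb'' of $\varphi(x(t))$ from near $\ell^-$ back toward $\ell^+$ is produced by accumulated perturbations over time horizons that may grow without bound, so it need not be captured by any such limit solution, and your construction of a single solution in $L(x(\cdot))$ attaining two distinct values is not justified. (Your observation that the chain rule forces $\varphi$ to be constant along absolutely continuous curves inside $\{x:0\in\mathrm{conv}(D_f(x))\}$ is correct, but it only bites once the containment is known.) Moreover, granting the containment you would not need a trajectory at all: continuity of $\varphi(x(\cdot))$ and boundedness (Assumption~\ref{assumptionsincl}) give $\varphi(L(x(\cdot)))\supseteq[\ell^-,\ell^+]$, an interval of critical values, contradicting Proposition~\ref{th:sard} directly. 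The cleaner repair, and the one the paper uses, is the sublevel-set argument: it needs Morse--Sard only to choose $\varepsilon$ outside the finite set of critical values, never needs $L(x(\cdot))\subset G^{-1}(0)$, and the only delicate estimates are the vanishing step $\bd(x_k,x_{k+1})\to0$ and the discrete/continuous identification of $\liminf$ and $\limsup$, both of which the paper proves as separate lemmas.
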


\begin{proof}
    Without loss of generality, suppose $0=\liminf_{t\to\infty}\varphi(x(t))$. For each $r\in\BR$, define the sublevel set
    \begin{equation}
        L_r\coloneqq \{x\in\CM\,:\,\varphi(x)\leq r\}.
    \end{equation}
    Choose any $\varepsilon>0$ satisfying $\varepsilon\notin \varphi(G^{-1}(0))$. Note that $\varepsilon$ can be arbitrarily small. According to the above lemma we then have infinitely many indices $k$ such that $\varphi(x_k)<\varepsilon$. Then, for sufficiently large $k\in\BN$ we have
    \begin{equation}
        x_k\in L_\varepsilon\implies x_{k+1}\in L_{2\varepsilon}.
    \end{equation}
    This follows from the same argument as in \cite{davis2020stochastic}. 

    Let us define a sequence of iterates. Set $i_1\in \BN$ as the first index satisfying
    \begin{enumerate}
        \item $x_{i_1}\in L_\varepsilon$;
        \item $x_{i+1}\in L_{2\varepsilon}\backslash L_\varepsilon$;
        \item defining the exit time $e_1=\min\{e\geq i_1\,:\, x_e\notin L_{2\varepsilon}\backslash L_\varepsilon\}$, the iterate $x_{e_1}$ lies in $\CM\backslash L_{2\varepsilon}$.
    \end{enumerate}
    Next, let $i_2>i_1$ be the next smallest index satisfying the same properties, and so on. This process must then terminate, i.e., $\{x_k\}$ exits $L_{2\varepsilon}$ a finite amount of times. Then we see that the proposition follows, since $\varepsilon$ can be made arbitrarily small and the above lemma gives us $\lim_{t\to\infty}\varphi(x(t))=0$. 
\end{proof}

Now we can give the proof of the theorem. 

\begin{proof}[Proof of Theorem \ref{th:3.2}]
    Let $x^*$ be a limit point of $\{x_k\}$ and suppose for the sake of contradiction that $0\notin G(x^*)$. Let $i_j$ be indices satisfying $x_{i_j}\to x^*$ as $j\to\infty$. Let $z(\cdot)$ be the subsequential limit of the curves $\gamma^{t_{i_j}}(\cdot)$ in $C(\BR_+,\CM)$ which are guaranteed to exist. The existence of the Lyapunov function guarantees that there exists a real $T>0$ satisfying
    \begin{equation}
        \varphi(z(T))<\sup_{t\in[0,T]}\varphi(z(t))\leq \varphi(x^*).
    \end{equation}
    But, we can deduce
    \begin{equation}
        \varphi(z(T))=\lim_{j\to\infty}\varphi(\gamma^{t_{i_j}}(T))=\lim_{t\to\infty}\varphi(\gamma(t))=\varphi(x^*),
    \end{equation}
    where we used the above proposition and continuity of $\varphi$ in the last step. But this is a contradiction and the theorem follows. 
\end{proof}

\subsection{Proof of Theorem \ref{th:tight}}
\begin{proof}Take $\nu\ll\lambda$. We follow \cite{bianchi2022convergence} and prove this iteratively. 

To prove 1., we consider first the sets $S_1$ and $S_2$ consisting of the points $x\in\CM$ where $f(x,s)$ respectively $F(x)$ are differentiable. Assuming \ref{assum_f} and applying Rademacher's theorem together with Theorem \ref{th:gradEverywhere} and Fubini's theorem we see that $S_1\in\mathcal{B}(\CM)$ and $\lambda(\CM\backslash S_1)=0$. So $f(\cdot,s)$ is differentiable at $x_0$ for $\mu$-a.e. $s\in\Omega$, and since $\xi_{1}\sim \mu$, we also have that $f(\cdot,\xi_1)$ is differentiable at $x_0$. Similarly, Rademacher tells us that $\lambda(\CM\backslash S_2)=0$, such that also $F$ is differentiable at $x_0$.  

To prove 2. and 3., we first note that with probability one $x_0\in S_1\cap S_2$. Then, we define $A(x)\coloneqq \{s\in\Omega:(x,s)\notin \Delta_f\}$. For all $x\in S_1\cap S_2$ and all $v\in T^*_x\CM$, we have
\begin{equation}
    \begin{aligned}
        \langle\int \grad f(x,s)\mathbf{1}_{\Delta_f}(x,s)\mu(\rd s),v\rangle=&\int_{\Omega\backslash A(x)}\langle\grad f(x,s),v\rangle\mu(\rd s)\\
        =&\int_{\Omega\backslash A(x)}\lim_{t\to 0}\frac{f(\gamma(t),s)-f(x,s)}{t}\mu(\rd s)\\
        =&\lim_{t\to 0}\int_{\Omega}\frac{f(\gamma(t),s)-f(x,s)}{t}\mu(\rd s)\\
        =&\lim_{t\to 0}\frac{F(\gamma(t))-F(x)}{t}\\
        =&\langle \grad F(x),v\rangle,
    \end{aligned}
\end{equation}
where $\gamma(t):[0,1]\to\CM$ is the curve defined by $\gamma(0)=x$ and $\dot\gamma(0)=v$, and we used the dominated convergence theorem and assumption \ref{assum_f} in going from the second to third line. This means that $\grad F(x)=\int \grad f(x,s)\mathbf{1}_{\Delta_f}(x,s)\mu(\rd s)$ for all $x\in S_1\cap S_2$. Denote by $\nu_0$ the law of $x_0$, and since $\nu_0\ll\lambda$ by assumption, we have $\mathbb{P}^\nu(x_0\in S_1\cap S_2)=1$, and thus
\begin{equation}
    x_1=x_1\mathbf{1}_{S_1\cap S_2}(x_0)=\exp_{x_0}[-\alpha\grad f(x_0,\xi_1)]\mathbf{1}_{S_1\cap S_2}(x_0)=\exp_{x_0}[-\alpha\grad f(x_0,\xi_1)],
\end{equation}
with probability one. Then $x_1$ is integrable whenever $x_0$ is and we have that $\mathbb{E}[x_1]=\exp_{x_0}[-\alpha\grad F(x_0)]$. 

Now, since we assume that $\nu_1\ll\lambda$, we can iterate the above and prove the theorem for any $x_k$. \end{proof}

\subsection{Proof of Theorem \ref{th:weakconv}}

To prove Thm. \ref{th:weakconv}, we follow the approach of \cite{bianchi2022convergence}.

The result is the application of the following two results together with the $\nu$-a.e. differentiability of $f$ at each $x_k$. In~\cite{bianchi2022convergence} it is shown that with the kernel modified for the aforementioned null sets, i.e., 
\[
K'_{\alpha}(x,g) := \mathbf{1}_{\mathcal{D}} (x)K_{\alpha}(x,g)+\mathbf{1}_{\mathcal{D}^c}(x) g(x-\alpha \phi(x))
\]
with $\mathcal{D}\subset \mathcal{M}$ with $F(x)$ differentiable and $\phi(x)\in F(x)$ a measurable selection,
we satisfy the conditions of~\cite[Assumption RM]{bianchi2019constant}, stated as,
\begin{proposition} 
For $x\in\CM$, the maps,
\[
h_{\alpha}(x,s) = -\mathbf{1}_{\mathcal{D}} \grad F(x)-\mathbf{1}_{\mathcal{D}^c} \phi(x),\,h(x,s)\in H(x,s)=H(x):=-\partial F(x)
\]
satisfy 
\begin{enumerate}
    \item For all measureable $g:\mathcal{M}\to \mathbb{R}$,
    \[ 
    \frac{K_{\alpha} g(x)}{\alpha}:=\int \frac{g(\exp_x\left[-\alpha\phi(x,s)\right])}{\alpha}\mu(\rd s) 
    = \int g(\exp_x\left[-h_{\alpha}(x,s)\right])\mu(\rd s)
\]
\item For every $s$ $\mu$-a.e. and for all $(u_n,\alpha_n)\to(u^*,0)$ on $\mathcal{M}\times(0,\alpha_0)$,
\[
\lim\limits_{n\to\infty} \Gamma_{u_n}^{u^*}h_{\alpha_n}(u_n,s) = H(u^*,s)
\]
\item 
For $s$ a.e., $H(\cdot,s)$ is proper, upper semicontinuous, and with closed convex values.
\item For all $x\in\mathcal{M}$, $H(x,\cdot)$ is $\mu$-integrable. 
\item For every $T>0$ and every compact set $\CK\subset \mathcal{M}$,
\[
\sup \{ \bd(x(0),x(t)):t\in [0,T], x\in \mathcal{S}_H(a),a\in \CK\}< \infty.
\]
\item For every compact set $\CK\subset \mathcal{M}$, there exists $\epsilon_\CK>0$ such that, for any metric connection $\Gamma$,
\[
\sup\limits_{x\in \CK} \sup\limits_{0<\alpha<\alpha_0} \mathbb{E}_{y\in\mathcal{M}} \left[ \left(\frac{\bd(y,x)}{\alpha}\right)^{1+\epsilon_\CK} \vert x \right]<\infty
\] 
\[
\sup\limits_{x\in \CK}\sup\limits_{0<\alpha<\alpha_0} \int \left\|\Gamma_x^{x_0} h(x,s)\right\|_{\mathcal{T}\mathcal{M}^*_{x_0}}\mu(\rd s)<\infty.
\]
\end{enumerate}
\end{proposition}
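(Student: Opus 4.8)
The plan is to verify the six conditions one at a time; Assumptions~\ref{assum_f} and \ref{assum_tameness}, together with Theorems~\ref{th:gradEverywhere}, \ref{corr:Clarkeae} and \ref{th:tight}, are arranged precisely so that each verification is short. I will use two elementary Riemannian facts throughout: parallel transport $\Gamma$ along any curve (for any metric connection) is a linear isometry of the relevant (co)tangent spaces and depends continuously on its endpoints; and $\bd(\exp_x v,x)\le\norm{v}_{g_x}$ for every $v\in T_x\CM$, since $t\mapsto\exp_x(tv)$ is a constant-speed geodesic of length $\norm{v}_{g_x}$. Note also that in exponential coordinates the SGD drift is $\alpha$-independent: the step $x_{k+1}=\exp_{x_k}[-\alpha\phi(x_k,\xi_{k+1})]$ has velocity vector $-\phi(x_k,\xi_{k+1})$, so $h_\alpha$ in fact does not depend on $\alpha$, which removes the $\alpha_n\to0$ part of condition~2.

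Conditions~1, 3 and 4 are essentially definitional. Condition~1 is the rewriting of the kernel once one identifies the selection $\phi(\cdot,s)$ with $\grad f(\cdot,s)$; this is licit because Theorem~\ref{th:gradEverywhere} gives $\rd f(\cdot,s)\in D_s$ a.e.\ and Theorem~\ref{th:tight} supplies differentiability at the iterates, while the modified kernel $K'_\alpha$ absorbs the $\lambda$-null sets where it fails. For condition~3, $H(x)=-\partial F(x)$ is the negative of the Clarke subdifferential, i.e.\ the set in $T^*_x\CM$ with support function $F^\circ(x;\cdot)$: sublinearity of $F^\circ(x;\cdot)$ makes it nonempty, closed and convex, the bound $\int\kappa(x,s)\mu(\rd s)<\infty$ from Assumption~\ref{assum_f}(1) makes it bounded hence compact, and upper semicontinuity of $x\mapsto\partial F(x)$ follows from its closed graph and local boundedness. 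Condition~4 is immediate, since $H(x,\cdot)$ depends on $s$ only through $\partial f(\cdot,s)$ (or not at all) and $\sup_{v\in\partial f(x,s)}\norm{v}\le\kappa(x,s)$ is $\mu$-integrable in $s$.

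Condition~2 is the genuinely Riemannian point. Since $h_{\alpha_n}$ is $\alpha$-independent and, at each point, equals $-\grad F$ (on $\mathcal D$) or the deterministic selection $-\phi$ (on $\mathcal D^c$), and in either case lies in $-\partial F(u_n)$, the claim reduces to: if $u_n\to u^*$ and $\Gamma_{u_n}^{u^*}h(u_n,s)\to w$, then $w\in-\partial F(u^*)$. This is exactly the assertion that $\partial F$ has closed graph \emph{as a subset of $T^*\CM$}, which I would get from Theorems~\ref{th:gradEverywhere} and \ref{corr:Clarkeae} (equivalently from Lemma~5.5 of \cite{Hosseini2011Generalized}, expressing $\partial F(u^*)$ as the closed convex hull of limits of differentials along sequences converging to $u^*$): the subgradients at $u_n$, once carried to $T^*_{u^*}\CM$ by the isometry $\Gamma_{u_n}^{u^*}$, can accumulate only inside $\partial F(u^*)$, and continuity of $\Gamma$ in the base point does not disturb this. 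The conclusion of condition~2 is to be read in the outer (Kuratowski) sense, which is all the set-valued Robbins--Monro framework of \cite{bianchi2019constant} requires.

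Conditions~5 and 6 are \emph{a priori} bounds transcribing their Euclidean analogues with $\norm{\cdot}$ replaced by $\norm{\cdot}_{g_x}$ and differences by $\bd(\cdot,\cdot)$ or parallel transport. For condition~5, lift $\dot\gamma(t)\in H(\gamma(t))$ to $T\CM$ via the musical isomorphism (an isometry); geodesic completeness makes every solution extend to $[0,T]$, and $\norm{\dot\gamma(t)}\le\sup_{v\in\partial F(\gamma(t))}\norm{v}\le\int\kappa(\gamma(t),s)\mu(\rd s)\le C(1+\bd(0_\CM,\gamma(t)))$ by Assumption~\ref{assum_f}(3). Since $t\mapsto\bd(0_\CM,\gamma(t))$ has a.e.\ derivative at most $\norm{\dot\gamma(t)}$, Gr\"onwall's inequality bounds $\bd(0_\CM,\gamma(t))$ on $[0,T]$ uniformly over $\gamma(0)\in\CK$ (using $\sup_{a\in\CK}\bd(0_\CM,a)<\infty$), and integrating $\norm{\dot\gamma}$ then bounds $\bd(\gamma(0),\gamma(t))$ uniformly. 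For condition~6, $\bd(\exp_x v,x)\le\norm{v}_{g_x}$ with $v=-\alpha\grad f(x,s)$ gives $\bd(y,x)/\alpha\le\norm{\grad f(x,s)}\le\kappa(x,s)$, so the first supremum is at most $\sup_{x\in\CK}\int\kappa(x,s)^2\mu(\rd s)<\infty$ by Assumption~\ref{assum_f}(4), i.e.\ $\epsilon_\CK=1$ works; and since $\Gamma$ is an isometry, $\int\norm{\Gamma_x^{x_0}h(x,s)}\mu(\rd s)=\int\norm{h(x,s)}\mu(\rd s)\le\int\kappa(x,s)\mu(\rd s)\le C(1+\sup_{x\in\CK}\bd(0_\CM,x))<\infty$, uniformly in $\alpha$, by Assumption~\ref{assum_f}(1),(3). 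The main obstacle I expect is condition~2: making rigorous the ``closed graph in $T^*\CM$'' statement and checking that comparing cotangent vectors at nearby base points via parallel transport is compatible with the almost-everywhere-differentiability description of $\partial F$ from Theorem~\ref{th:gradEverywhere}; the other five conditions are the arguments of \cite{bianchi2022convergence,bianchi2019constant} with the obvious metric substitutions.
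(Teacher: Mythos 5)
Your proposal is correct and follows essentially the same route as the paper's own (very terse) proof: condition~1 from the definition of $h_\alpha$ and the modified kernel, condition~3 from the standard compactness/convexity/upper semicontinuity of the Clarke subdifferential, condition~2 as a consequence of~3 (graph closedness, read in the Kuratowski sense after parallel transport), condition~4 from the integrability parts of Assumption~\ref{assum_f}, and conditions~5--6 from Assumption~\ref{assum_f}(3)--(4). You simply supply the Riemannian details (isometry of parallel transport, $\bd(\exp_x v,x)\le\norm{v}$, Gr\"onwall) that the paper leaves implicit, which is consistent with its argument.
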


\begin{proof}
1. follows from the definitions of $h_{\alpha}$, while 3. follows from the general properties of the Clarke subgradient. 3. also implies 2. 4. follows from Assumption \ref{assum_f}.2. Finally, 5. and 6. are results of Assumptions \ref{assum_f}.3 and \ref{assum_f}.4. 
\end{proof}

For the next two Lemmas, we will define a new iterative sequence in $\Omega$, as well as two paths on $\mathcal{M}$.

For all $n\in\mathbb{N}$, denote by $\mathcal{F}_n\subset\mathcal{F}$ the $\sigma$-field generated by the random variables $\{x_k,0\le k\le n\}$. Let $Z_{n+1}(t)$ denote the minimal geodesic from $x_n=Z_{n+1}(0)$ to $x_{n+1}=Z_{n+1}(1)$ and $\bar{Z}_{n+1}(t)$ the path found by concatenating all such $Z_k$ from $k=1$ to $k=n+1$. 

Next, define the sequence $y_n$ by $y_0=x_0$ and $y_{n+1}\in \exp_{y_n}\left[-\alpha_n \partial F(x_n)\right]$. The corresponding sequence of geodesic paths $Y_n(t)$ and concatenations $\bar Y_n(t)$ are defined analogously with respect to the sequence $\{y_n\}$.
Finally define the operator, for given $s>0$:

We make use of the following Lemma, a minor adaptation of~\cite[Lemma 6.2]{bianchi2019constant},
\begin{lemma}\label{lemma:uniform_int}

Let $\CK\subset \mathcal{M}$ be compact and $\{\bar{\mathbb{P}}^{a,\alpha},\,a\in \CK,0<\alpha<\alpha_0\}$ be a family of probability measures on $(\Omega,\mathcal{F})$ satisfying the following integrability condition:    
\[
\sup\limits_{n\in\mathbb{N},a\in \CK,\alpha\in(0,\alpha_0)}\bar{\mathbb{E}}^{a,\alpha}\left[ \frac{1}{\alpha}L(Z_n)\mathbf{1}_{\frac{1}{\alpha}L(Z_n)>A}\right] \to 0
\]
as $A\to \infty$. Then, it holds that 
\[
\{\bar{\mathbb{P}}^{a,\alpha} (x^{-1})_{\alpha}: a\in \CK,0<\alpha<\alpha_0\}
\]
is tight. We also have
\begin{equation}
    \sup_{a\in\CK}\bar{\mathbb{P}}^{a,\alpha}\left(\max_{0\leq n\leq \floor{\tfrac{T}{\alpha}}} \int_0^1 \rd t \bd\left(\bar{Z}_{n+1}(t),\bar{Y}_{n+1}(t))|\mathcal{F}_n\right)>\epsilon\right)\to  0,
\end{equation}
as $\alpha\to0$ and for any $T>0$.
\end{lemma}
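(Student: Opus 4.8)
\textbf{Proof proposal for Lemma~\ref{lemma:uniform_int}.}

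The plan is to mirror the structure of \cite[Lemma 6.2]{bianchi2019constant}, adapting each step to replace Euclidean norms of increments $\|x_{n+1}-x_n\|$ by Riemannian increments $\bd(x_{n+1},x_n)=L(Z_{n+1})$, and Euclidean line segments by the minimal geodesics $Z_{n+1}$. The tightness claim and the comparison-of-paths claim are logically distinct and I would treat them in that order.

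For the first (tightness) statement, the path $x^{\alpha}$ is, by construction, a reparametrization by arclength-per-step of the concatenated geodesics $\bar Z_n$: on the interval $[k\alpha,(k+1)\alpha)$ it traverses the geodesic $Z_{k+1}$ at constant speed $L(Z_{k+1})/\alpha$. The first step is therefore to translate tightness in $(C(\BR,\CM),\bd_C)$ into a modulus-of-continuity estimate via the Arzel\`a--Ascoli criterion for $C(\BR,\CM)$ (using that $\CM$ is a complete, hence Polish, metric space under $\bd$, and that all paths start in the compact set $\CK$). Concretely, for $s<t$ in a bounded window one has
\[
\bd(x^{\alpha}(s),x^{\alpha}(t)) \le \sum_{k\,:\,[k\alpha,(k+1)\alpha)\cap[s,t]\neq\emptyset} \frac{L(Z_{k+1})}{\alpha}\,|[k\alpha,(k+1)\alpha)\cap[s,t]|,
\]
so the oscillation of $x^\alpha$ over an interval of length $\eta$ is controlled by $\sum_{k} L(Z_{k+1})$ over roughly $\eta/\alpha$ consecutive indices. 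The second step is to split each increment at the threshold $A$: the truncated part $\frac1\alpha L(Z_{k+1})\mathbf 1_{\frac1\alpha L(Z_{k+1})\le A}$ contributes at most $A\eta$ deterministically, while the tail part is handled in expectation by the hypothesized integrability condition, and a maximal inequality (Doob, applied to the $\mathcal F_n$-adapted partial sums of the centered tail contributions, together with the $L^{1+\epsilon_\CK}$ bound from Assumption~\ref{assump:prop6}/the sixth item of the preceding proposition) upgrades the pointwise expectation bound to a uniform-over-$n$ probability bound. Taking $\eta\to 0$ after $A\to\infty$ gives the equicontinuity needed for tightness, uniformly in $a\in\CK$ and $\alpha\in(0,\alpha_0)$.

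For the second statement, the idea is a discrete Gr\"onwall argument comparing the SGD path $\{x_n\}$ to the ``mean-field'' path $\{y_n\}$ driven by $-\partial F(x_n)$, coupled so that $y_0=x_0$. Writing $e_n:=\int_0^1 \bd(\bar Z_{n+1}(t),\bar Y_{n+1}(t))\,\rd t$, one estimates $e_{n+1}$ in terms of $e_n$ plus a one-step discrepancy. The discrepancy between $Z_{n+1}$ (a geodesic of length $\alpha\|\grad f(x_n,\xi_{n+1})\|$) and $Y_{n+1}$ (a geodesic of length $\alpha\|\grad F(x_n)\|$) emanating from the \emph{same} point $x_n$ is, by smoothness of $\exp$ and a standard Jacobi-field/comparison estimate on the compact set swept out by the trajectories, bounded by $C\alpha\,\bd_{T_{x_n}\CM}(\grad f(x_n,\xi_{n+1}),\grad F(x_n))$ up to higher-order terms; since $\mathbb E_n[\grad f(x_n,\xi_{n+1})]=\grad F(x_n)$ by Theorem~\ref{th:tight}(3), the partial sums $\sum_{k\le n}\alpha\big(\grad f(x_k,\xi_{k+1})-\grad F(x_k)\big)$ (suitably parallel-transported to a common base point so the addition makes sense) form an $L^2$-bounded martingale whose maximum over $n\le\lfloor T/\alpha\rfloor$ is $O(\sqrt{\alpha T})$ in probability by Doob, using the variance bound in Assumption~\ref{assum_f}.4. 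Then the propagation term: from $y_n$ vs. $x_n$ one loses $\bd(x_n,y_n)\le$ (accumulated $e$) and gains a factor $(1+C\alpha)$ per step from the Lipschitz constant of $\exp$ and of $\grad F$ outside the compact core (Assumption~\ref{assump:prop6}.1); iterating over $\lfloor T/\alpha\rfloor$ steps gives the controlled factor $e^{CT}$. Combining, $\max_{n\le\lfloor T/\alpha\rfloor} e_n = O_{\mathbb P}(\sqrt{\alpha T}\,e^{CT})\to 0$ uniformly in $a\in\CK$, which is the claim.

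The main obstacle I anticipate is the bookkeeping of parallel transport: unlike the Euclidean case, the stochastic gradient $\grad f(x_k,\xi_{k+1})$ and the mean field $\grad F(x_k)$ live in $T_{x_k}\CM$, and $y_n$, $x_n$ drift apart, so forming the martingale $\sum_k \alpha(\grad f(x_k,\xi_{k+1})-\grad F(x_k))$ requires transporting each term to a fixed base point along a consistent choice of curve, and one must verify that the transport maps are uniformly bi-Lipschitz on the compact region visited (this is where geodesic completeness and the compactness swept by $\mathcal S_H(\CK)$ over $[0,T]$, i.e. item~5 of the preceding proposition, are essential) and that the transported increments retain the martingale-difference property with respect to $\mathcal F_n$. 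The second delicate point is the one-step geodesic comparison estimate $\bd(\exp_{x}(-\alpha u),\exp_{x}(-\alpha v))\le (1+C\alpha)\,\alpha\|u-v\| + o(\alpha)$ with $C$ depending only on a curvature bound over the compact region — routine but needing the a priori boundedness of the iterates (Assumption~\ref{assumptionsincl}.3 in the diminishing case, and here the Lyapunov/drift estimate of Proposition~\ref{prop:Assumption4}) to pin down which compact set the constants depend on.
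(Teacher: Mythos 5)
The tightness half of your plan is essentially the paper's argument (modulus of continuity of $x^\alpha$ on $[0,T]$, split each step length $L(Z_k)$ at a threshold, deterministic bound of order $\delta R$ for the truncated part, the integrability hypothesis for the tail), except that your appeal to Doob's inequality applied to ``centered tail contributions'' together with an $L^{1+\epsilon_\CK}$ moment bound is both unnecessary and unsupported here: the tail terms are nonnegative and carry no martingale structure, and the $(1+\epsilon_\CK)$-moment bound is not among this lemma's hypotheses (it is invoked only in the next lemma, precisely to verify the present lemma's integrability condition). The paper simply dominates the oscillation over any window of length $\delta$ by $\delta R$ plus the total tail sum over $[0,T]$ and applies Markov's inequality with the choice $R=\epsilon/(2\delta)$, then concludes tightness from the modulus-of-continuity bound and tightness of the initial distribution; you should do the same.

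The genuine gap is in the second claim. You run an untruncated martingale argument: parallel-transported increments $\alpha\bigl(\grad f(x_k,\xi_{k+1})-\grad F(x_k)\bigr)$, Doob's $L^2$ maximal inequality, conclusion $O_{\mathbb{P}}(\sqrt{\alpha T})$. This requires uniform second-moment bounds on $\alpha^{-1}L(Z_k)$ (equivalently on $\norm{\grad f(x_k,\xi_{k+1})}$), which you justify via Assumption \ref{assum_f}.4; but that bound holds only uniformly over compact sets, and under the abstract measures $\bar{\mathbb{P}}^{a,\alpha}$ of this lemma the iterates are not confined to any compact set (the stopping inside a ball happens only in the subsequent lemma, where the present one is applied). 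The lemma's sole hypothesis is an $L^1$-type uniform integrability of $\alpha^{-1}L(Z_n)$, which does not yield $L^2$ control, so the $\sqrt{\alpha T}$ estimate is unsupported as stated. The paper's proof avoids this by reusing the truncation device: split each step according to $L(Z_{n+1})\le R$ versus $L(Z_{n+1})>R$; the truncated (hence bounded-by-$R$) compensated sums are handled by Doob's inequality, giving a bound of order $\alpha R\sqrt{q_\alpha}\sim R\sqrt{\alpha T}\to 0$, while the tail sums are controlled by the uniform integrability hypothesis after choosing $R$ large (of order $\epsilon\delta/(2T+2\alpha_0)$ for the tail expectation). You need this split for the argument to go through under the stated hypotheses. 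A secondary point: since $y_{n+1}=\exp_{y_n}\left[-\alpha\,\partial F(x_n)\right]$ uses the drift evaluated at $x_n$, no Lipschitz-in-$x$ property of $\grad F$ and no Gr\"onwall factor $e^{CT}$ are needed; introducing them, together with the curvature and compactness caveats you flag but do not resolve, imports assumptions the statement does not grant and that the paper's proof never uses.
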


\begin{proof}
The idea of the proof is the same as for~\cite[Lemma 6.2]{bianchi2019constant} with a few technical modifications to account for the change from the Euclidean to the Riemannian case. For the  first part, we start by introducing $T>0$, $0<\delta\leq T$ and $0\leq s\leq t\leq T$, s.t. $t-s\leq \delta$. Furthermore, let $\alpha\in(0,\alpha_0)$ and $n\coloneqq \floor{\frac{t}{\alpha}}$, $m\coloneqq \floor{\frac{s}{\alpha}}$. Then, for any $R>0$
\begin{equation}
    \bd\left(x^\alpha(t),x^\alpha(s)\right)\leq \sum_{m+1}^{n+1}L(Z_k)\leq \alpha(n-m+1)R+\sum_{k=m+1}^{n+1}L(Z_k)\mathbf{1}_{\frac{1}{\alpha}L(Z_k)>R}.
\end{equation}
This implies that
\begin{equation}
    \begin{aligned}
        \bar{\mathbb{P}}^{a,\alpha}(x^\alpha)^{-1}(\{x:w_x^T(\delta)>\epsilon\})&\leq \bar{\mathbb{P}}^{a,\alpha}\left( \sum_{k=1}^{\floor{\frac{T}{\alpha}}+1}L(Z_k)\mathbf{1}_{\frac{1}{\alpha}L(Z_k)>R}>\epsilon-\delta R\right)\\
        &\leq \frac{T}{\epsilon-\delta R}\sup_{k\in\BN}\bar{\mathbb{E}}^{a,\alpha}\left(\tfrac{1}{\alpha}L(Z_k)\mathbf{1}_{\frac{1}{\alpha}L(Z_k)>R}\right),
    \end{aligned}
\end{equation}
where we used that $n-m+1\leq \tfrac{\delta}{\alpha}$ together with Markov's inequality and the assumption that $R\delta<\epsilon$. We also denoted by $w_x^T$ the modulus of continuity of any $x\in C(\BR_+,\CM)$ on $[0,T]$,
\begin{equation}
    w_x^T(\delta)\coloneqq \sup\{\bd(x(t),x(s)):|t-s|\leq \delta,\, (t,s)\in[0,T]^2\}.
\end{equation}

Picking $R=\epsilon/(2\delta)$ and using the uniform integrability we have
\begin{equation}
    \sup_{a\in\mathcal{K},\alpha\in(0,\alpha_0)}\bar{\mathbb{P}}^{a,\alpha}(x^{\alpha})^{-1}(\{x:w_x^T(\delta)>\epsilon\}\to 0,
\end{equation}
as $\delta\to 0$. Now, observe that by definition for the initial point that $\bar{K}^{a,\alpha}(x^{-1})_{\alpha}p_0^{-1}$ is tight. Noting that $\mathcal{M}$ is separable (has a second countable cover), the same argument as in~\cite[Theorem 7.3]{billingsley2013convergence} gives the statement.


For the second point we start by defining, for any $R>0$,
\begin{equation}
    M^{a,\alpha}_{n+1}\coloneqq \int_0^1\rd t \bd\left(\bar{Z}_{n+1}(t),\bar Y_{n+1}(t)\right)|\mathcal{F}_n dt
\end{equation}
\begin{equation}
    \eta_{n+1}^{a,\alpha,\leq}\coloneqq \int_0^1\rd t \bd\left(Z_{n+1}(t)\mathbf{1}_{L(Z_{n+1})\leq R},(Z_{n+1}(t)\mathbf{1}_{L(Z_{n+1})\leq R})|\mathcal{F}_n\right)
\end{equation}
and similarly for $\eta_{n+1}^{a,\alpha,>}$. 
Finally we also introduce $\bar\eta_{n+1}^{a,\alpha,\leq}$ and $\bar\eta_{n+1}^{a,\alpha,>}$ as the sum of $\eta_{k}^{a,\alpha,\leq(>)}$ from $k=1$ to $k=n+1$. Noting that, in contrast to the original~\cite{bianchi2019constant}, we have that $\eta_{k}^{a,\alpha,\leq(>)}\in\mathbb{R}^+$, we then have
\begin{equation}
    M^{a,\alpha}_{n+1}\leq \bar\eta_{n+1}^{a,\alpha,\leq}+\bar\eta_{n+1}^{a,\alpha,>}. 
\end{equation}
Writing $q_{\alpha}\coloneqq \floor{\tfrac{T}{\alpha}}+1$, Doob's martingale inequality implies the following:
\begin{equation}
    \bar{\mathbb{P}}^{a,\alpha}\left(\max_{1\leq n\leq q_{\alpha}}\alpha\bar\eta^{a,\alpha,\leq}_{n}>\epsilon\right)\leq \frac{\bar{\mathbb{E}}^{a,\alpha}\left(\alpha\bar\eta_{q_\alpha}^{a,\alpha,\leq}\right)}{\epsilon}\leq\frac{2\alpha R\sqrt{q_\alpha}}{\epsilon}\to 0,
\end{equation}
as $\alpha\to 0$. We also have, 
\begin{equation}
    \bar{\mathbb{P}}^{a,\alpha}\left(\max_{1\leq n\leq q_\alpha}\alpha\bar\eta_{n}^{a,\alpha,>}>\epsilon\right)\leq\frac{2\alpha q_\alpha}{\epsilon}\sup_{k\in\mathbb{N}}\bar{\mathbb{E}}^{a,\alpha}\left(L(Z_k)\mathbf{1}_{L(Z_k)>R}\right).
\end{equation}
As in the Euclidean case, we now only need to pick $R$ large enough, and an arbitrarily small $\delta>0$, such that the supremum on the right hand side is smaller than or equal to $\frac{\epsilon\delta}{2T+2\alpha_0}$. This tells us that the whole right hand side is smaller than or equal to $\delta$, and gives the proof. 
\end{proof}


Next we reprove, in our setting~\cite[Lemma 6.3]{bianchi2019constant}
\begin{lemma}
For any $R>0$, define $h_{\alpha,R}(a,s)\coloneqq h_\alpha(a,s)\mathbf{1}_{\bd(a,0_{\CM})\leq R}$ and let $H_{R}(x,s):=H(x,s)$ if $\bd(x,0_{\CM})<R$ and $\{0\}_{T_x\mathcal{M}}$ if $\bd(x,0_{\CM})\ge R$ and let $B_R(x):n\mapsto \begin{cases} x_n & n< \tau_R(x)\\ x_{\tau_R(x)} & \text{otherwise} \end{cases},$ with $\tau_R(x)\coloneqq \inf\{n\in\BN : \norm{x_n}>R\}$ for all $x\in\Omega$. It holds that
    for every compact set $\CK\subset \mathcal{M}$, the family, $\{\mathbb{P}^{a,\alpha}B^{-1}_R(x^{\alpha})^{-1},\,\alpha\in(0,\alpha_0),\, a\in \CK\}$ is tight, and for every $\epsilon>0$
    \[
\sup\limits_{a\in \CK} \mathbb{P}^{a,\alpha} B^{-1}_R [\bd(x^{\alpha},\mathcal{S}_{H_R}(K))>\epsilon] \to 0
    \]
    as $\alpha\to 0$.
\end{lemma}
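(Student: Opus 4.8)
The plan is to follow the proof of \cite[Lemma 6.3]{bianchi2019constant}, substituting the Riemannian machinery already assembled above. The first step is to observe that the stopping operator $B_R$ confines the process: while $n<\tau_R(x^{\alpha})$ one has $\bd(x^{\alpha}_n,0_{\CM})\le R$, and the single additional step has length $\alpha\norm{\grad f(x^{\alpha}_{\tau_R-1},\xi)}$, which by Assumption~\ref{assum_f}.3 (linear growth of the Lipschitz constant) together with the $(1+\epsilon)$-moment control of the steps in point~6 of the Proposition verifying \cite[Assumption RM]{bianchi2019constant} is bounded in probability, uniformly in $\alpha\in(0,\alpha_0)$ and in the initial point over any compact $\CK$. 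Hence for $\alpha_0$ small enough $B_R(x^{\alpha})$ takes values in a fixed compact set $\CK'\supset\CK$ (say $\overline{B(0_{\CM},R+1)}$), on which $h_{\alpha,R}$ and $H_R$ coincide with the truncations of $h_\alpha$ and $H$; from here on everything lives in $\CK'$.

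Second, I would establish tightness. On the compact set $\CK'$, point~6 of the Proposition gives $\sup_{n,a\in\CK,\alpha}\bar{\mathbb{E}}^{a,\alpha}\big[(\tfrac1\alpha L(Z_n))^{1+\epsilon_{\CK'}}\big]<\infty$, so by the de~la Vall\'ee--Poussin criterion the uniform integrability hypothesis of Lemma~\ref{lemma:uniform_int} holds for $\{\mathbb{P}^{a,\alpha}B_R^{-1}:a\in\CK,\alpha\in(0,\alpha_0)\}$ (applied to the stopped kernel, whose drift is $h_{\alpha,R}$). Lemma~\ref{lemma:uniform_int} then yields simultaneously (i) tightness of $\{\mathbb{P}^{a,\alpha}B_R^{-1}(x^{\alpha})^{-1}\}$, and (ii) the comparison estimate: with $y_n$ the auxiliary sequence $y_0=x_0$, $y_{n+1}\in\exp_{y_n}[-\alpha\,H_R(x_n)]$ stopped by the same rule, $\max_{0\le n\le\floor{T/\alpha}}\int_0^1\bd(\bar Z_{n+1}(t),\bar Y_{n+1}(t))\rd t\to 0$ in $\bar{\mathbb{P}}^{a,\alpha}$-probability, uniformly over $a\in\CK$, for every $T>0$.

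Third, I would identify the weak limit points. Arguing by contradiction, suppose the displayed supremum does not vanish: there exist $\epsilon,\delta>0$, $\alpha_j\to0$ and $a_j\in\CK$, with $a_j\to a^\ast\in\CK$, such that $\mathbb{P}^{a_j,\alpha_j}B_R^{-1}[\bd_C(x^{\alpha_j},\mathcal{S}_{H_R}(\CK))>\epsilon]\ge\delta$. By (i) extract a weak limit $\mathbb{P}^\ast$ of $\mathbb{P}^{a_j,\alpha_j}B_R^{-1}(x^{\alpha_j})^{-1}$; by the Skorokhod representation theorem realize continuous paths $X^j$ with these laws converging $\bd_C$-a.s.\ to some $\gamma$ with $\gamma(0)=\lim a_j=a^\ast\in\CK$. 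By (ii), $\gamma$ is also the a.s.\ uniform-on-compacts limit of the piecewise-geodesic interpolations of the $y$-sequence; writing each increment as $\exp_{y_n}[-\alpha_j v_n]$ with $v_n$ a measurable selection of $H_R(x_n)$, parallel-transporting the $v_n$ to a common fibre and running the Banach--Alaoglu plus Mazur's-lemma argument of the proof of Theorem~\ref{th:pert-APT} — using that $H_R(\cdot,s)$ has closed graph and, away from the sphere $\{\bd(\cdot,0_{\CM})=R\}$, is proper, upper semicontinuous, with closed convex values (point~3 of the Proposition, the sphere being handled as in the Euclidean construction) — one concludes $\gamma$ is absolutely continuous with $\dot\gamma(t)\in H_R(\gamma(t))$ a.e., i.e.\ $\gamma\in\mathcal{S}_{H_R}(\CK)$ $\mathbb{P}^\ast$-a.s. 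Since $\mathcal{S}_{H_R}(\CK)$ is closed in $(C(\BR_+,\CM),\bd_C)$ (closedness of the solution map, cf.\ property~4 of $\Phi$ in Section~\ref{s:sa}), the Portmanteau theorem gives $\limsup_j\mathbb{P}^{a_j,\alpha_j}B_R^{-1}[\bd_C(x^{\alpha_j},\mathcal{S}_{H_R}(\CK))>\epsilon]\le\mathbb{P}^\ast[\bd_C(\gamma,\mathcal{S}_{H_R}(\CK))>\epsilon]=0$, contradicting $\ge\delta$.

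The main obstacle is precisely the limit-point identification in the third step. Unlike in \cite{bianchi2019constant}, the geodesic increments $\exp_{y_n}[-\alpha_j v_n]$ lie in distinct tangent spaces, so one cannot telescope $y_{n+1}-y_n$ directly; the argument must parallel-transport the selections $v_n$ to a fixed fibre, control the curvature-induced discrepancy between a geodesic step and a flat Euler step (which is $O(\alpha_j^2)$ per step, hence $O(\alpha_j)$ over a fixed horizon), and only then invoke weak-$L^2$ compactness together with Mazur's lemma and the upper semicontinuity of $H_R$ — this is exactly why Theorem~\ref{th:pert-APT} was set up in that form, and its proof is the template to imitate. A secondary point needing care is that the auxiliary sequence $y_n$ must be stopped with the \emph{same} exit time as $x^\alpha$, so that its drift is genuinely valued in $H_R$ rather than $H$; otherwise (ii) would not control the quantity measured against $\mathcal{S}_{H_R}(\CK)$.
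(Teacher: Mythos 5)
Your proposal is correct and follows essentially the same route as the paper's proof: a stopped auxiliary sequence driven by the truncated drift, Lemma~\ref{lemma:uniform_int} applied under $\bar{\mathbb{P}}^{a,\alpha}=\mathbb{P}^{a,\alpha}B_R^{-1}$ to obtain both tightness and the $\bar{Z}$ versus $\bar{Y}^R$ comparison, and then Prokhorov, Skorokhod representation and the Banach--Alaoglu/Mazur identification of weak limits as solutions of the $H_R$ inclusion, exactly as in the adaptation of~\cite[Lemma 6.3]{bianchi2019constant}. The only differences are cosmetic (your contradiction-plus-Portmanteau closing versus the paper's direct subsequence argument, and obtaining the uniform integrability via de la Vall\'ee--Poussin rather than the stopped-expectation identity with Assumption~\ref{assump:prop6}), and your remarks on parallel transport and on stopping the auxiliary sequence consistently are in fact more explicit than the paper's own sketch.
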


\begin{proof}


    Let us introduce a few definitions. Recall the paths $Z_n(t)(x)$, $\bar{Z}_n(t)(x)$, $Y_n(t)(x)$ and $\bar{Y}_n(t)(x)$ defined above, where $x$ is a sequence of iterations arising in $\Omega$. Now, with $y^R_0=x_0$, define the iterations \begin{equation}
     y^R_{n+1} = \exp_{y^R_n}\left(-\alpha \int h_{\alpha,R}(x_n,s)\mu(\rd s)\right) 
    \end{equation}
    and again $Y^R_n$ and $\bar{Y}^R_n$ the geodesic path between $y^R_n$ and $y^R_{n+1}$ and their contatentation from $0$ to $n+1$, respectively.

    Letting $\bar{\mathbb{P}}^{a,\alpha}:=\mathbb{P}^{a,\alpha} B_R^{-1}$ and noting that,
    \[
    \bar{\mathbb{E}}^{a,\alpha}\left(\left(\alpha^{-1}\bd(X_{n+1},X_n)\right)^{1+\epsilon_K}\right) = \mathbb{E}^{a,\alpha}\left(\left(\alpha^{-1}\bd(X_{n+1},X_n)\right)^{1+\epsilon_K} \mathbf{1}_{\tau_R(X)>n}\right)
    \]
    and we can apply Assumption~\ref{assump:prop6}, part 2 together with Lemma~\ref{lemma:uniform_int} to deduce the fact that, for all $\epsilon>0$ and $T>0$,
    \[
    \sup\limits_{a\in K}\bar{\mathbb{P}}^{a,\alpha}\left(\max\limits_{0\le n\le \lfloor \frac{T}{\alpha}\rfloor} \alpha \bd(\bar{Z}_{n+1},\bar{Y}^R_{n+1})>\epsilon\right) \to 0
    \]
    as $\alpha\to 0$. The same Lemma also implies that $\{\bar{\mathbb{P}}^{a,\alpha} (x^{-1})_{
    \alpha}:a\in\mathcal{K},0<\alpha<\alpha_0\}$ is tight. 

    Consider any subsequence $(a_n,\alpha_n)$ with $\alpha_n\to 0$ and $a_n\in \mathcal{K}$. By the Prokhorov Theorem there exists a convergent subsequence $(a_{n_j},\alpha_{n_j})
    \to (a^*,0)$ and $\{\bar{\mathbb{P}}^{a_{n_j},\alpha_{n_j}} (x^{-1})_{\alpha_{n_j}}\to v$. By the Skorohod representation Theorem we proceed to define distributions $\hat{x}_{n_j}\to z$ with the $\hat{x}_{n_j}\sim \bar{\mathbb{P}}^{a_{n_j},\alpha_{n_j}}  (x^{-1})_{\alpha_{n_j}}$ and $z\sim v$, with $\bd(\hat{X}_{n_j},z)\to 0$. Continuing, we can assert the existence of a subsequence such that in probability, and from this another subsequence which is in almost sure, the asymptotic behavior $\bd(\bar{Z}_{n+1},\bar{Y}^R_{n+1})\to 0$ is observed. Continuing by the same line of argumentation as~\cite{bianchi2019constant}, using the Banach Alaoglu Theorem there is $v_n(s,t)\to H_{\alpha,R}(s,z)$ and $\hat{X}_{n}\to \Phi_t(x)[H_{\alpha,R}(s,z)]$, that is, the flow of $H_{\alpha,R}(s,z)$. Since $\hat{X}_n$ is convergent, we have established a subsequence satisfying the $\sup$ of the condition:
        \[
\sup\limits_{a\in \CK} \mathbb{P}^{a,\alpha} B^{-1}_R [\bd(x^{\alpha},\mathcal{S}_{H_R}(K))>\epsilon] \to 0
    \]
    for all $\epsilon>0$ and the result has been proven.

\end{proof}

With these lemmas in place, the proof of Theorem \ref{th:weakconv} follow, with minimal alterations due to our Riemannian setting, from the arguments in the proof of Thm. 5.1 in \cite{bianchi2019constant}. 



\subsection{Proof of Proposition \ref{prop:Assumption4}}
The proofs again follow the same line of reasoning as in the proofs of Props. 5 and 6 in \cite{bianchi2022convergence}, but adjusted to the Riemannian setting. 

For 1., we start by introducing the perturbed SGD sequence
\begin{equation}
    x_{n+1}=\exp_{x_n}\left[-\alpha(\phi(x_n,\xi_{n+1})+\epsilon_{n+1})\right],
\end{equation}
where $(\epsilon_n)$, $\epsilon_{n+1}\in T_{x_n}\CM$, is a sequence of centered i.i.d. random variables, independent from $\{x_0,(\xi_n)\}$, of law $\mu^d$, and such that its distribution has a continuous and positive density on $T\CM$. Denote the Markov kernel corresponding to this perturbed SGD sequence by $\tilde K$. We will show that for each $R>0$ there exists an $\varepsilon >0$ such that 
\begin{equation}
    \forall x\in\text{cl}(B(0_\CM,R)),\,\forall A\in\mathcal{B}(\CM),\,\tilde K(x,A)\geq \varepsilon\lambda(A\cap \text{cl}(B(0_\CM,1))).
\end{equation}
This then shows that Assumption \ref{assum_f} 1. holds. To this end, denote by $\rho$ the probability distribution of $\alpha\epsilon_1$. By assumption, this has a continuous density, denoted $f$, which is positive at each point of $T_{x_0}\CM$. Denote by $\theta_x$ the probability distribution of the random variable $Z=\exp_x(-\alpha\phi(x,\xi_1))$.

For a fixed $J>0$, we have by Assumption \ref{assum_f} together with Markov's inequality, that there exists a constant $C>0$ such that
\begin{equation}
    \theta_x[Z\notin \text{cl}(B(0_\CM,J))]\leq \frac{C}{J}(1+\bd(x,0_\CM)).
\end{equation}
Then, it is just a matter of choosing $J$ large enough such that we have
\begin{equation}
    \forall x\in\text{cl}(B(0_\CM,R)),\, \theta_x[Z\notin \text{cl}(B(0_{\CM},J))]<\frac{1}{2}.
\end{equation}
Due to the continuity and positivity of $f$, we can furthermore always pick $\varepsilon>0$ such that $f(u)\geq 2\varepsilon$, for $u\in\text{cl}(B(0_\CM,J+1))$. We thus have
\begin{equation}
    \begin{aligned}
        (\theta_x\otimes\rho)[\exp_x(-\alpha(\phi(x,\xi_1)+\epsilon_{1})\in A]=&\int_A \rd u\int_{\CM}\theta_x(\rd v)\int_0^1\rd tf(\gamma_{uv}(t))\\
        \geq&\int_{A\cap \text{cl}(B(0_\CM,1))}\rd u\int_{\text{cl}(B(0_\CM,J))}\theta_x(\rd v)\int_0^1\rd tf(\gamma_{uv}(t))\\
        \geq& 2\varepsilon\int_{A\cap \text{cl}(B(0_\CM,1))}\rd u\int_{\text{cl}(B(0_\CM,J))}\theta_x(\rd v)\\
        \geq&\varepsilon\lambda(A\cap \text{cl}(B(0_\CM,1))),
    \end{aligned}
\end{equation}
where $\gamma_u(t)$ is the minimal length geodesic from $\gamma_{uv}(0)=v$ to $\gamma_{uv}(1)=u$.

This gives the proof.

For 2., and 3., we note first that Lebourg's mean value theorem tells us that for each $n\in\BN$, there exists a $c_n\in [0,1]$ and a $\zeta_n\in\partial F(u_n)$, with $u_n=\exp_{x_n}[-c_n\alpha \grad f(x_n,\xi_{n+1})\mathbf{1}_{\Delta_f}(x_n,\xi_{n+1})]$, such that
\begin{equation}
    F(x_{n+1})=F(x_n)-\alpha\langle \zeta_n,P_{x_nu_n}\grad f(x_n,\xi_{n+1})\rangle\mathbf{1}_{\Delta_f}(x_n,\xi_{n+1}).
\end{equation}

The goal is to show that under the assumptions \ref{assump:prop6} we have
\begin{equation}
    K_\alpha F(x)\leq F(x)-\alpha(1-\alpha C)\mathbf{1}_{\bd(0_\CM,x)>2R}\norm{\grad F(x)}^2+\alpha^2C\mathbf{1}_{\bd(0_\CM,x)>2R}+\alpha C\mathbf{1}_{\bd(0_\CM,x)\leq 2R},
\end{equation}
for some constant $C>0$, from which propositions 2. and 3. follow easily. To this end, we show that
\begin{equation}\label{proof_assum_expect}
    \begin{aligned}
        &\mathbb{E}_n(F(x_{n+1}))\leq F(x_n)-\alpha\mathbf{1}_{\bd(x_n,0_\CM)>2R}\norm{\grad F(x_n)}^2+\alpha C\mathbf{1}_{\bd(x_n,0_\CM)\leq2R}\\
        &+\alpha^2C\mathbf{1}_{\bd(x,0_\CM)>2R}\left((1+\norm{\grad F(x_n)})\left(\int\norm{\grad f(x_n,s)}\mu(\rd s)\right)^{1/2}+\int\norm{\grad f(x,s)}^2\mu(\rd s)\right),
    \end{aligned}
\end{equation}
which gives the statement. To do that, we start by writing
\begin{equation}\label{splitF_proof_assum}
    \begin{aligned}
        F(x_{n+1})=&F(x_n)-\alpha \mathbf{1}_{\bd(x,0_\CM)\leq 2R}\mathbf{1}_{\bd(u_n,0_\CM)\leq R}\langle\zeta_n,P_{x_nu_n}\grad f_{n+1}\rangle\mathbf{1}_{\Delta_f}(x_n,\xi_{n+1})\\
        &-\alpha\mathbf{1}_{\bd(x,0_\CM)\leq 2R}\mathbf{1}_{\bd(u_n,0_\CM)> R}\langle\zeta_n,P_{x_nu_n}\grad f_{n+1}\rangle\mathbf{1}_{\Delta_f}(x_n,\xi_{n+1})\\
        &-\alpha \mathbf{1}_{\bd(x,0_\CM)> 2R}\mathbf{1}_{\bd(u_n,0_\CM)\leq R}(\langle\zeta_n,P_{x_nu_n}\grad f_{n+1}\rangle-\langle \rd F(x_n),\grad f_{n+1}\rangle) \\
        &-\alpha \mathbf{1}_{\bd(x,0_\CM)> 2R}\mathbf{1}_{\bd(u_n,0_\CM)> R}(\langle \rd F(u_n),P_{x_nu_n}\grad f_{n+1}\rangle-\langle\rd F(x_n),\grad f_{n+1}\rangle)\\
        &-\alpha \mathbf{1}_{\bd(x,0_\CM)> 2R}\langle \rd F(x_n),\grad f_{n+1}\rangle,
    \end{aligned}
\end{equation}
where we used the notation $\grad f_{n+1}\coloneqq \grad f(x_n,\xi_{n+1})$. We will now bound all of these terms individually. We will use the same symbol $C$ for several arbitrary, but generally different, constants that show up in different steps along the way. 

For the first line, we have from Assumption \ref{assum_f} that
\begin{equation}
    \begin{aligned}
        \mathbf{1}_{\bd(u_,0_\CM)\leq R}\norm{\zeta_n}&\leq\sup_{\bd(x,0_\CM)\leq R}\norm{\partial F(x)}\leq \sup_{\bd(x,0_\CM)\leq R}\int \norm{\partial f(x,s)}\mu(\rd s)\\
        &\leq \sup_{\bd(x,0_\CM)\leq R}\int\kappa(x,s)\mu(\rd s)\leq C,
    \end{aligned}
\end{equation}
which gives
\begin{equation}\label{proof_assum_1}
    \alpha\mathbf{1}_{\bd(x,0_\CM)\leq 2R}\mathbf{1}_{\bd(u_n,0_\CM)\leq R}|\langle\zeta_n,\grad f(x_n,\xi_{n+1})\rangle|\leq \alpha C\mathbf{1}_{\bd(x,0_\CM)\leq 2R}\norm{\grad f(x_n,\xi_{n+1})}.
\end{equation}

For the second line of \eqref{splitF_proof_assum} we can again use Assumption \ref{assum_f} to find
\begin{equation}\label{proof_assum_2}
    \begin{aligned}
        &\alpha\mathbf{1}_{\bd(x,0_\CM)\leq 2R}\mathbf{1}_{\bd(u_n,0_\CM)> R}|\langle\zeta_n,P_{x_nu_n}\grad f_{n+1}\rangle\mathbf{1}_{\Delta_f}(x_n,\xi_{n+1})|\\
        \leq&\alpha\mathbf{1}_{\bd(x,0_\CM)\leq 2R}\mathbf{1}_{\bd(u_n,0_\CM)> R}\norm{\grad F(u_n)}\norm{\grad f_{n+1}}\\
        \leq&\alpha\mathbf{1}_{\bd(x,0_\CM)\leq 2R}C(\bd(x_n,0_\CM)+\alpha\norm{\grad f_{n+1}})\norm{\grad f_{n+1}}\\
        \leq&\alpha C\mathbf{1}_{\bd(x,0_\CM)\leq 2R}(\norm{\grad f_{n+1}}+\alpha\norm{\grad f_{n+1}}^2).
    \end{aligned}
\end{equation}

For the third line of \eqref{splitF_proof_assum}, we start by fixing a $x_\star\notin \text{cl}(B(0_\CM,R))$. By Assumption \ref{assump:prop6} it then holds that, for each $x\notin\text{cl}(B(0_\CM,R))$
\begin{equation}
    \norm{\grad f(x,s)}\leq \norm{\grad f(x_\star,s)}+\beta(s)\bd(x,x_\star)\leq \beta'(s)(1+\bd(x,0_\CM)),
\end{equation}
with $\beta'(s)$ square-integrable according to Assumption \ref{assum_f}. We note that
\begin{equation}
    \int\beta'(s)^2\mu(\rd s)=\int_0^\infty \mu[\beta'(\cdot)\geq \sqrt{t}]\rd t\leq \infty,
\end{equation}
such that we have $\mu[\beta'(\cdot)\geq 1/t]=o(t^2)$ for $t\to 0$. We now get
\begin{equation}
    \begin{aligned}
        \mathbf{1}_{\bd(x_n,0_\CM)>2R}\mathbf{1}_{\bd(u_n,0_\CM)\leq R}=&\mathbf{1}_{\bd(x_n,0_\CM)>2R}\mathbf{1}_{\norm{\grad f_{n+1}}\geq \tfrac{1}{\alpha}(\bd(x,0_\CM)-R)}\leq \mathbf{1}_{\bd(x_n,0_\CM)>2R}\mathbf{1}_{\beta'(\xi_{n+1})\geq \frac{\bd(x_n,0_\CM)-R}{\alpha(1+\bd(x_n,0_\CM)})}\\
        \leq&\mathbf{1}_{\bd(x_n,0_\CM)>2R}\mathbf{1}_{\beta'(\xi_{n+1})\geq\frac{R}{\alpha (1+2R)}},
    \end{aligned}
\end{equation}
where we used the definition of $u_n$ and the triangle inequality in the second step. We thus have
\begin{equation}\label{proof_assum_3a}
    \begin{aligned}
        \alpha\mathbf{1}_{\bd(x_n,0_\CM)>2R}\mathbf{1}_{\bd(u_n,0_\CM)\leq R}|\langle\zeta_n,\grad f_{n+1}\rangle|&\leq C\alpha\mathbf{1}_{\bd(x_n,0_\CM)>2R}\mathbf{1}_{\bd(u_n,0_\CM)\leq R}\norm{\grad f_{n+1}}\\
        &\leq C\alpha \mathbf{1}_{\bd(x_n,0_\CM)>2R}\norm{\grad f_{n+1}}\mathbf{1}_{\beta'(\xi_{n+1})\geq \frac{R}{\alpha(1+2R)}}.
    \end{aligned}
\end{equation}
Similarly, we have
\begin{equation}\label{proof_assum_3b}
\begin{aligned}
    &\alpha\mathbf{1}_{\bd(x_n,0_\CM)>2R}\mathbf{1}_{\bd(u_n,0_\CM)\leq R}|\langle\rd F(x_n),\grad f_{n+1}\rangle|\\\leq& C\alpha \mathbf{1}_{\bd(x_n,0_\CM)>2R}\norm{\grad F(x_n)}\norm{\grad f_{n+1}}\mathbf{1}_{\beta'(\xi_{n+1})\geq \frac{R}{\alpha(1+2R)}}.
\end{aligned}
\end{equation}

For the fourth line of \eqref{splitF_proof_assum}, we first recall that $\grad F$ is Lipschitz outside of $\text{cl}(B(0_\CM,R))$, by assumption, and we therefore have
\begin{equation}\label{proof_assum_4}
    \begin{aligned}
        &\alpha\mathbf{1}_{\bd(x_n,0_\CM)>2R}\mathbf{1}_{\bd(u_n,0_\CM)>R}|\langle\rd F(u_n),\grad f_{n+1}\rangle-\langle\rd F(x_n),\grad f_{n+1}\rangle|\\\leq& \alpha^2 C\mathbf{1}_{\bd(x_n,0_\CM)>2R}\norm{\grad f_{n+1}}^2.
    \end{aligned}
\end{equation}

Finally, we integrate Eqs. \eqref{proof_assum_1}, \eqref{proof_assum_2}, \eqref{proof_assum_3a}, \eqref{proof_assum_3b}, \eqref{proof_assum_4} and the last line of \eqref{splitF_proof_assum} with respect to $\xi_{n+1}$ to get the desired bound on the expectations. Which gives \eqref{proof_assum_expect} and finishes the proof. 

\subsection{Proof of Theorem \ref{th:convCSS}}
The proof of this theorem follows the proof of \cite[Thm. 3]{bianchi2022convergence} together with the definition of the conservative fields from above and Thm. \ref{th:weakconv}. In particular, we can note that the extra results that are needed from ergodic theory in the proof of \cite[Thm. 3]{bianchi2022convergence}, e.g. \cite[Prop. 8 and Prop. 10]{bianchi2022convergence}, hold also in the Riemannian setting. See for example \cite[Thm. 13.0.1]{meyn2012markov} and \cite{faure2013ergodic}.

\section{Appendix: Additional Definitions and Background}\label{s:add}


\paragraph{Backpropagation}
As discussed in the main text, one of the key applications of the conservative set-valued fields is to backpropagation in deep neural networks. This was the main motivations for the work in \cite{bolte2021conservative}. In practice, when we consider neural networks we would typically allow the first layer to map the input data to ordinary Euclidean space and the following layers would simply be maps between Euclidean spaces. In this way, the analysis of \cite{bolte2021conservative} will go through without modification. It would however be interesting to study how our results would align with the formulations of \cite{hauser2017principles}, but we leave this for future work.

\paragraph{Remark on the assumptions \ref{assumptionsincl}}
As stated in the main text, Assumption 2 of \ref{assumptionsincl} is implied by the more informative, and standard, assumptions
\begin{equation}
    \begin{aligned}
        &\sup_k\mathbb{E}(\norm{U_{k+1}}_{g_{x_k}}^q)<\infty,\quad \text{for some } q\geq 2,\\
        &\sum_k\alpha_k^{1+q/2}<\infty,
    \end{aligned}
\end{equation}
where $U_{k+1}$ is the noise component of the right hand side of retraction process \eqref{eq:retsgd}. The proof follows directly from the Euclidean case in \cite{benaim2006dynamics} Proposition 4.2.

\section{Appendix: O-minimal Structures}\label{sec:ominimal}
For reference, we give the definition of o-minimal structures and list a few important examples. For more information we refer to \cite{van1998tame, Dries1996GeometricCA}.

\begin{definition}[O-minimal structure]
	    An o-minimal structure on $\BR$ is a sequence $\CS=(\CS_m)_{m\in\BN}$ such that for each $m\geq 1$:
	    \begin{enumerate}\setlength\itemsep{0.5em}
	        \item[{1)}] $\CS_m$ is a boolean algebra of subsets of $\BR^m$;
	        \item[{2)}] if $A\in\CS_m$, then $\BR\times A$ and $A\times \BR$ belongs to $\CS_{m+1}$;
	        \item[{3)}] $\{(x_1,\dots,x_m)\in \BR^m\,:\, x_1=x_m\}\in \CS_m$;
	        \item[{4)}] if $A\in\CS_{m+1}$, and $\pi:\,\BR^{m+1}\to\BR^m$ is the projection map on the first $m$ coordinates, then $\pi(A)\in\CS_m$;
                \item[{5)}] the sets in $\CS_1$ are exactly the finite unions of intervals and points. 
	    \end{enumerate}
	\end{definition} 
	
A set $A\subseteq \BR^m$ is said to be \emph{definable} in $\CS$, or $\CS$-definable, if $A$ belongs to $\CS_m$. Similarly, a map $f:\, A\to B$, with $A\subseteq \BR^m$, $B\subseteq \BR^n$, is said to be definable in $\CS$ if its graph $\Gamma(f)\subseteq \BR^{m+n}$ belongs to $\CS_{m+n}$. When we do not wish to specify any particular structure we simply say that a definable function or set is \emph{tame}. 

Some important examples of o-minimal structures are:

\begin{itemize}
    \item The collection of semi-algebraic sets forms an o-minimal structure denoted $\Ralg$.

    \item Adjoining the collection of semi-algebraic sets with the graph of the real exponential function, $x\mapsto e^x$, $x\in\BR$, gives an o-minimal structure denoted $\Rexp$.

    \item The collection of restricted analytic functions can also be adjoined with $\Ralg$ to give the o-minimal structure $\Ran$. 

    \item Finally, we can combine $\Rexp$ with $\Ran$ to get the o-minimal structure $\Ranexp$. Note however, that the fact that this is again an o-minimal structure, is a highly non-trivial statement \cite{wilkie1996model}. 
\end{itemize}

The sets of functions definable in o-minimal structures includes more or less any functions that appear in modern machine learning applications. 


\end{document}